\title{CSER: Communication-efficient SGD with \\Error Reset}
\author{%
  Cong Xie \thanks{The work was done when Cong Xie was a (part-time) intern in Amazon Web Services.} $^{\hspace{0.05in}1,2}$ \\
  \texttt{cx2@illinois.edu} \\
  \And
  Shuai Zheng $^2$ \\
  \texttt{shzheng@amazon.com} \\
  \And
  Oluwasanmi Koyejo $^1$ \\
  \texttt{sanmi@illinois.edu} \\
  \And
  Indranil Gupta $^1$ \\
  \texttt{indy@illinois.edu} \\
  \And
  Mu Li $^2$ \\
  \texttt{mli@amazon.com} \\
  \And
  Haibin Lin $^2$ \\
  \texttt{haibin.lin.aws@gmail.com} \\
  \AND \\
  $^1$ Department of Computer Science\\
  University of Illinois Urbana-Champaign\\
  \And \\
  $^2$ Amazon Web Services\\
}
\begin{document}

\allowdisplaybreaks

\maketitle

\def\Blue{\color{blue}}
\def\Purple{\color{purple}}

\def\A{{\bf A}}
\def\a{{\bf a}}
\def\B{{\bf B}}
\def\C{{\bf C}}
\def\c{{\bf c}}
\def\D{{\bf D}}
\def\d{{\bf d}}
\def\E{{\mathbb{E}}}
\def\F{{\bf F}}
\def\e{{\bf e}}
\def\f{{\bf f}}
\def\G{{\bf G}}
\def\H{{\bf H}}
\def\I{{\bf I}}
\def\K{{\bf K}}
\def\L{{\bf L}}
\def\M{{\bf M}}
\def\m{{\bf m}}
\def\N{{\bf N}}
\def\n{{\bf n}}
\def\Q{{\bf Q}}
\def\q{{\bf q}}
\def\R{{\mathbb{R}}}
\def\S{{\bf S}}
\def\s{{\bf s}}
\def\T{{\bf T}}
\def\U{{\bf U}}
\def\u{{\bf u}}
\def\V{{\bf V}}
\def\tv{{\tilde v}}
\def\v{{\bf v}}
\def\W{{\bf W}}
\def\w{{\bf w}}
\def\X{{\bf X}}
\def\x{{\bf x}}
\def\bx{{\bar{x}}}
\def\Y{{\bf Y}}
\def\y{{\bf y}}
\def\Z{{\bf Z}}
\def\tdr{{\tilde r}}
\def\z{{\bf z}}
\def\0{{\bf 0}}
\def\1{{\bf 1}}

\def\AM{{\mathcal A}}
\def\CM{{\mathcal C}}
\def\DM{{\mathcal D}}
\def\GM{{\mathcal G}}
\def\FM{{\mathcal F}}
\def\IM{{\mathcal I}}
\def\NM{{\mathcal N}}
\def\OM{{\mathcal O}}
\def\SM{{\mathcal S}}
\def\TM{{\mathcal T}}
\def\UM{{\mathcal U}}
\def\XM{{\mathcal X}}
\def\YM{{\mathcal Y}}
\def\RB{{\mathbb R}}

\def\TX{\tilde{\bf X}}
\def\tx{\tilde{\bf x}}
\def\ty{\tilde{\bf y}}
\def\TZ{\tilde{\bf Z}}
\def\tz{\tilde{\bf z}}
\def\hd{\hat{d}}
\def\HD{\hat{\bf D}}
\def\hx{\hat{\bf x}}
\def\TD{\tilde{\Delta}}
\def\tg{\tilde{g}}
\def\tmu{\tilde{\mu}}

\def\alp{\mbox{\boldmath$\alpha$\unboldmath}}
\def\bet{\mbox{\boldmath$\beta$\unboldmath}}
\def\epsi{\mbox{\boldmath$\epsilon$\unboldmath}}
\def\etab{\mbox{\boldmath$\eta$\unboldmath}}
\def\ph{\mbox{\boldmath$\phi$\unboldmath}}
\def\pii{\mbox{\boldmath$\pi$\unboldmath}}
\def\Ph{\mbox{\boldmath$\Phi$\unboldmath}}
\def\Ps{\mbox{\boldmath$\Psi$\unboldmath}}
\def\tha{\mbox{\boldmath$\theta$\unboldmath}}
\def\Tha{\mbox{\boldmath$\Theta$\unboldmath}}
\def\muu{\mbox{\boldmath$\mu$\unboldmath}}
\def\Si{\mbox{\boldmath$\Sigma$\unboldmath}}
\def\si{\mbox{\boldmath$\sigma$\unboldmath}}
\def\Gam{\mbox{\boldmath$\Gamma$\unboldmath}}
\def\Lam{\mbox{\boldmath$\Lambda$\unboldmath}}
\def\De{\mbox{\boldmath$\Delta$\unboldmath}}
\def\Ome{\mbox{\boldmath$\Omega$\unboldmath}}
\def\TOme{\mbox{\boldmath$\hat{\Omega}$\unboldmath}}
\def\vps{\mbox{\boldmath$\varepsilon$\unboldmath}}
\newcommand{\ti}[1]{\tilde{#1}}
\def\Ncal{\mathcal{N}}
\def\argmax{\mathop{\rm argmax}}
\def\argmin{\mathop{\rm argmin}}
\providecommand{\abs}[1]{\lvert#1\rvert}
\providecommand{\norm}[2]{\lVert#1\rVert_{#2}}

\def\Zs{{\Z_{\mathrm{S}}}}
\def\Zl{{\Z_{\mathrm{L}}}}
\def\Yr{{\Y_{\mathrm{R}}}}
\def\Yg{{\Y_{\mathrm{G}}}}
\def\Yb{{\Y_{\mathrm{B}}}}
\def\Ar{{\A_{\mathrm{R}}}}
\def\Ag{{\A_{\mathrm{G}}}}
\def\Ab{{\A_{\mathrm{B}}}}
\def\As{{\A_{\mathrm{S}}}}
\def\Asr{{\A_{\mathrm{S}_{\mathrm{R}}}}}
\def\Asg{{\A_{\mathrm{S}_{\mathrm{G}}}}}
\def\Asb{{\A_{\mathrm{S}_{\mathrm{B}}}}}
\def\Or{{\Ome_{\mathrm{R}}}}
\def\Og{{\Ome_{\mathrm{G}}}}
\def\Ob{{\Ome_{\mathrm{B}}}}

\def\Expect{\mathbb{E}}

\def\vec{\mathrm{vec}}
\def\fold{\mathrm{fold}}
\def\index{\mathrm{index}}
\def\sgn{\mathrm{sgn}}
\def\tr{\mathrm{tr}}
\def\rk{\mathrm{rank}}
\def\diag{\mathsf{diag}}
\def\const{\mathrm{Const}}
\def\dg{\mathsf{dg}}
\def\st{\mathsf{s.t.}}
\def\vect{\mathsf{vec}}
\def\MCAR{\mathrm{MCAR}}
\def\MSAR{\mathrm{MSAR}}
\def\etal{{\em et al.\/}\,}
\def\prox{\mathrm{prox}^h_\gamma}
\newcommand{\indep}{{\;\bot\!\!\!\!\!\!\bot\;}}

\newcommand{\mtrxt}[1]{{#1}^\top}
\newcommand{\mtrx}[4]{\left[\begin{matrix}#1 & #2 \\ #3 & #4\end{matrix}\right]}
\DeclarePairedDelimiter\vnorm{\lVert}{\rVert}
\DeclarePairedDelimiterX{\innerprod}[2]{\langle}{\rangle}{#1, #2}

\def\Lsize{\hbox{\space \raise-2mm\hbox{$\textstyle \L \atop \scriptstyle {m\times 3n}$} \space}}
\def\Ssize{\hbox{\space \raise-2mm\hbox{$\textstyle \S \atop \scriptstyle {m\times 3n}$} \space}}
\def\Osize{\hbox{\space \raise-2mm\hbox{$\textstyle \Ome \atop \scriptstyle {m\times 3n}$} \space}}
\def\Tsize{\hbox{\space \raise-2mm\hbox{$\textstyle \T \atop \scriptstyle {3n\times n}$} \space}}
\def\Bsize{\hbox{\space \raise-2mm\hbox{$\textstyle \B \atop \scriptstyle {m\times n}$} \space}}

\newcommand{\twopartdef}[4]
{
	\left\{
		\begin{array}{ll}
			#1 & \mbox{if } #2 \\
			#3 & \mbox{if } #4
		\end{array}
	\right.
}

\newcommand{\tabincell}[2]{\begin{tabular}{@{}#1@{}}#2\end{tabular}}

\renewcommand{\algorithmicrequire}{\textbf{Input:}} 
\renewcommand{\algorithmicensure}{\textbf{Output:}} 
\newcommand{\NewProcedure}[1]{\Statex\hspace{-\algorithmicindent} {\large\underline{\textbf{{#1}:}}} \setcounter{ALG@line}{0}}

\DeclarePairedDelimiter\ceil{\lceil}{\rceil}
\DeclarePairedDelimiter\floor{\lfloor}{\rfloor}

\newcommand{\ip}[2]{\left\langle #1, #2 \right \rangle}

\newtheorem{theorem}{Theorem}
\newtheorem{lemma}{Lemma}
\newtheorem{corollary}{Corollary}
\newtheorem{assumption}{Assumption}
\newtheorem{definition}{Definition}
\newtheorem{proposition}{Proposition}
\newtheorem{remark}{Remark}

\newcommand{\comment}[1]{%
 \tag*{$\triangleright$ #1}
}

\newcommand{\mcir}[1]{{\mbox{\large \textcircled{\small #1}}}}

\newcounter{NoTableEntry}
\renewcommand*{\theNoTableEntry}{NTE-\the\value{NoTableEntry}}

\newcommand*{\strike}[2]{%
  \multicolumn{1}{#1}{%
    \stepcounter{NoTableEntry}%
    \vadjust pre{\zsavepos{\theNoTableEntry t}}
    \vadjust{\zsavepos{\theNoTableEntry b}}
    \zsavepos{\theNoTableEntry l}
    \hspace{0pt plus 1filll}%
    #2
    \hspace{0pt plus 1filll}%
    \zsavepos{\theNoTableEntry r}
    \tikz[overlay]{%
      \draw
        let
          \n{llx}={\zposx{\theNoTableEntry l}sp-\zposx{\theNoTableEntry r}sp-\tabcolsep},
          \n{urx}={\tabcolsep},
          \n{lly}={\zposy{\theNoTableEntry b}sp-\zposy{\theNoTableEntry r}sp},
          \n{ury}={\zposy{\theNoTableEntry t}sp-\zposy{\theNoTableEntry r}sp}
        in
        (\n{llx}, \n{lly}) -- (\n{urx}, \n{ury})
      ;
    }%
  }%
}

\newcommand{\ig}[1]
{
    {%
        {\bf \color{cyan} IG comment: #1}
    }{}
}

\newcommand{\igc}[1]
{
    {%
        {\color{magenta} #1}
    }{}
}

\newcommand{\cx}[1]
{
    {%
        {\bf \color{teal} CX: #1}
    }{}
}

\newcommand{\cxc}[1]
{
    {%
        {\color{teal} #1}
    }{}
}

\begin{abstract}
The scalability of Distributed Stochastic Gradient Descent (SGD) is today limited by communication bottlenecks. We propose a novel SGD variant: \underline{C}ommunication-efficient \underline{S}GD with \underline{E}rror \underline{R}eset, or \underline{CSER}. The key idea in CSER is first a new technique called ``error reset'' that adapts arbitrary compressors for SGD, producing bifurcated local models with periodic reset of resulting local residual errors. 
Second we introduce partial synchronization for both the gradients and the models, leveraging advantages from them.
We prove the convergence of CSER for smooth non-convex problems. 
Empirical results show that when combined with highly aggressive compressors, the CSER algorithms accelerate the distributed training by nearly $10\times$ for CIFAR-100, and by $4.5\times$ for ImageNet.


\end{abstract}

\section{Introduction}




In recent years, the sizes of both machine-learning models and  datasets have been increasing rapidly. 
To accelerate the training, it is common to distribute the computation on multiple machines. We focus on 
Stochastic Gradient Descent~(SGD). 
SGD and its variants  are commonly used for training large-scale deep neural networks. 
A common way to distribute SGD is to synchronously compute the gradients at multiple worker nodes, and then aggregate the global average. This is akin to single-threaded SGD with large mini-batch sizes~\cite{Goyal2017AccurateLM,You2017ScalingSB,You2017ImageNetTI,You2019LargeBO}. Increasing the number of workers is attractive because it holds the potential to reduce training time.  
However, more workers also means more communication, and overwhelmed communication links hurt  scalability. 


The state-of-the-art work in communication-efficient SGD is called QSparse-local-SGD~\cite{basu2019qsparse}, which combines two prevailing techniques: message compression and infrequent synchronization. Message compression methods use  compressors such as quantization~\cite{Seide20141bitSG,Strom2015ScalableDD,Wen2017TernGradTG,Alistarh2016QSGDCS,Bernstein2018signSGDCO,Karimireddy2019ErrorFF,Zheng2019CommunicationEfficientDB} and sparsification~\cite{Aji2017SparseCF,Stich2018SparsifiedSW,Jiang2018ALS} to reduce the number of bits in each synchronization round. This necessitates error feedback~(EF-SGD)~\cite{Karimireddy2019ErrorFF,Zheng2019CommunicationEfficientDB} to correct for the residual errors incurred by the compressors, and to guarantee theoretical convergence. On the other hand, infrequent synchronization methods such as local SGD~\cite{Stich2018LocalSC,Lin2018DontUL,Yu2018ParallelRS,Wang2018CooperativeSA,Yu2019OnTL} would decrease the overall number of synchronization rounds.
The former, QSparse-local-SGD, periodically synchronizes the model parameters like local SGD, and compresses the synchronization messages to further reduce the communication overhead. Similar to EF-SGD, it also uses error feedback to correct for the residual errors of compression.

QSparse-local-SGD reduces more bidirectional communication overhead~(in both aggregation and broadcasting)  than its ancestors EF-SGD and local SGD. However, it also inherits  weaknesses from both ancestor algorithms, especially when compression ratios are increased. For instance, our experiments reveal that QSparse-local-SGD  fails to converge at a compression ratio of $256 \times$. 


In this paper, we introduce a new algorithm called  \underline{C}ommunication-efficient \underline{S}GD with \underline{E}rror \underline{R}eset, or CSER. 
The key idea in CSER is a new technique called {\it error reset} that corrects for the local model using the compression errors, and we show this converges better than the error feedback technique used in QSparse-local-SGD.
On top of the error reset, we also introduce partial synchronization, leveraging advantages from both gradient and model synchronizations. These two techniques together allow the proposed method to scale up the compression ratio to as high as $1024 \times$ and significantly outperform the existing approaches.

The main contributions of our paper are as follows:
\setitemize[0]{leftmargin=*}
\begin{itemize} \setlength\itemsep{0.2em}
\item We propose a novel communication-efficient SGD algorithm, called \underline{C}ommunication-efficient \underline{S}GD with \underline{E}rror \underline{R}eset~(CSER) as well as its variant with Nesterov's momentum~\cite{nesterov1983method}. 
CSER includes a new technique that adapts arbitrary compressors for SGD, and achieves better convergence than the baselines when aggressive compressors are used.
\item We add a second compressor to partially synchronize the gradients between the resets of errors on local models. We show that tuning the compression ratios between the gradient synchronization and model synchronization improves the convergence.
\item We show empirically that with appropriate compression ratios, CSER accelerates   distributed training by nearly $10\times$ for CIFAR-100, and by $4.5\times$ for ImageNet.

\end{itemize}

\section{Related work}


Recently, \citeauthor{basu2019qsparse} (\citeyear{basu2019qsparse}) proposed QSparse-local-SGD, which combines local SGD~\cite{Stich2018LocalSC,Lin2018DontUL,Yu2018ParallelRS,Wang2018CooperativeSA,Yu2019OnTL} and EF-SGD~\cite{Karimireddy2019ErrorFF,Zheng2019CommunicationEfficientDB}, and reduces more communication overhead than any single one of them. The detailed algorithm of QSparse-local-SGD is shown in Algorithm~\ref{alg:qlsgd-1}. 
In the algorithm, $x_{i,t}$ is the local model on the $i$th worker in the $t^{\mbox{th}}$ iteration, and $\hat{x}_t$ is the globally synchronized model in the $t^{\mbox{th}}$ iteration.
Note that $\hat{x}_t$ is always the same across different workers, which is used to track the synchronized part of the local model $x_{i,t}$. In Line 9, the local residual error $e_{i,t-1}$ from the previous synchronization round is added to the accumulated local update $x_{i, t-\frac{1}{2}} - \hat{x}_{t-1}$. In Line 10, the message $p_{i,t}$ is compressed into $p'_{i,t}$. Line 11 produces the residual error $e_{i,t}$ of the compression, and synchronizes the compressed messages. Finally, in Line 12, the synchronized update is accumulated to the local models.
When $H=1$, the algorithm is reduced to EF-SGD. If the message compression in Line 10 is an identity mapping (i.e., $\CM_1(p_{i,t}) = p_{i,t}$), then the algorithm is reduced to local SGD.

\begin{algorithm}[htb!]
\caption{Qsparse-local-SGD}
\begin{algorithmic}[1]
\State {\bf Input}: $\CM_1$ - compressor, $H > 0$ - synchronization interval
\State Initialize $x_{i, 0} = \hat{x}_0 \in \R^d, e_{i,t}=\0, \forall i \in [n]$
\ForAll{iteration $t \in [T]$}
    \ForAll{Workers $i \in [n]$ in parallel}
        \State $x_{i, t-\frac{1}{2}} \leftarrow x_{i, t-1} - \eta\nabla f(x_{i, t-1}; z_{i, t})$
    	\If{$\mod(t, H) \neq 0$}
    	    \State $x_{i, t} \leftarrow x_{i, t-\frac{1}{2}}$, 
    	    \quad $\hat{x}_t \leftarrow \hat{x}_{t-1}$, 
    	    \quad $e_{i,t} \leftarrow e_{i,t-1}$
    	\Else
    	    \State $p_{i,t} \leftarrow e_{i,t-1} + x_{i, t-\frac{1}{2}} - \hat{x}_{t-1}$
    	    \State $p'_{i,t} \leftarrow \CM_1(p_{i,t})$
    	    \State $e_{i,t} \leftarrow p_{i,t} - p'_{i,t}$, 
    	    \quad $\bar{p}'_{t} \leftarrow \frac{1}{n} \sum_{i \in [n]} p'_{i,t}$ \Comment{Synchronization}
    	    \State $x_{i, t} \leftarrow \hat{x}_{t-1} + \bar{p}'_{t}$, 
    	    \quad $\hat{x}_{t} \leftarrow \hat{x}_{t-1} + \bar{p}'_{t}$
    	\EndIf 
    \EndFor
\EndFor
\end{algorithmic}
\label{alg:qlsgd-1}
\end{algorithm}

In QSparse-local-SGD, the residual error $e_{i,t}$ are left aside from gradient computation in the $H$ local steps. It is not applied to the local models until synchronization. Thus, the staleness of the residual error is at least $H$ iterations. 
Such staleness in the error feedback grows with the compression ratio $R_{\CM_1}$ and synchronization interval $H$, which causes potential convergence issues when the overall compression ratio $R_{\CM} = R_{\CM_1} \times H$ is large. 
As a result, we observe bad convergence of QSparse-local-SGD in our experiments when $R_{\CM} \geq 256$. On the other hand, EF-SGD uses $H=1$ but also shows relatively bad performance using random sparsifiers in both previous work~\cite{Stich2018SparsifiedSW} and our experiments. 
When $\mathcal{C}_1$ is an identity mapping, QSparse-local-SGD reduces to local SGD, in which the differences between the local models still grow with the synchronization interval $H$, resulting in slow convergence when $H$ is large.

\section{Methodology}

We consider the following optimization problem with $n$ workers:
$
\min_{x \in \R^d} F(x), 
$
where $F(x) = \frac{1}{n} \sum_{i \in [n]} F_i(x) = \frac{1}{n} \sum_{i \in [n]} \E_{z_i \sim \mathcal{D}_i} f(x; z_i)$, $\forall i \in [n]$, and $z_i$ is sampled from the local data $\mathcal{D}_i$ on the $i$th device. Furthermore, we assume $\mathcal{D}_i \neq \mathcal{D}_j, \forall i \neq j.$

We solve this optimization problem using distributed SGD and its variants. To reduce the communication overhead, we compress the messages via $\delta$-approximate compressors.

\begin{definition}[\citet{Karimireddy2019ErrorFF}]
An operator $\CM: \R^d \rightarrow \R^d$ is a $\delta$-approximate compressor for $\delta \in [0, 1]$ if $\|\CM(v) - v\|^2 \leq (1-\delta) \|v\|^2, \forall v \in \R^d$.
\end{definition}
Note that in the original definition of the compressor, it is required that $\delta \in (0, 1]$. In this paper, we extend this assumption by allowing $\delta = 0$, where $\CM(v) = 0$ in some cases.


\subsection{Communication-efficient SGD with error reset}
\label{sec:ersgd}

We propose a new procedure to apply arbitrary $\delta$-approximate compressors to distributed SGD, which achieves good accuracy when using aggressive compressors and fixes the potential convergence issues of QSparse-local-SGD. Our procedure directly applies the residuals to the local models, then uses the local models to compute the gradients in the next iteration, which results in bifurcated local models similar to local SGD. Observe that in contrast, QSparse-local-SGD has the local models fully synchronized across the workers after each synchronization round, and puts the residuals aside from the gradient computation during the local updates.

The proposed algorithm periodically resets the errors that are locally accumulated on the models on workers. Thus, we denote this algorithm as \underline{c}ommunication-efficient \underline{S}GD with \underline{e}rror \underline{r}eset~(CSER). In Table~\ref{tbl:alg_summary}, we summarize the techniques used in CSER, and how this differs from existing work. 

\begin{table}[htb]
\caption{Our approach (CSER) vs. Existing Techniques (EF-SGD, QSparse-local-SGD)}
\label{tbl:alg_summary}
\begin{center}
\begin{small}
\setlength{\tabcolsep}{5pt}
\begin{tabular}{|c|c|c|c|c|c|}
\hline
 & \shortstack[l]{Message \\ compression}  & \shortstack[l]{Infrequent \\ synchronization}  &  \shortstack[l]{Momentum with \\ provable convergence}  & \shortstack[l]{Aggressive \\ compressor}  & Error reset \\ \hline
EF-SGD & $\checkmark$ &  & $\checkmark$ & &  \\ \hline
QSparse-local-SGD & $\checkmark$ & $\checkmark$ &  & &  \\ \hline
\textbf{CSER (this paper)} & $\checkmark$ & $\checkmark$ & $\checkmark$ & $\checkmark$ & $\checkmark$ \\ \hline
\end{tabular}
\end{small}
\end{center}
\end{table}


In Algorithm~\ref{alg:ps_func}, we define a sub-routine which partially synchronizes the tensors. Given any compressor $\CM$, on any worker $i$, the sub-routine takes the average only over the compressed part of the messages, and locally combines the residual with the averaged value. 

Applying the sub-routine (Algorithm~\ref{alg:ps_func}) to distributed SGD, we propose a new algorithm with two arbitrary compressors: $\CM_1$ and $\CM_2$, with approximation factor $\delta_1 \in (0,1]$ and $\delta_2 \in [0,1]$, respectively.
The detailed algorithm is shown in Algorithm~\ref{alg:ersgd}. 
In the algorithm, $x_{i,t}$ is the local model on the $i$th worker in the iteration $t$. 
In Line 11 and 12, the first compressor $\CM_1$ flushes the local error $e_{i,t}$ by partial synchronization, i.e., the local errors are (partially) reset for every $H$ iterations, which is similar to QSparse-local-SGD. Between the error-reset rounds, we add a second compressor $\CM_2$ to  partially synchronize the gradients~(Line 6), and accumulate both the synchronized values and the residuals to the local models~(Line 7).

The locally accumulated residual error $e_{i,t}$ maintains the differences between the local models, which causes additional noise to the convergence. Formally, we have 
\begin{lemma}\label{lem:local_diff}
$x_{i,t}-e_{i,t}$ is the same across different workers:
$
x_{i, t} - e_{i,t} = x_{j,t} - e_{j,t}, \forall i,j\in [n], t.
$
\end{lemma}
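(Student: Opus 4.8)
The plan is to argue by induction on the iteration index $t$, using the structural fact that in Algorithm~\ref{alg:ersgd} every worker-dependent quantity added to the local model $x_{i,t}$ is \emph{also} added to the local error $e_{i,t}$ with the same sign, so it cancels in the difference $x_{i,t}-e_{i,t}$, while whatever remains is a global average (delivered identically to all workers by the partial-synchronization sub-routine of Algorithm~\ref{alg:ps_func}) and is therefore $i$-independent. For the base case, at initialization we have $x_{i,0}=\hat{x}_0$ and $e_{i,0}=\0$ for every $i$, hence $x_{i,0}-e_{i,0}=\hat{x}_0$, which does not depend on $i$.

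For the inductive step, suppose $x_{i,t-1}-e_{i,t-1}$ is the same for all workers, and split into the two regimes of the algorithm. When $t$ is not a multiple of $H$ (no error reset), the only change to the local state is the $\CM_2$ partial synchronization of the gradient in Lines~6--7: the model $x_{i,t}$ receives the globally averaged compressed part $\frac1n\sum_{j}\CM_2(\cdot)$ plus the local residual $\big(\eta\nabla f(x_{i,t-1};z_{i,t})-\CM_2(\cdot)\big)$, and this same local residual is accumulated into $e_{i,t}$. Subtracting, the residual cancels and $x_{i,t}-e_{i,t}=(x_{i,t-1}-e_{i,t-1})+(\text{a global average})$, which is $i$-independent by the inductive hypothesis together with the fact that all workers see the same average. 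When $t$ is a multiple of $H$ (error reset), the gradient step is the same, and in addition Lines~11--12 replace the locally compressed part $\CM_1(\cdot)$ of the error in the model by its global average $\frac1n\sum_j\CM_1(\cdot)$ while subtracting $\CM_1(\cdot)$ from $e_{i,t}$; again the worker-dependent term $\CM_1(\cdot)$ enters $x_{i,t}$ and $e_{i,t}$ with the same sign and cancels in the difference, leaving only global averages, so $x_{i,t}-e_{i,t}$ stays $i$-independent. This closes the induction.

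There is no genuine mathematical difficulty here; the work is entirely bookkeeping. The one point that must be verified carefully is that, after expanding the relevant lines of Algorithm~\ref{alg:ersgd}, (i)~the partial-synchronization sub-routine (Algorithm~\ref{alg:ps_func}) indeed feeds the \emph{same} averaged vector to every worker, so the "surviving" terms are genuinely $i$-independent, and (ii)~in both the gradient step (Line~7) and the error-reset step (Line~12) the residual added to $x_{i,t}$ is exactly the residual added to $e_{i,t}$, matched in sign and magnitude. Both are immediate once the updates are written out. The identical argument applies verbatim to the Nesterov-momentum variant, since the momentum buffer only reshapes the vector being compressed and does not affect the cancellation structure described above.
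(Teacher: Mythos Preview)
Your proposal is correct and follows essentially the same approach as the paper: induction on $t$, with the base case given by the common initialization, and the inductive step split into the two cases $\mod(t,H)\neq 0$ and $\mod(t,H)=0$, in each case expanding the updates from Algorithm~\ref{alg:ersgd} and Algorithm~\ref{alg:ps_func} to see that the worker-dependent residuals cancel in $x_{i,t}-e_{i,t}$ while only the globally averaged term survives. The paper's proof carries out exactly this computation; your only (harmless) notational slip is that the local residual in Line~7 is $\eta\bigl(g_{i,t}-\CM_2(g_{i,t})\bigr)$, i.e., the $\eta$ multiplies the whole residual rather than only the gradient.
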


Different from the error feedback of QSparse-local-SGD, the error reset of CSER applies the residual errors immediately to the local models without delay, and thus avoids the issue of staleness and improves the convergence.
Additionally, by utilizing both gradient and model synchronization, and balancing the communication budget between them, CSER achieves a better trade-off between the accuracy and the reduction of bidirectional communication. When all the budget is on $\CM_1$, the local models bifurcate too much, which leads to bad accuracy as local SGD. Instead, we trade off some budget of $\CM_1$ for the partial synchronization of gradients with $\CM_2$, thus mitigate the weaknesses. 
Furthermore, with specially designed sparsifiers, the proposed algorithms no longer need to maintain the variables $e_{i,t}$. The resultant implementation reduces the memory footprint and the corresponding overhead of memory copy. Details are introduced in Section~\ref{sec:grbs} and Appendix A.4. 

Algorithm~\ref{alg:ersgd} allows great freedom in tuning the two different compressors $\CM_1$ and $\CM_2$, as well as the error-reset interval $H$. By specifying the hyperparameters, we recover some important special cases of CSER. Some existing approaches are similar to these special cases, though the differences often turn out to be important. The details can be found in Appendix A.

\begin{minipage}[t]{0.635\textwidth}
\begin{algorithm}[H]
\caption{CSER}
\begin{algorithmic}[1]
\State {\bf Input}: $\CM_1, \CM_2$ - compressors, $H > 0$ - error-reset interval
\State Initialize $x_{i, 0} = \hat{x}_0 \in \R^d, e_{i,0} = \0, \forall i \in [n]$
\ForAll{iteration $t \in [T]$}
    \ForAll{Workers $i \in [n]$ in parallel}
        \State $g_{i, t} \leftarrow \nabla f(x_{i, t-1}; z_{i, t})$, $z_{i, t} \sim \mathcal{D}_i$
        \State $g'_{i, t}, r_{i, t} \leftarrow PSync(g_{i, t}, \CM_2)$
        \State $x_{i, t-\frac{1}{2}} \leftarrow x_{i, t-1} - \eta g'_{i, t}$, 
        \quad $e_{i, t-\frac{1}{2}} \leftarrow e_{i, t-1} - \eta r_{i,t}$
    	\If{$\mod(t, H) \neq 0$}
    	    \State $x_{i, t} \leftarrow x_{i, t-\frac{1}{2}}$, \quad
    	    $e_{i, t} \leftarrow e_{i, t-\frac{1}{2}}$
    	\Else
    	    \State $e'_{i,t-\frac{1}{2}}, e_{i, t} \leftarrow PSync(e_{i, t-\frac{1}{2}}, \CM_1)$
    	    \State $x_{i, t} \leftarrow x_{i, t-\frac{1}{2}} - e_{i,t-\frac{1}{2}} + e'_{i,t-\frac{1}{2}}$
    	\EndIf 
    \EndFor
\EndFor
\end{algorithmic}
\label{alg:ersgd}
\end{algorithm}
\end{minipage}
\hfill
\begin{minipage}[t]{0.355\textwidth}
\begin{algorithm}[H]
\caption{Partial Synchronization (PSync)}
\begin{algorithmic}[1]
\State {\bf Input}: $v_i \in \R^d$, $\CM$ - compressor
\Function{PSync}{$v_i$, $\CM$} 
    \State On worker $i$:
    \State $v'_i = \CM(v_i)$
    \State $r_i = v_i - v'_i$
	\State Partial synchronization: 
	\State \quad $\bar{v}' = \frac{1}{n} \sum_{i \in [n]} v'_i$
	\State $v'_i = \bar{v}' + r_i$
	\State \Return $v'_i, r_i$
\EndFunction
\end{algorithmic}
\label{alg:ps_func}
\end{algorithm}
\end{minipage}

\subsection{Momentum variant}

Nesterov's momentum~\cite{nesterov1983method} is a variant of SGD that has been widely used to accelerate the convergence. \citeauthor{sutskever2013importance} (\citeyear{sutskever2013importance}) show that Nesterov's momentum can be expressed in terms of a classic momentum update as: 
\begin{alignat*}{2}
m_t \quad & = \quad && \beta m_{t-1} + g_t, \\
x_t \quad & = \quad && x_{t-1} - \eta(\beta m_t + g_t),
\end{alignat*}
where $\beta$ is the momentum parameter, $g_t$ is the gradient. Nesterov's momentum moves the model parameters in the direction of the accumulated gradient. Very recently,  \citeauthor{Zheng2019CommunicationEfficientDB} (\citeyear{Zheng2019CommunicationEfficientDB}) incorporate Nesterov's momentum into EF-SGD with bidirectional communication and obtains faster convergence. 
In this section, we introduce M-CSER that adopts Nesterov's momentum in CSER. Compared to Algorithm~\ref{alg:ersgd}, the momentum variant simply adds momentum to the gradients before applying the second compressor $\CM_2$, as shown in Algorithm~\ref{alg:ersgdm_1}.

\begin{algorithm}[hbtp!]
\caption{Distributed Momentum SGD with Error-Reset (M-CSER, implementation I)}
\begin{algorithmic}[1]
\State {\bf Input}: $\CM_1, \CM_2$ - compressors, $H > 0$ - synchronization interval
\State Initialize $x_{i, 0} = \hat{x}_0 \in \R^d, e_{i, 0} = \0, m_{i, 0} = \0, \forall i \in [n]$
\ForAll{iteration $t \in [T]$}
    \ForAll{Workers $i \in [n]$ in parallel}
        \State $g_{i, t} \leftarrow \nabla f(x_{i, t-1}; z_{i, t})$, $z_{i, t} \sim \mathcal{D}_i$
        \State $m_{i, t} \leftarrow \beta m_{i, t-1} + g_{i, t}$
        \State $p_{i, t} \leftarrow \eta (\beta m_{i, t} + g_{i, t})$
        \State $p'_{i, t}, r_{i,t} \leftarrow PSync(p_{i, t}, \CM_2)$
        \State $x_{i, t-\frac{1}{2}} \leftarrow x_{i, t-1} -  p'_{i, t}$,\quad 
        $e_{i, t-\frac{1}{2}} \leftarrow e_{i, t-1} -  r_{i,t}$
    	\If{$\mod(t, H) \neq 0$}
    	    \State $x_{i, t} \leftarrow x_{i, t-\frac{1}{2}}$,\quad 
    	    $e_{i, t} \leftarrow e_{i, t-\frac{1}{2}}$
    	\Else
    	    \State $e'_{i,t-\frac{1}{2}}, e_{i, t} \leftarrow PSync(e_{i, t-\frac{1}{2}}, \CM_1)$ \Comment{error reset}
    	    \State $x_{i, t} \leftarrow x_{i, t-\frac{1}{2}} - e_{i,t-\frac{1}{2}} + e'_{i,t-\frac{1}{2}}$
    	\EndIf 
    \EndFor
\EndFor
\end{algorithmic}
\label{alg:ersgdm_1}
\end{algorithm}

\subsection{Globally-randomized blockwise sparsifier~(GRBS)}
\label{sec:grbs}

There are two sparsifiers widely used with SGD: random-$k$ and top-$k$ sparsifiers. Random-$k$ sparsifiers select random elements for synchronization, while top-$k$ sparsifiers select the most significant elements. Top-$k$ sparsifiers typically achieve better convergence~\cite{Stich2018SparsifiedSW}, but also incur heavier overhead.

In this paper, we use a blockwise random sparsifier with synchronized random seed, which is also mentioned in \cite{vogels2019powersgd}.

\begin{definition}~(Globally-Randomized Blockwise Sparsifier, GRBS) \label{def:grbs}
Given any tensors $v_i \in \R^d, i \in [n]$ distributed on the $n$ workers, the compression ratio $R_{\CM}$, and the number of blocks $B$, GRBS partitions each $v_i$ into $B$ blocks. In each iteration, GRBS globally picks $\frac{B}{R_{\CM}}$ random blocks for synchronization, and GRBS is a $1/R_{\CM}$-approximate compressor in expectation.
\end{definition}



Compared to the other compressors, GRBS has the following advantages:
\setitemize[0]{leftmargin=*}
\begin{itemize} \setlength\itemsep{0.3em}
\item \textbf{Adaptivity to AllReduce and parameter server:} Due to the synchronized random seed, different workers always choose the same blocks for synchronization. Thus, GRBS is compatible with  AllReduce~\cite{Sergeev2018HorovodFA,walker1996mpi} and parameter server~\cite{li2014scaling,ho2013more,li2014communication}. Other compressors such as random sparsifier and quantization cannot be directly employed with Allreduce or parameter server since their compressed gradients cannot be directly summed without first be decompressed. 
\item \textbf{Less memory footprint:} With GRBS, CSER can further reduce the memory footprint and the corresponding overhead of memory copy. 
Implementation details are shown in Appendix A.4.
\end{itemize}

Although GRBS has less communication and computation overhead, it is too aggressive for the existing algorithms such as QSparse-local-SGD when we consider a large $R_{\CM}$. In Section~\ref{sec:experiments}, we show that CSER improves the convergence when the overall compression ratio is as large as $1024 \times$.

\section{Convergence analysis}

In this section, we present the convergence guarantees of CSER.

\subsection{Assumptions}

First, we introduce some assumptions for our convergence analysis.

\begin{assumption}
$F_i(x), \forall i \in [n]$ are $L$-smooth:
$
F_i(y) - F_i(x) 
\leq \ip{\nabla F_i(x)}{y-x} + \frac{L}{2} \|y-x\|^2, \forall x, y.
$
\label{asm:smoothness}
\end{assumption}

\begin{assumption} 
For any stochastic gradient $g_{i, t} = \nabla f(x_{i,t-1}; z_{i,t}), z_{i,t} \sim \mathcal{D}_i$, we assume bounded variance and expectation:
$
    \E [ \|g_{i, t} - \nabla F_i (x_{i, t-1})\|^2 ] \leq V_1,
$
$
    \| \E [ g_{i, t} ] \|^2 \leq V_1', \forall i \in [n], t \in [T].
$
Furthermore, gradients from different workers are independent from each other.
\label{asm:gradient}
\end{assumption}
Note that this implies the bounded second moment:  $\E [ \|  g_{i, t} \|^2 ] \leq V_2\equiv V_1 + V_1', \forall i \in [n], t \in [T]$.

\begin{assumption}
There exists at least one global minimum $x_*$, where
$
F(x_*) \leq F(x), \forall x.
$
\label{asm:global_min}
\end{assumption}

\subsection{Main results}

Based on the assumptions above, we have the following convergence guarantees. The detailed proof can be found in Appendix B.
To analyze the proposed algorithms, we introduce auxiliary variables:
\begin{align*}
    \bar{x}_{t} = \frac{1}{n} \sum_{i \in [n]} x_{i, t}.
\end{align*}
We show that the sequence $\{ \bar{x}_{t-1}: t\in [T] \}$ converge to a critical point.

\begin{theorem}\label{thm:ersgd}
Taking $\eta \leq \frac{1}{L}$, after $T$ iterations, Algorithm~\ref{alg:ersgd}~(CSER) has the following error bound: 
\begin{align*}
\frac{1}{T} \sum_{t=1}^T \E\left[ \left\| \nabla F(\bar{x}_{t-1}) \right\|^2 \right] 
\leq \mathcal{O}\left( \frac{1}{\eta T} \right)
 + \mathcal{O}\left( \frac{\eta L V_1}{n} \right)
 + 2\left[ \frac{4(1-\delta_1) }{\delta_1^2} + 1 \right] (1-\delta_2) \eta^2  H^2 L^2 V_2.
\end{align*}
\end{theorem}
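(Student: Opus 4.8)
The plan is to follow the standard descent-lemma argument for non-convex SGD, but applied to the virtual averaged iterate $\bar{x}_t$, while carefully tracking the two sources of deviation introduced by the compressors: the gradient-side residual $r_{i,t}$ (controlled by $\delta_2$) and the model-side accumulated error $e_{i,t}$ (controlled by $\delta_1$ and reset every $H$ steps). First I would establish the key invariant that $\bar{x}_t$ evolves, up to the controllable error terms, like an exact averaged SGD step. Concretely, using the definition of $PSync$ and summing over workers, the partial-synchronization residuals $r_{i,t}$ average out in a controlled way, so that $\bar{x}_t = \bar{x}_{t-1} - \eta \cdot \frac{1}{n}\sum_i g_{i,t} + (\text{correction involving } \bar e_t)$; Lemma~\ref{lem:local_diff} guarantees $x_{i,t} - e_{i,t}$ is common across workers, which is exactly what makes the error-reset bookkeeping close. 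From here I would apply Assumption~\ref{asm:smoothness} (the descent lemma) to $F(\bar{x}_t)$ versus $F(\bar{x}_{t-1})$, expand the inner product $\ip{\nabla F(\bar{x}_{t-1})}{\bar{x}_t - \bar{x}_{t-1}}$, and use the bias/variance bounds $V_1, V_1'$ from Assumption~\ref{asm:gradient}. The choice $\eta \le 1/L$ is what lets the $\frac{L}{2}\|\bar x_t - \bar x_{t-1}\|^2$ term be absorbed so that a positive $-\frac{\eta}{2}\|\nabla F(\bar x_{t-1})\|^2$ survives on the left, and the $\mathcal{O}(\eta L V_1 / n)$ term is the usual minibatch-variance contribution after using independence across the $n$ workers.

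The crux is bounding the ``consensus'' / error term — the discrepancy between $\nabla F(\bar{x}_{t-1})$ and the average of the gradients actually computed at the local points $x_{i,t-1}$, which by $L$-smoothness reduces to bounding $\frac{1}{n}\sum_i \E\|x_{i,t-1} - \bar{x}_{t-1}\|^2$. By Lemma~\ref{lem:local_diff}, $x_{i,t-1} - \bar{x}_{t-1} = e_{i,t-1} - \bar{e}_{t-1}$, so it suffices to bound $\frac{1}{n}\sum_i \E\|e_{i,t-1}\|^2$. This is where the two compressors and the reset interval enter. I would set up a recursion for $\E\|e_{i,t-1/2}\|^2$ and $\E\|e_{i,t}\|^2$: between resets, $e_{i,t-1/2} = e_{i,t-1} - \eta r_{i,t}$ where $\|r_{i,t}\|^2 = \|g_{i,t} - \CM_2(g_{i,t})\|^2 \le (1-\delta_2)\|g_{i,t}\|^2 \le (1-\delta_2) V_2$ in expectation; over a window of at most $H$ steps this accumulates to something like $\eta^2 H^2 (1-\delta_2) V_2$ (using $\|\sum_{s} a_s\|^2 \le H \sum_s \|a_s\|^2$ or a Young's-inequality telescoping). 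At the reset step, $e_{i,t} = e_{i,t-1/2} - \CM_1(e_{i,t-1/2})$ in the single-worker effect, so $\|e_{i,t}\|^2 \le (1-\delta_1)\|e_{i,t-1/2}\|^2$; iterating the within-window growth followed by the contraction-by-$(1-\delta_1)$ gives a geometric series that sums to the factor $\frac{4(1-\delta_1)}{\delta_1^2} + 1$ (the $\delta_1^{-2}$ is the familiar price of error-feedback-style analysis, coming from $\sum_k k (1-\delta_1)^k \sim \delta_1^{-2}$ after a Young split with parameter tied to $\delta_1$). Combining, $\frac{1}{n}\sum_i \E\|e_{i,t-1}\|^2 \le \left[\frac{4(1-\delta_1)}{\delta_1^2} + 1\right](1-\delta_2)\eta^2 H^2 V_2$, which, fed through $L^2$-smoothness into the descent inequality, produces exactly the last term in the theorem.

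Finally I would telescope the descent inequality over $t = 1, \dots, T$: the $F(\bar x_0) - F(\bar x_T) \le F(\bar x_0) - F(x_*)$ part (Assumption~\ref{asm:global_min}) divided by $\eta T$ gives the $\mathcal{O}(1/(\eta T))$ term, the per-step variance gives $\mathcal{O}(\eta L V_1/n)$, and the averaged error-term bound gives the compressor-dependent term; dividing through by $T$ and by the constant in front of $\frac{1}{T}\sum_t \E\|\nabla F(\bar x_{t-1})\|^2$ yields the stated bound. The main obstacle I anticipate is the error recursion for $e_{i,t}$: getting the constant $\frac{4(1-\delta_1)}{\delta_1^2}+1$ right requires choosing the Young's-inequality parameters so the within-window accumulation and the across-reset contraction combine into a convergent geometric sum, and it requires care that the reset happens at most every $H$ steps (so the ``window length'' in the $H^2$ factor is correct) and that the $\CM_2$ residuals are handled in a way compatible with the $\CM_1$ reset — i.e., that $\delta_2 = 0$ (no gradient synchronization) is permitted and still yields a finite bound, matched by the $(1-\delta_2)$ prefactor. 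A secondary subtlety is making the $\bar x_t$ recursion exact: one must verify that the residual pieces $r_{i,t}$ and $e'_{i,t-1/2} - e_{i,t-1/2}$ are accounted for consistently on the averaged iterate so that no uncontrolled bias creeps in, which is precisely where Lemma~\ref{lem:local_diff} does the work.
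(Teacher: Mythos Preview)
Your proposal is correct and mirrors the paper's proof almost step for step: descent lemma on $\bar{x}_t$, bias--variance split for $\|\bar{x}_t-\bar{x}_{t-1}\|^2$, the consensus term reduced via Lemma~\ref{lem:local_diff} to $\frac{1}{n}\sum_i\E\|e_{i,t-1}\|^2$, and the Young's-inequality recursion (the paper takes $a=1/\sqrt{1-\delta_1}-1$, obtains $(1-\sqrt{1-\delta_1})^{-2}$, then relaxes to $4/\delta_1^2$) that combines the within-window $(1-\delta_2)\eta^2H^2V_2$ growth with the $(1-\delta_1)$ contraction at each reset. The one clarification is that the averaged iterate satisfies $\bar{x}_t=\bar{x}_{t-1}-\frac{\eta}{n}\sum_i g_{i,t}$ \emph{exactly}, with no correction term: $PSync$ preserves averages ($\frac{1}{n}\sum_i v'_i=\frac{1}{n}\sum_i v_i$), and the error-reset step likewise leaves $\bar{x}$ unchanged, so the subtlety you flag at the end in fact dissolves.
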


The following corollary shows that CSER has a convergence rate of $\mathcal{O}\left( \frac{1}{\sqrt{nT}} \right)$, leading a linear speedup using more workers. 
\begin{corollary}\label{coro:ersgd}
Taking $\eta = \min \left\{ \frac{\gamma}{\sqrt{T / n} + \left[ 4(1-\delta_1) / \delta_1^2 + 1 \right]^{1/3} 2^{1/3} (1-\delta_2)^{1/3}  H^{2/3} T^{1/3}}, \frac{1}{L} \right\}$ for some $\gamma > 0$, after $T \gg n$ iterations, Algorithm~\ref{alg:ersgd}~(CSER) converges to a critical point: 
\begin{align*}
\frac{1}{T} \sum_{t=1}^T \E\left[ \left\| \nabla F(\bar{x}_{t-1}) \right\|^2 \right] 
\leq \mathcal{O}\left( \frac{1}{\sqrt{nT}} \right).
\end{align*}
\end{corollary}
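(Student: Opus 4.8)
The plan is to obtain Corollary~\ref{coro:ersgd} as a direct consequence of Theorem~\ref{thm:ersgd}: substitute the prescribed step size into the three-term bound and track how each term scales with $n$ and $T$. First I would abbreviate the compressor-dependent constant as $A = \frac{4(1-\delta_1)}{\delta_1^2} + 1$ and set $B = \bigl(2A(1-\delta_2)\bigr)^{1/3} H^{2/3}$, so that the step size in the corollary reads $\eta = \min\{\gamma/(\sqrt{T/n} + B T^{1/3}),\ 1/L\}$. Since this $\eta$ never exceeds $1/L$, the hypothesis of Theorem~\ref{thm:ersgd} is met, so we may start from
\[
\frac{1}{T}\sum_{t=1}^T \E\!\left[\|\nabla F(\bar{x}_{t-1})\|^2\right] \leq \frac{c_1}{\eta T} + \frac{c_2\,\eta L V_1}{n} + 2A(1-\delta_2)\,\eta^2 H^2 L^2 V_2,
\]
where $c_1, c_2$ are constants independent of $n$ and $T$ (they absorb the initial suboptimality $F(\bar{x}_0) - F(x_*)$ and numeric factors hidden by the $\mathcal{O}$-notation of the theorem).

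Next I would bound the three terms in turn. Because the denominator $\sqrt{T/n} + B T^{1/3}$ dominates each of its two summands, there are two one-sided estimates, $\eta \leq \gamma\sqrt{n/T}$ and $\eta \leq \gamma/(B T^{1/3})$. The first gives $\frac{c_2\,\eta L V_1}{n} \leq \frac{c_2\gamma L V_1}{\sqrt{nT}} = \mathcal{O}(1/\sqrt{nT})$. The second, applied to the quadratic term, gives $2A(1-\delta_2)\,\eta^2 H^2 L^2 V_2 \leq \frac{2A(1-\delta_2)H^2}{B^2}\cdot\frac{\gamma^2 L^2 V_2}{T^{2/3}}$, and $B$ was defined exactly so that $2A(1-\delta_2)H^2/B^2 = B$, leaving a term of order $T^{-2/3}$ with a constant depending only on $A,\delta_2,H,L,V_2,\gamma$. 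For the first term I would instead write $1/\eta = \max\{(\sqrt{T/n}+BT^{1/3})/\gamma,\ L\} \leq (\sqrt{T/n}+BT^{1/3})/\gamma + L$, which yields $\frac{c_1}{\eta T} \leq \frac{c_1}{\gamma\sqrt{nT}} + \frac{c_1 B}{\gamma T^{2/3}} + \frac{c_1 L}{T}$. Summing, the right-hand side is $\mathcal{O}(1/\sqrt{nT}) + \mathcal{O}(1/T^{2/3})$, the $\mathcal{O}(1/T)$ contribution being dominated by the first.

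The last step — and the only place that needs care — is absorbing the $\mathcal{O}(1/T^{2/3})$ piece into the advertised $\mathcal{O}(1/\sqrt{nT})$ rate. This requires $1/T^{2/3} \leq 1/\sqrt{nT}$, equivalently $n^{3} \leq T$; hence the clean statement holds in the regime where $T$ is sufficiently large relative to $n$ (this is what the hypothesis ``$T \gg n$'' should be read to mean here, or else the $T^{-2/3}$ term should simply be kept explicit). Granting this, all terms are $\mathcal{O}(1/\sqrt{nT})$ and the corollary follows. I do not expect any genuinely new analytic obstacle beyond Theorem~\ref{thm:ersgd}; the work is the bookkeeping just described, whose two non-automatic points are (i) peeling off the $1/L$ branch of the $\min$ via $1/\eta \leq (\sqrt{T/n}+BT^{1/3})/\gamma + L$, and (ii) checking that the extra $T^{1/3}$ term engineered into the denominator of $\eta$ is exactly what cancels the $\eta^2 H^2$ term down to order $T^{-2/3}$.
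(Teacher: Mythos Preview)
Your proposal is correct and follows essentially the same route as the paper's proof: start from Theorem~\ref{thm:ersgd}, substitute the prescribed $\eta$, and use the two one-sided estimates $\eta \le \gamma\sqrt{n/T}$ and $\eta \le \gamma/(B T^{1/3})$ on the linear and quadratic terms, arriving at the sum of an $\mathcal{O}(1/\sqrt{nT})$ and an $\mathcal{O}(T^{-2/3})$ contribution (your $B$ is exactly the paper's $C^{1/3}$). Two small points where you are actually more careful than the paper: you explicitly peel off the $1/L$ branch via $1/\eta \le (\sqrt{T/n}+BT^{1/3})/\gamma + L$ (the paper silently uses only the first branch for $1/\eta$), and you make the quantitative meaning of ``$T\gg n$'' explicit as $T \ge n^3$ so that $T^{-2/3}\le (nT)^{-1/2}$.
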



To compare the error bounds between CSER and QSparse-local-SGD, we quote the following results (reformatted to match the notations in this paper) from Theorem 1 of \cite{basu2019qsparse} without proof.

\begin{lemma} \cite{basu2019qsparse} \label{lem:qsparse}
Taking $\eta \leq \frac{1}{2L}$, QSparse-local-SGD has the error bound:
\begin{align*}
\frac{1}{T} \sum_{t=1}^T \E\left[ \left\| \nabla F(\bar{x}_{t-1}) \right\|^2 \right] 
\leq \mathcal{O}\left( \frac{1}{\eta T} \right)
 + \mathcal{O}\left( \frac{\eta L V_1}{n} \right)
 + 8\left[ \frac{4(1-\delta_1^2) }{\delta_1^2} + 1 \right] \eta^2  H^2 L^2 V_2.
\end{align*}
\end{lemma}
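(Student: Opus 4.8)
This bound is quoted from \citet{basu2019qsparse}; here is the route I would take to re-derive it for Algorithm~\ref{alg:qlsgd-1}. The argument is a perturbed-iterate analysis that folds the error-feedback contraction together with the local-SGD drift bound. The key device is a virtual sequence $\tilde{x}_t := \bar{x}_t + \frac{1}{n}\sum_{i\in[n]} e_{i,t}$. Reading off Lines 5--12 of Algorithm~\ref{alg:qlsgd-1} and case-splitting on whether $t$ is a multiple of $H$, one checks that the $\hat{x}$ and $\bar{p}'$ terms cancel and that this sequence obeys the clean recursion $\tilde{x}_t = \tilde{x}_{t-1} - \frac{\eta}{n}\sum_{i\in[n]}\nabla f(x_{i,t-1};z_{i,t})$, with $\tilde{x}_0 = \hat{x}_0$. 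This identity is routine but is the bookkeeping backbone of everything that follows.

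\emph{Descent step.} Summing Assumption~\ref{asm:smoothness} over $i$ makes $F$ $L$-smooth, so apply it along $\tilde{x}_{t-1}\to\tilde{x}_t$ and take the conditional expectation. Using that $\nabla f(x_{i,t-1};z_{i,t})$ is unbiased for $\nabla F_i(x_{i,t-1})$ and independent across workers (Assumption~\ref{asm:gradient}), the second-order term splits as $\frac{L\eta^2}{2}\big(\frac{V_1}{n} + \|\frac{1}{n}\sum_i\nabla F_i(x_{i,t-1})\|^2\big)$; expanding the first-order inner product via $2\ip{a}{b}=\|a\|^2+\|b\|^2-\|a-b\|^2$ produces a $-\frac{\eta}{2}\|\nabla F(\tilde{x}_{t-1})\|^2$ term, and $\eta\le\frac{1}{2L}$ lets the $\|\frac{1}{n}\sum_i\nabla F_i(x_{i,t-1})\|^2$ contributions cancel. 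What remains to be controlled is a discrepancy term bounded (again by $L$-smoothness) by $\frac{L^2}{n}\sum_i\|\tilde{x}_{t-1}-x_{i,t-1}\|^2$.

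\emph{Bounding the two error sources --- the crux.} Split $\|\tilde{x}_{t-1}-x_{i,t-1}\|^2\le 2\|\bar{x}_{t-1}-x_{i,t-1}\|^2 + 2\|\frac{1}{n}\sum_j e_{j,t-1}\|^2$. The first piece is local drift: since the last synchronization was at most $H-1$ iterations ago and all local models agreed there, $\bar{x}_{t-1}-x_{i,t-1}$ is a sum of at most $H$ centered stochastic-gradient steps, so by the bounded second moment $\E\|g_{i,t}\|^2\le V_2$ it is $\mathcal{O}(\eta^2 H^2 V_2)$. The second piece is the error-feedback residual: $e_{j,t}$ changes only at synchronizations, where $\|e_{j,t}\|^2 = \|p_{j,t}-\CM_1(p_{j,t})\|^2 \le (1-\delta_1)\|p_{j,t}\|^2$ and $p_{j,t}=e_{j,t-1}+(x_{j,t-\frac{1}{2}}-\hat{x}_{t-1})$ with the bracket an $H$-step accumulated SGD update of norm-square $\mathcal{O}(\eta^2 H^2 V_2)$; a Young split with ratio of order $\delta_1$ turns this into a contraction $\E\|e_{j,t}\|^2 \le (1-\tfrac{\delta_1}{2})\E\|e_{j,t-1}\|^2 + \mathcal{O}(\tfrac{1}{\delta_1}\eta^2 H^2 V_2)$, whose fixed point is $\E\|e_{j,t}\|^2 = \mathcal{O}\big(\tfrac{1-\delta_1^2}{\delta_1^2}\,\eta^2 H^2 V_2\big)$. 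Getting these two estimates to combine with exactly the constant $8\big[\tfrac{4(1-\delta_1^2)}{\delta_1^2}+1\big]$ --- the ``$+1$'' being the bare drift and the $\tfrac{4(1-\delta_1^2)}{\delta_1^2}$ the amplified residual --- is the delicate part, and where I expect most of the care to go.

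\emph{Telescoping.} Feed the previous bounds back into the descent inequality, isolate $\frac{\eta}{2}\E\|\nabla F(\tilde{x}_{t-1})\|^2$, sum over $t\in[T]$ so the function-value terms telescope to $F(\tilde{x}_0)-\E F(\tilde{x}_T)\le F(\hat{x}_0)-F(x_*)$ by Assumption~\ref{asm:global_min}, and divide by $\eta T$. Finally pass from $\tilde{x}_{t-1}$ back to $\bar{x}_{t-1}$ via $\|\nabla F(\bar{x}_{t-1})\|^2\le 2\|\nabla F(\tilde{x}_{t-1})\|^2 + 2L^2\|\frac{1}{n}\sum_i e_{i,t-1}\|^2$, reusing the residual bound. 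Collecting terms yields $\mathcal{O}(1/(\eta T))$ from the telescoped gap, $\mathcal{O}(\eta L V_1/n)$ from the gradient variance, and the $8\big[\tfrac{4(1-\delta_1^2)}{\delta_1^2}+1\big]\eta^2 H^2 L^2 V_2$ term from drift and error feedback, as stated.
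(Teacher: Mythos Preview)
The paper does not supply its own proof of this lemma: immediately before Lemma~\ref{lem:qsparse} the authors write that they ``quote the following results (reformatted to match the notations in this paper) from Theorem~1 of \cite{basu2019qsparse} without proof.'' So there is no in-paper argument to compare your proposal against.

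That said, your outline is exactly the perturbed-iterate route used in the original reference. The virtual sequence $\tilde{x}_t = \bar{x}_t + \tfrac{1}{n}\sum_i e_{i,t}$ is the standard device; your verification that it obeys the clean SGD recursion under Algorithm~\ref{alg:qlsgd-1} is correct (at sync rounds the $\bar p'_t$ terms cancel because $e_{i,t}=p_{i,t}-p'_{i,t}$ and $\bar x_t=\hat x_{t-1}+\bar p'_t$). The descent step, the split of the discrepancy into local-SGD drift plus error-feedback residual, the Young-inequality contraction on $\|e_{j,t}\|^2$ with fixed point of order $\tfrac{1-\delta_1^2}{\delta_1^2}\eta^2 H^2 V_2$, and the final transfer from $\nabla F(\tilde{x}_{t-1})$ back to $\nabla F(\bar{x}_{t-1})$ via smoothness --- all of this matches the structure of \cite{basu2019qsparse}. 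Your caveat that reproducing the exact constant $8\big[\tfrac{4(1-\delta_1^2)}{\delta_1^2}+1\big]$ is where the bookkeeping gets delicate is well placed; the shape of the bound is forced by the argument, but the leading $8$ and the $4$ in the numerator depend on precisely how one chooses the Young-inequality parameters and how many times the factor-of-$2$ splits are invoked.
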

Comparing Lemma~\ref{lem:qsparse} with Theorem~\ref{thm:ersgd}, CSER shows a better error bound than QSparse-local-SGD.
\begin{remark}
Taking $\delta_2 = 0$, and the same $\delta_1$ as QSparse-local-SGD, CSER reduces the compression error to $2\left[ \frac{4(1-\delta_1) }{\delta_1^2} + 1 \right] \eta^2  H^2 L^2 V_2$, compared to $8\left[ \frac{4(1-\delta_1^2) }{\delta_2^2} + 1 \right] \eta^2  H^2 L^2 V_2$ of QSparse-local-SGD. Ignoring the constant factors, the error caused by $\CM_1$ is reduced from $\frac{4(1-\delta_1^2) }{\delta_1^2}$ to $\frac{4(1-\delta_1)}{\delta_1^2}$. 
\end{remark}
Though the eliminated factor $(1+\delta_1)$ seems small, it could lead to significant gaps in the convergence. For example, taking $H=8$ and $\delta_1 = 1/2$, CSER reduces the compression error from $832$ to $576$. 

Furthermore, note that error reset utilizes the local residuals $e_{i,t}$ in a way different from error feedback. Diving deep into the proofs, we find that their compression errors have different sources.
\begin{remark}
The compression error term of the error reset comes from the variance of the local models: $\frac{1}{n} \sum_{i \in [n]} \left\| \frac{1}{n} \sum_{j \in [n]} x_{j, t} - x_{i, t} \right\|^2$, which equals to $\frac{1}{n} \sum_{i \in [n]} \left\| \frac{1}{n} \sum_{j \in [n]} e_{j, t} - e_{i, t} \right\|^2 \leq \frac{1}{n} \sum_{i \in [n]} \left\| e_{i, t} \right\|^2$ using Lemma~\ref{lem:local_diff}. This variance vanishes when $n=1$. However, for error feedback, the compression error is bounded by $\frac{1}{n} \sum_{i \in [n]} \left\| e_{i, t} \right\|^2$, which does not vanish when $n=1$.  
\end{remark}
The remark above shows that error reset always has a smaller error bound compared to error feedback. 
Especially, when using a single worker, CSER is equivalent to SGD with no compression error, while QSparse-local-SGD has the compressor error even using a single worker with $H=1$. 

Besides error reset, CSER introduces partial synchronization for both the gradients and the models. 
By carefully tuning the communication budget between them, the convergence can be improved.

For example, assume that we use $\CM_{GRBS}$ introduced in Definition~\ref{def:grbs} that has a compression ratio $R_{\CM}$ and satisfies $\E[\|\CM_{GRBS}(v) - v\|_2^2] \leq (1 - \frac{1}{R_{\CM}})\|v\|_2^2$. If we put all the budget to model synchronization, and take $H=4$, $\delta_1=1/3$, $\delta_2=0$, the compression error is $\left[ \frac{4(1-\delta_1) }{\delta_1^2} + 1 \right] \eta^2  H^2 L^2 V_2 = 400 \eta^2 L^2 V_2$. However, if we move some 
budget to gradient synchronization and take $H=12, \delta_1=7/8, \delta_2 = 1/96$, the overall compression budget remains the same, but the error term is reduced to less than $236 \eta^2 L^2 V_2$.


We also establish the convergence analysis for CSER with Nesterov's momentum.
\begin{theorem}\label{thm:ersgdm}
Taking $\eta \leq \min\{\frac{1}{2}, \frac{1-\beta}{2L}\}$, after $T$ iterations, Algorithm~\ref{alg:ersgdm_1}~(M-CSER) has the following error bound: 
\begin{align*}
\frac{1}{T} \sum_{t=1}^T \E\left[ \left\| \nabla F(\bar{x}_{t-1}) \right\|^2 \right] 
&\leq \frac{2(1-\beta)\left[ F(\bar{x}_0) - F(x_*) \right]}{\eta T} \\
&\quad+ \frac{\eta^2 \beta^4 L^2 V_2}{(1-\beta)^4} 
+ \frac{\eta L V_1}{n (1-\beta)} 
 + \left(\frac{4(1-\delta_1)}{\delta_1^2} + 1\right) \frac{2(1-\delta_2) \eta^2 H^2 L^2 V_2}{(1-\beta)^2}.
\end{align*}
\end{theorem}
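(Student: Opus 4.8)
The plan is to follow the classical auxiliary-sequence (perturbed-iterate) analysis for momentum SGD, isolating all compression effects into a single ``consensus'' error term that is then controlled through the reset variables $e_{i,t}$.

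First I would show that the averaged model $\bar{x}_t=\frac1n\sum_i x_{i,t}$ evolves exactly like centralized Nesterov momentum, with every compression error cancelling in the mean. Since $\mathrm{PSync}(v_i,\CM)$ returns $v_i'=\overline{\CM(v_\cdot)}+r_i$ with $r_i=v_i-\CM(v_i)$, we have $\frac1n\sum_i v_i'=\frac1n\sum_i v_i$; applied to Line~8 of Algorithm~\ref{alg:ersgdm_1} this gives $\frac1n\sum_i x_{i,t-\frac12}=\bar{x}_{t-1}-\eta(\beta\bar{m}_t+\bar{g}_t)$ with $\bar{m}_t=\frac1n\sum_i m_{i,t}$ and $\bar{g}_t=\frac1n\sum_i g_{i,t}$, and a similar cancellation in the error-reset branch (Lines~13--14, which is exactly the identity behind Lemma~\ref{lem:local_diff}) shows $\bar{x}_t=\frac1n\sum_i x_{i,t-\frac12}$ in all cases. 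Then I would introduce the auxiliary sequence $\bar{z}_t:=\bar{x}_t-\frac{\eta\beta^2}{1-\beta}\bar{m}_t$ and verify, using $\bar m_t=\beta\bar m_{t-1}+\bar g_t$, the telescoping identity $\bar{z}_t-\bar{z}_{t-1}=-\frac{\eta}{1-\beta}\bar{g}_t$; that is, $\bar z_t$ performs a plain stochastic-gradient step of size $\frac{\eta}{1-\beta}$ on $\bar g_t=\frac1n\sum_i\nabla f(x_{i,t-1};z_{i,t})$, with $\bar z_0=\bar x_0$ since $m_{i,0}=\0$.

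Next I would run the standard descent argument: apply $L$-smoothness of $F$ at $\bar z_t$ along the step $\bar z_t-\bar z_{t-1}$, take conditional expectation given the history $\mathcal{F}_{t-1}$ (using $\E[\bar g_t\mid\mathcal{F}_{t-1}]=\frac1n\sum_i\nabla F_i(x_{i,t-1})$ and $\E\|\bar g_t-\frac1n\sum_i\nabla F_i(x_{i,t-1})\|^2\le\frac{V_1}{n}$ by independence across workers), and convert $\nabla F(\bar z_{t-1})$ and $\frac1n\sum_i\nabla F_i(x_{i,t-1})$ into $\nabla F(\bar x_{t-1})$ via Young's inequality. This produces two error sources: (i) the gap $\bar z_{t-1}-\bar x_{t-1}=\frac{\eta\beta^2}{1-\beta}\bar m_{t-1}$, bounded with $\E\|\bar m_{t-1}\|^2\le\frac{V_2}{(1-\beta)^2}$ (from $m_{i,t}=\sum_{s\le t}\beta^{t-s}g_{i,s}$, Cauchy--Schwarz, and $\E\|g_{i,s}\|^2\le V_2$), which contributes exactly the $\frac{\eta^2\beta^4 L^2 V_2}{(1-\beta)^4}$ term; and (ii) the model discrepancy $\frac1n\sum_i\|x_{i,t-1}-\bar x_{t-1}\|^2$.

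For (ii), Lemma~\ref{lem:local_diff} gives $x_{i,t}-\bar x_t=e_{i,t}-\bar e_t$, so $\frac1n\sum_i\|x_{i,t}-\bar x_t\|^2\le\frac1n\sum_i\|e_{i,t}\|^2$, and it remains to bound $\E\|e_{i,t}\|^2$. Between two consecutive resets $e_{i,t}$ accumulates at most $H$ residuals $r_{i,s}$ of $\CM_2$, each with $\E\|r_{i,s}\|^2\le(1-\delta_2)\E\|p_{i,s}\|^2=(1-\delta_2)\eta^2\E\|\beta m_{i,s}+g_{i,s}\|^2\le(1-\delta_2)\frac{2\eta^2 V_2}{(1-\beta)^2}$ (using $\beta^2+(1-\beta)^2\le1$), contributing an $\mathcal{O}\big(H^2(1-\delta_2)\eta^2 V_2/(1-\beta)^2\big)$ piece; at each reset, Line~13 contracts the carried error, $\E\|e_{i,kH}\|^2\le(1-\delta_1)\E\|e_{i,kH-\frac12}\|^2$, which, combined with a Young split over one window, yields a geometric recursion $E_k\le(1-\tfrac{\delta_1}{2})E_{k-1}+\mathcal{O}\big(\tfrac{1-\delta_1}{\delta_1}H^2(1-\delta_2)\eta^2 V_2/(1-\beta)^2\big)$ whose fixed point is of order $\tfrac{1-\delta_1}{\delta_1^2}H^2(1-\delta_2)\eta^2 V_2/(1-\beta)^2$; adding the in-window and reset contributions reproduces the factor $\big(\tfrac{4(1-\delta_1)}{\delta_1^2}+1\big)$. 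Finally I would telescope the per-step descent inequality over $t=1,\dots,T$, use $F(\bar z_T)\ge F(x_*)$ and $\bar z_0=\bar x_0$, impose $\eta\le\min\{\tfrac12,\tfrac{1-\beta}{2L}\}$ to absorb the curvature term $\frac{L\eta^2}{2(1-\beta)^2}\|\bar g_t\|^2$ into the negative $\|\nabla F(\bar x_{t-1})\|^2$ term, divide by $T$, and rearrange to reach the stated bound. The main obstacle is the bookkeeping in (ii): tracking $\E\|e_{i,t}\|^2$ through the interleaving of per-step $\CM_2$-residual accumulation and the periodic $\CM_1$ contraction while threading the momentum amplification $1/(1-\beta)^2$ through every term and matching the claimed constants; some care is also needed about which of $\CM_1,\CM_2,e_{i,t},m_{i,t}$ are $\mathcal{F}_{t-1}$-measurable when taking conditional expectations.
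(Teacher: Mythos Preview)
Your proposal is essentially the paper's own argument: the same auxiliary sequence $\bar z_t=\bar x_t-\frac{\eta\beta^2}{1-\beta}\bar m_t$, the same identity $\bar z_t-\bar z_{t-1}=-\frac{\eta}{1-\beta}\bar g_t$, the same smoothness-descent step with the $\mcir{5}$/$\mcir{6}$ split, and the same reduction of the consensus error to $\frac1n\sum_i\E\|e_{i,t}\|^2$ followed by a contractive recursion across reset rounds. One place where your constants slip relative to the statement: your bound $\E\|\beta m_{i,s}+g_{i,s}\|^2\le \frac{2V_2}{(1-\beta)^2}$ via $\|a+b\|^2\le 2\|a\|^2+2\|b\|^2$ loses a factor of $2$; the paper instead writes $\beta m_{i,s}+g_{i,s}=\sum_{\tau\le s}\beta^{s-\tau}g_{i,\tau}$ as a convex combination (with weights $\beta^{s-\tau}/s_s$, $s_s=\sum_{\tau=0}^s\beta^\tau$) and applies Jensen to get $\E\|p_{i,s}\|^2\le\frac{\eta^2 V_2}{(1-\beta)^2}$, which is what is needed to land on the stated $2(1-\delta_2)$ coefficient. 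Also, the curvature and Young residual terms $\frac{L\eta^2}{2(1-\beta)^2}\|\cdot\|^2$ and $\frac{\eta^2}{2(1-\beta)}\|\cdot\|^2$ are absorbed into the \emph{negative} $\|\frac1n\sum_i\nabla F_i(x_{i,t-1})\|^2$ term (not directly into $\|\nabla F(\bar x_{t-1})\|^2$), which is exactly where the pair of conditions $\eta\le\tfrac12$ and $\eta\le\tfrac{1-\beta}{2L}$ is used.
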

Note that larger $\beta$ leads to faster escape from the initial point, but worse asymptotic performance.



\begin{corollary}\label{coro:ersgdm}
Taking $\eta = \min \left\{ \frac{\gamma}{\sqrt{T / n} + \left[ 2 \left( 4(1-\delta_1) / \delta_1^2 + 1 \right) (1-\delta_2)  H^2 + 1 \right]^{1/3} T^{1/3}}, \frac{1}{2} \right\}$ for some $\gamma > 0$, after $T \geq \frac{4 \gamma^2 L^2 n}{(1-\beta)^2}$ iterations, Algorithm~\ref{alg:ersgdm_1}~(M-CSER) converges to a critical point: 
\begin{align*}
\frac{1}{T} \sum_{t=1}^T \E\left[ \left\| \nabla F(\bar{x}_{t-1}) \right\|^2 \right] 
\leq \mathcal{O}\left( \frac{1}{\sqrt{nT}} \right).
\end{align*}
\end{corollary}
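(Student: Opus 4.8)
The plan is to substitute the prescribed step size into the bound of Theorem~\ref{thm:ersgdm} and track how each of its four terms scales with $T$ (and $n$). Abbreviate $C \coloneqq 2\bigl(\tfrac{4(1-\delta_1)}{\delta_1^2}+1\bigr)(1-\delta_2)H^2 + 1$, the constant sitting under the cube root in the definition of $\eta$, so that $\eta = \min\bigl\{\tfrac{\gamma}{\sqrt{T/n}+C^{1/3}T^{1/3}},\,\tfrac12\bigr\}$.

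First I would check that this $\eta$ is admissible for Theorem~\ref{thm:ersgdm}, i.e.\ that $\eta \le \min\{\tfrac12,\tfrac{1-\beta}{2L}\}$. The bound $\eta\le\tfrac12$ is immediate from the outer minimum; and since $\tfrac{\gamma}{\sqrt{T/n}+C^{1/3}T^{1/3}}\le\gamma\sqrt{n/T}$, the hypothesis $T\ge\tfrac{4\gamma^2L^2n}{(1-\beta)^2}$ is exactly what forces $\gamma\sqrt{n/T}\le\tfrac{1-\beta}{2L}$. Hence Theorem~\ref{thm:ersgdm} applies verbatim with this step size.

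Next I would bound the four terms, using the two one-sided estimates $\eta\le\gamma\sqrt{n/T}$ and $\eta\le\gamma C^{-1/3}T^{-1/3}$ that follow from the definition of $\eta$, together with $\tfrac1\eta\le\tfrac1\gamma(\sqrt{T/n}+C^{1/3}T^{1/3})+2$ (the additive $2$ comes from the $\tfrac12$ cap and contributes only an $\mathcal{O}(1/T)$ tail). Substituting: the initialization term $\tfrac{2(1-\beta)[F(\bar x_0)-F(x_*)]}{\eta T}$ becomes $\mathcal{O}\bigl(\tfrac1{\sqrt{nT}}+\tfrac{C^{1/3}}{T^{2/3}}\bigr)$; the gradient-variance term $\tfrac{\eta L V_1}{n(1-\beta)}$ is at most $\tfrac{\gamma L V_1}{1-\beta}\cdot\tfrac1{\sqrt{nT}}$; the momentum term $\tfrac{\eta^2\beta^4 L^2V_2}{(1-\beta)^4}$ is at most $\tfrac{\gamma^2\beta^4 L^2V_2}{(1-\beta)^4}\cdot\tfrac{n}{T}$; and the compression term, which equals $\tfrac{(C-1)\eta^2 L^2V_2}{(1-\beta)^2}\le\tfrac{C\eta^2 L^2V_2}{(1-\beta)^2}$, is at most $\tfrac{\gamma^2 L^2V_2}{(1-\beta)^2}\cdot\tfrac{C^{1/3}}{T^{2/3}}$ — here the cube-root factor built into $\eta$ is precisely what leaves a clean $T^{-2/3}$ behind.

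Collecting the pieces, terms 1 and 3 contribute $\mathcal{O}(1/\sqrt{nT})$, the momentum term contributes $\mathcal{O}(n/T)$, and the leftover compression/initialization pieces contribute $\mathcal{O}(1/T^{2/3})$; the last two are dominated by $\mathcal{O}(1/\sqrt{nT})$ once $T$ is large relative to $n$ (e.g.\ in the regime $T\gtrsim n^3$, or treating $n$ and all problem constants as fixed and letting $T\to\infty$), which gives the claimed rate and mirrors the argument for Corollary~\ref{coro:ersgd}. The main obstacle is careful bookkeeping rather than any real difficulty, with two points meriting attention: (i) that the $T$-threshold in the hypothesis is precisely what makes the prescribed $\eta$ satisfy the step-size restriction $\eta\le\tfrac{1-\beta}{2L}$ of Theorem~\ref{thm:ersgdm}, and (ii) that the $C^{1/3}$ factor deliberately placed in $\eta$ is what cancels the compression constant $\bigl(\tfrac{4(1-\delta_1)}{\delta_1^2}+1\bigr)(1-\delta_2)H^2$, so that no residual dependence on $\delta_1,\delta_2,H$ survives to spoil the linear-speedup claim. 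It is also worth being explicit that the momentum term carries no $1/n$, so it is that term (of order $n/T$) which forces the requirement that $T$ be large relative to $n$ if the $\mathcal{O}(1/\sqrt{nT})$ statement is to represent a genuine speedup.
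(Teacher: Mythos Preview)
Your proposal is correct and follows essentially the same route as the paper: define the same constant $C$, verify the step-size constraint via the $T$-threshold, and substitute into Theorem~\ref{thm:ersgdm}. The one minor bookkeeping difference is that the paper absorbs the momentum term $\tfrac{\eta^2\beta^4 L^2 V_2}{(1-\beta)^4}$ into the compression term by using $\beta^4\le 1$ and $(1-\beta)^{-2}\le(1-\beta)^{-4}$, so that both $\eta^2$-terms together become $\tfrac{C\eta^2 L^2 V_2}{(1-\beta)^4}$ (this is precisely why there is a ``$+1$'' in $C$); applying $\eta\le\gamma C^{-1/3}T^{-1/3}$ then yields a clean $C^{1/3}/T^{2/3}$ for the whole block. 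You instead bound the momentum term separately via $\eta\le\gamma\sqrt{n/T}$, getting $\mathcal{O}(n/T)$, which is a perfectly valid alternative and leads to the same $\mathcal{O}(1/\sqrt{nT})$ conclusion in the large-$T$ regime; your observation that this is the term forcing $T\gtrsim n^3$ is apt (and in fact the paper's $T^{-2/3}$ pieces impose the same threshold). Your handling of the $\tfrac12$ cap via an additive constant in $1/\eta$ is also slightly more careful than the paper's, which simply writes $\tfrac1\eta\le\tfrac{\sqrt{T/n}+C^{1/3}T^{1/3}}{\gamma}$.
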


Similar to CSER, Corollary~\ref{coro:ersgdm} shows that Algorithm~\ref{alg:ersgdm_1}~(M-CSER) converges to a critical point at the rate $\mathcal{O}\left( \frac{1}{\sqrt{nT}} \right)$.
Increasing the number of workers $n$ accelerates the convergence. 


\section{Experiments}
In this section, we report the empirical results in a distributed environment.

\subsection{Evaluation setup}

We compare our algorithms with 3 baselines: SGD with full precision~(SGD in brief), EF-SGD, and QSparse-local-SGD.
We use momentum to accelerate the training in all the experiments, though QSparse-local-SGD with momentum does not have convergence guarantees in its original paper~\cite{basu2019qsparse}. 

We conduct experiments on two image classification benchmarks: CIFAR-100~\cite{krizhevsky2009learning}, and ImageNet dataset~\citep{russakovsky2015imagenet}, in a cluster of 8 machines where each machine has 1 NVIDIA V100 GPU and up to 10 Gb/s networking bandwidth. 
Each experiment is repeated 5 times. 

For CIFAR-100, we use the wide residual network~(Wide-ResNet-40-8, \cite{zagoruyko2016wide}). We set weight decay to 0.0005, momentum to 0.9, and minibatch size to 16 per worker. We decay the learning rates by 0.2 at 60, 120 and 160 epochs, and train for 200 epochs. The initial learning rate is varied in $\{0.05, 0.1, 0.5, 1.0\}$.

For ImageNet, we use a 50-layer ResNet~\cite{he2016identity}. We set weight decay to 0.0001, momentum to 0.9, and minibatch size to 32 per worker. We use a learning rate schedule consisting of 5 epochs of linear warmup, followed by a cosine-annealing learning-rate decay~\cite{loshchilov2016sgdr}, and train for total 120 epochs. We enumerate the initial learning rates in $\{0.025, 0.05, 0.1, 0.5\}$.

For all the algorithms, we test the performance with different overall compression ratios~($R_{\CM}$). We use the globally-randomized blockwise sparsifier~(GRBS) as the compressor, as proposed in Section~\ref{sec:grbs}. Note that CSER has not only two different compressors with compression ratios $R_{\CM_1}$ and $R_{\CM_2}$ respectively, but also the synchronization interval $H$. The overall compression ratio $R_{\CM}$ of CSER is $\frac{1}{1/R_{\CM_2} + 1/(R_{\CM_1} \times H)}$. For QSparse-local-SGD, its overall $R_{\CM}$ is $R_{\CM_1} \times H$. Note that QSparse-local-SGD is reduced to local SGD when taking $R_{\CM_1} = 1$, which is also tested in our experiments. The detailed configurations of of $H$, $R_{\CM_1}$, and $R_{\CM_2}$ can be found in Appendix C.

Due to brevity we show only high compression ratio results. Appendix D shows further results. 


\subsection{Empirical results}
\label{sec:experiments}

Table~\ref{tbl:cifar100} presents the test accuracy on CIFAR-100 with various compression ratios. We evaluate not only CSER, but also the other two special cases: CSEA and CSER-PL. The details of the special cases could be found in Appendix A. Note that for CSER, CSER-PL,  and QSparse-local-SGD with the same overall $R_{\CM}$, the configurations of $H$, $R_{\CM_1}$, and $R_{\CM_2}$ are not unique. We try multiple configurations and report the ones perform best on the training loss. 

\begin{table}[htb!]
\caption{Testing accuracy (\%) on CIFAR-100 with different overall compression ratios~($R_{\CM}$). Note that fully synchronous SGD does not have compression, thus $R_{\CM}=1$, and all the other algorithms do not have the fully synchronous cases, thus $R_{\CM} \geq 2$. 
}
\label{tbl:cifar100}
\begin{center}
\begin{small}
\begin{tabular}{|r|r|r|r|r|r|r|}
\hline
   & \multicolumn{3}{|c|}{Baseline} & \multicolumn{3}{|c|}{Proposed algorithm} \\ \hline
   Optimizer/  & \multicolumn{1}{|c|}{SGD}            & \multicolumn{1}{|c|}{EF-SGD}         & \multicolumn{1}{|c|}{QSparse-local}   & \multicolumn{1}{|c|}{CSEA}         & \multicolumn{1}{|c|}{CSER}         & \multicolumn{1}{|c|}{CSER-PL}
   \\ $R_{\CM}$ & & & -SGD & & & \\ \hline
              1 & \textbf{87.01$\pm$0.11} & \strike{|c|}{} & \strike{|c|}{}      & \strike{|c|}{} & \strike{|c|}{}          & \strike{|c|}{}          \\ \hline
              2 & \strike{|c|}{}          & 87.20$\pm$0.10 & 87.16$\pm$0.03      & 87.17$\pm$0.21 & \textbf{87.47$\pm$0.03} & \strike{|c|}{}          \\ \hline
              4 & \strike{|c|}{}          & 86.97$\pm$0.08 & 87.08$\pm$0.22      & 87.25$\pm$0.23 & 87.22$\pm$0.03          & \textbf{87.33$\pm$0.05} \\ \hline
              8 & \strike{|c|}{}          & 86.61$\pm$0.23 & 87.15$\pm$0.10      & 87.14$\pm$0.05 & 87.09$\pm$0.05          & \textbf{87.27$\pm$0.04} \\ \hline
             16 & \strike{|c|}{}          & 85.69$\pm$0.31 & 87.02$\pm$0.13      & 87.15$\pm$0.09 & \textbf{87.28$\pm$0.04} & 86.72$\pm$0.05          \\ \hline
             32 & \strike{|c|}{}          & 85.17$\pm$0.12 & 86.70$\pm$0.04      & 86.83$\pm$0.20 & 86.90$\pm$0.15          & \textbf{86.92$\pm$0.26} \\ \hline
             64 & \strike{|c|}{}          & 84.65$\pm$0.07 & 80.64$\pm$0.47      & 86.63$\pm$0.16 & 86.78$\pm$0.11          & \textbf{86.91$\pm$0.15} \\ \hline
            128 & \strike{|c|}{}          & 83.50$\pm$0.87 & 70.27$\pm$2.37      & 86.30$\pm$0.15 & \textbf{86.81$\pm$0.17} & 86.36$\pm$0.21          \\ \hline
            256 & \strike{|c|}{}          & 83.92$\pm$0.55 & diverge             & 86.34$\pm$0.20 & \textbf{86.68$\pm$0.07} & 86.27$\pm$0.02          \\ \hline
            512 & \strike{|c|}{}          & 76.05$\pm$0.56 & diverge             & 85.75$\pm$0.34 & \textbf{86.20$\pm$0.09} & 85.68$\pm$0.12          \\ \hline
           1024 & \strike{|c|}{}          & diverge        & diverge             & 85.13$\pm$0.13 & \textbf{85.66$\pm$0.07} & 84.94$\pm$0.37          \\ \hline
\end{tabular}
\end{small}
\end{center}
\end{table}

\begin{figure*}[htb!]
\centering
\subfigure{\includegraphics[width=0.6\textwidth]{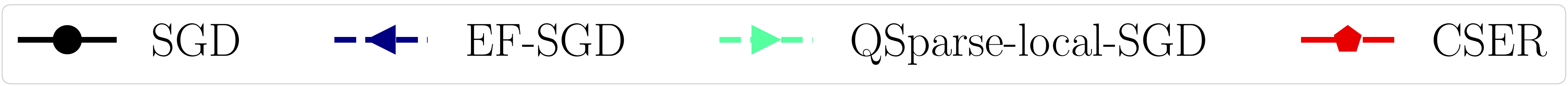}}\vspace{-0.35cm}\\
\setcounter{subfigure}{0}
\subfigure[$R_{\CM}=32$]{\includegraphics[width=0.32\textwidth]{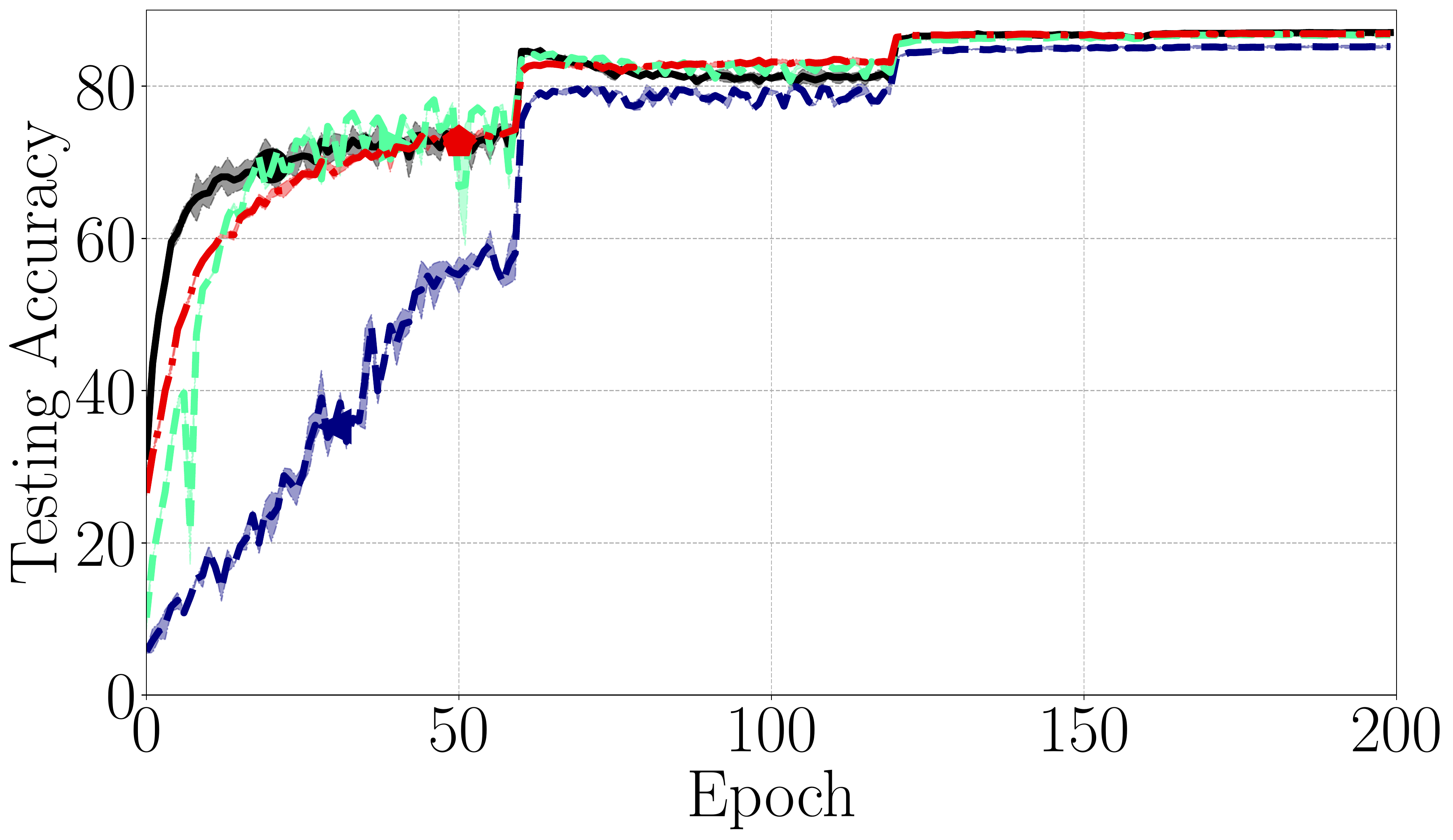}}
\subfigure[$R_{\CM}=256$]{\includegraphics[width=0.32\textwidth]{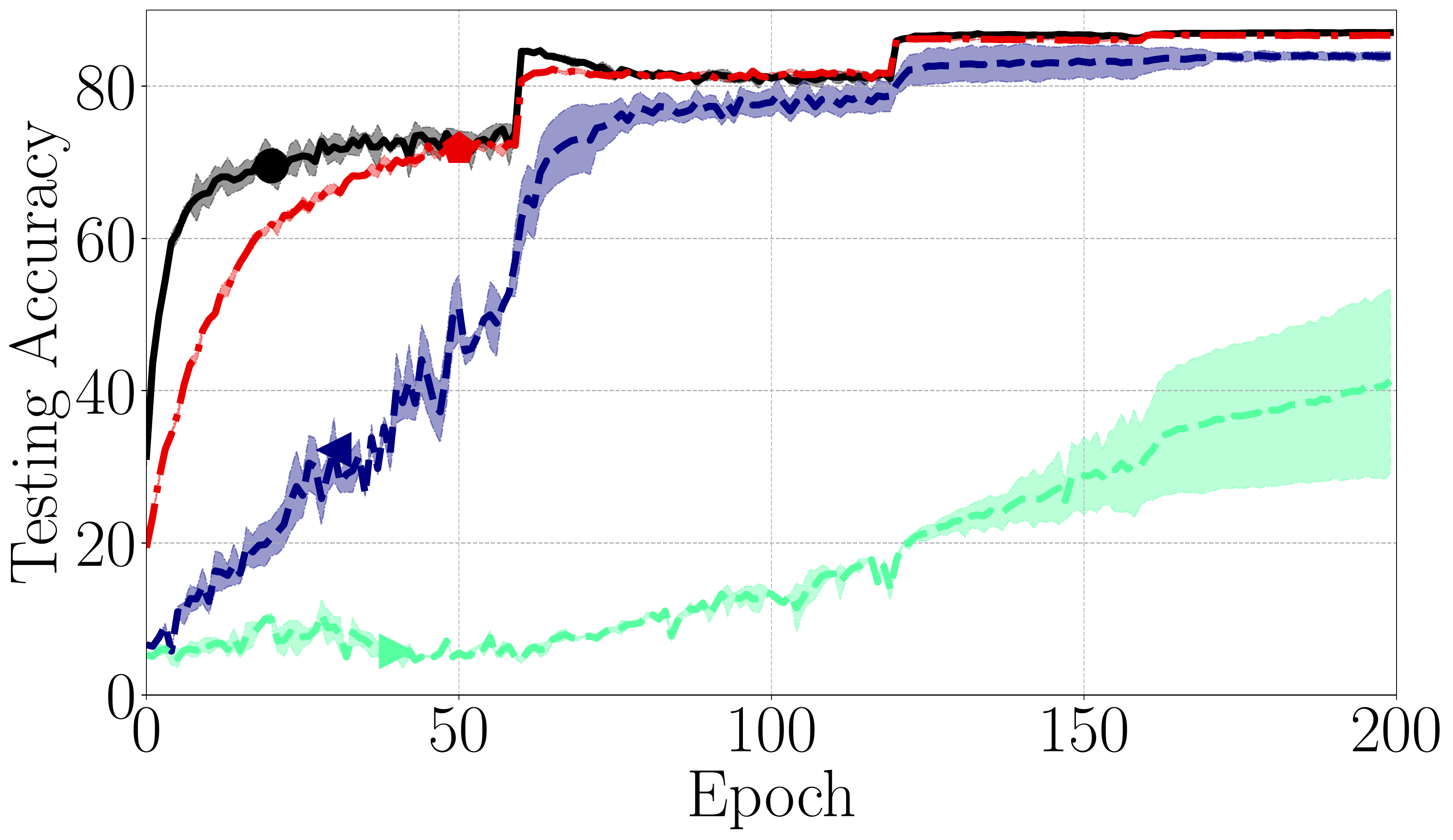}}
\subfigure[$R_{\CM}=1024$]{\includegraphics[width=0.32\textwidth]{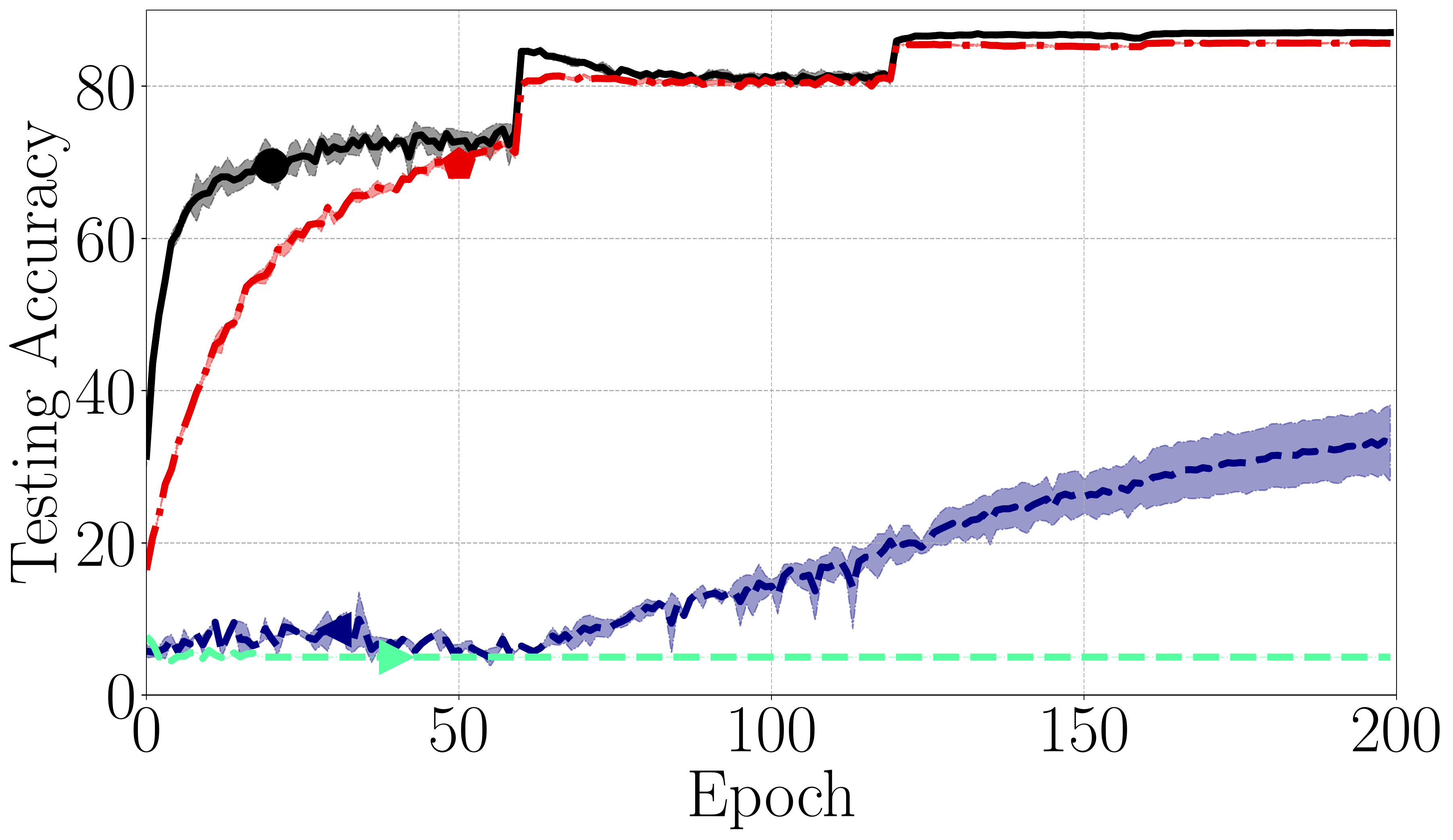}}
\subfigure[$R_{\CM}=32$]{\includegraphics[width=0.32\textwidth]{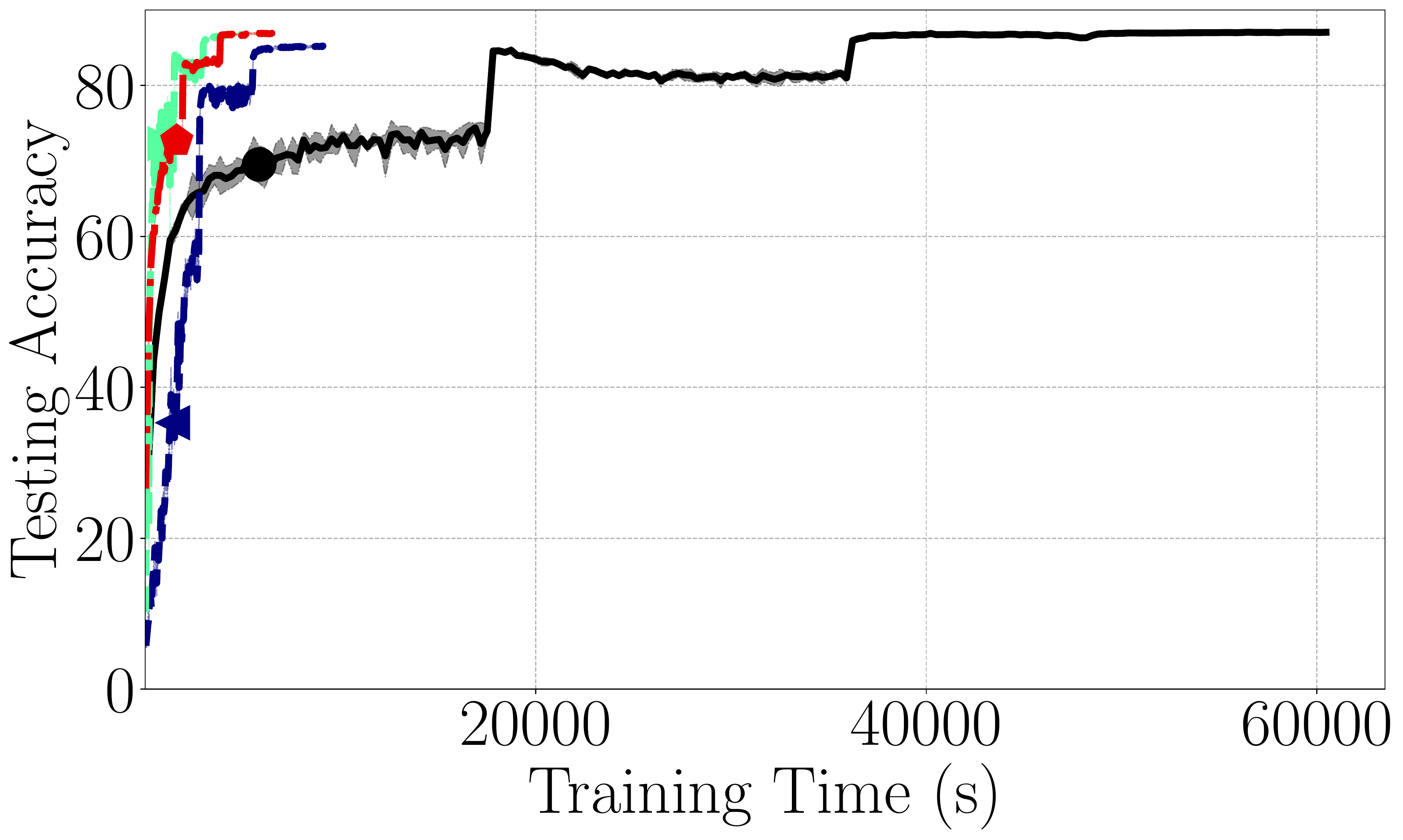}}
\subfigure[$R_{\CM}=256$]{\includegraphics[width=0.32\textwidth]{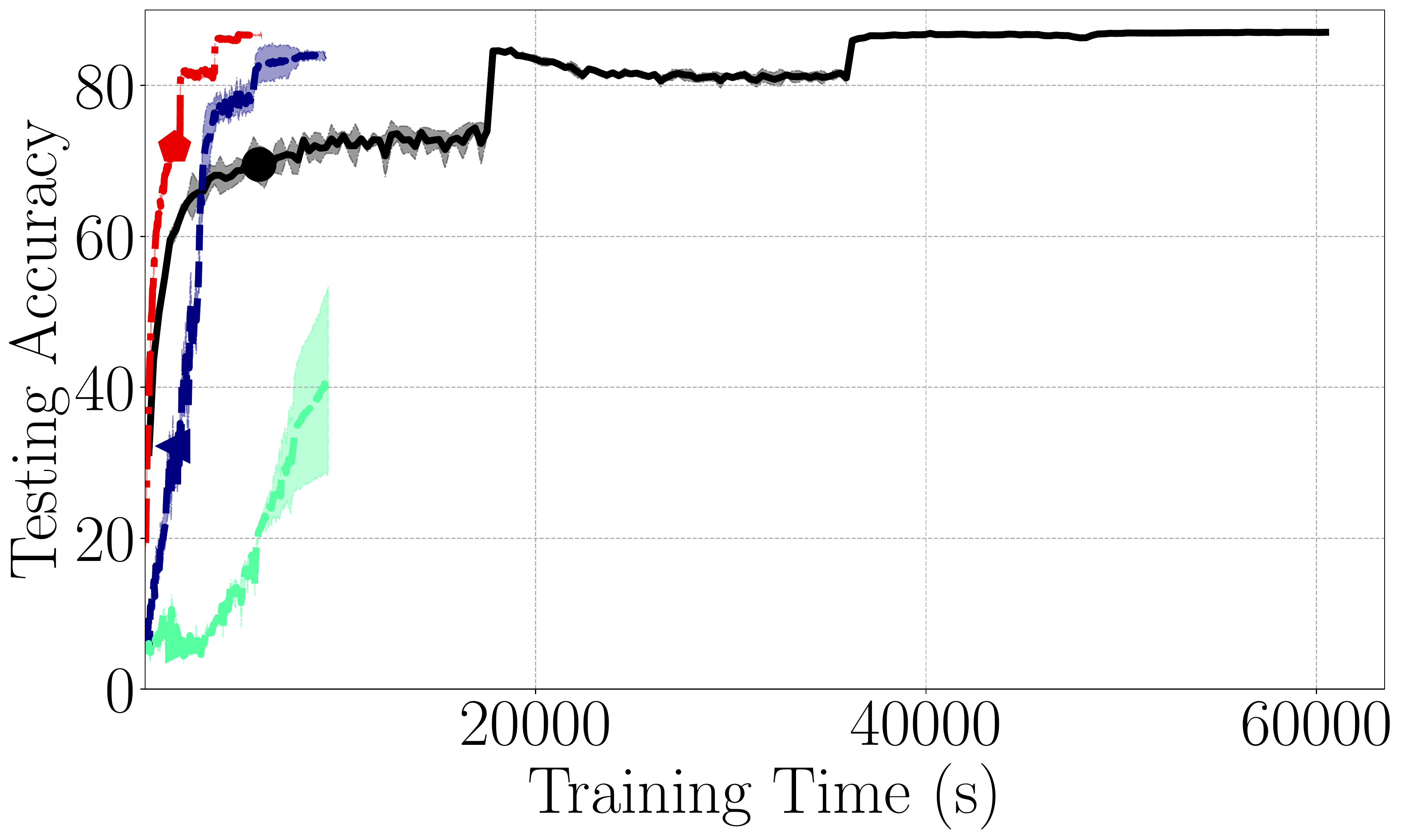}}
\subfigure[$R_{\CM}=1024$]{\includegraphics[width=0.32\textwidth]{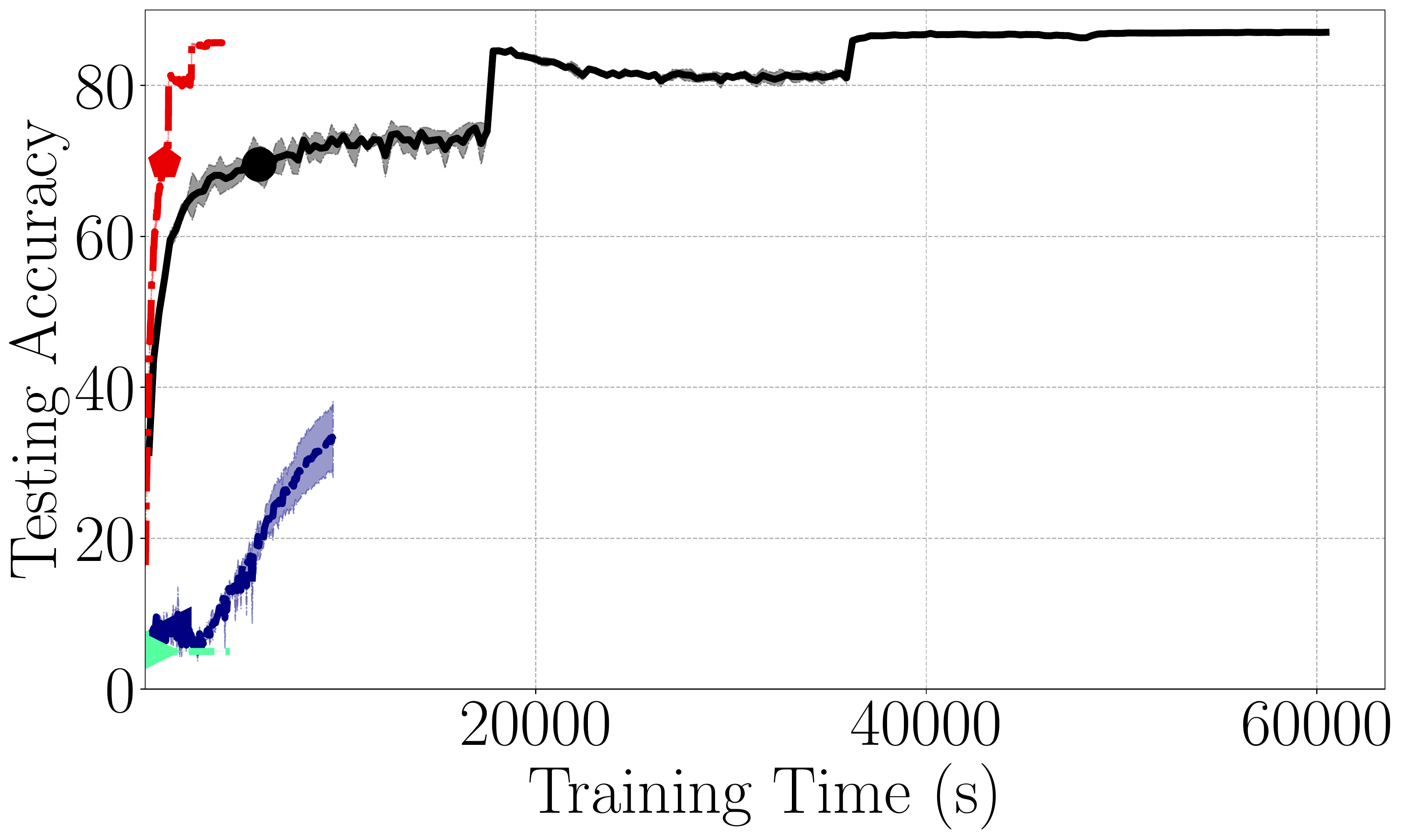}}
\caption{Testing accuracy with different algorithms, for WideResNet-40-8 on CIFAR-100.}
\label{fig:cifar100}
\end{figure*}

\begin{figure*}[htb!]
\centering
\subfigure{\includegraphics[width=0.6\textwidth]{result_val_epoch_imagenet_ersgd_legend.pdf}}\vspace{-0.35cm}\\ \setcounter{subfigure}{0}
\subfigure[$R_{\CM}=32$]{\includegraphics[width=0.32\textwidth]{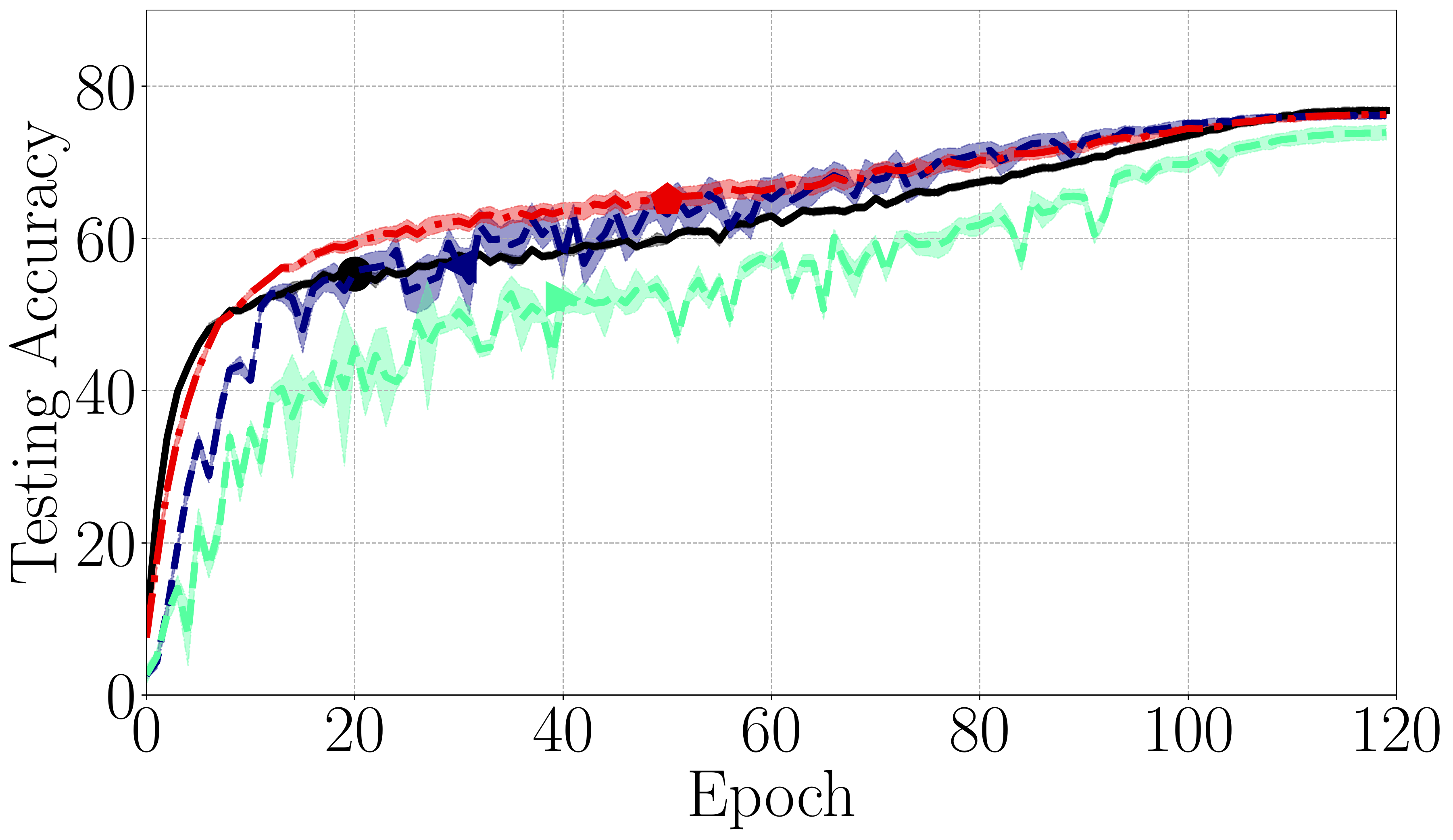}}
\subfigure[$R_{\CM}=256$]{\includegraphics[width=0.32\textwidth]{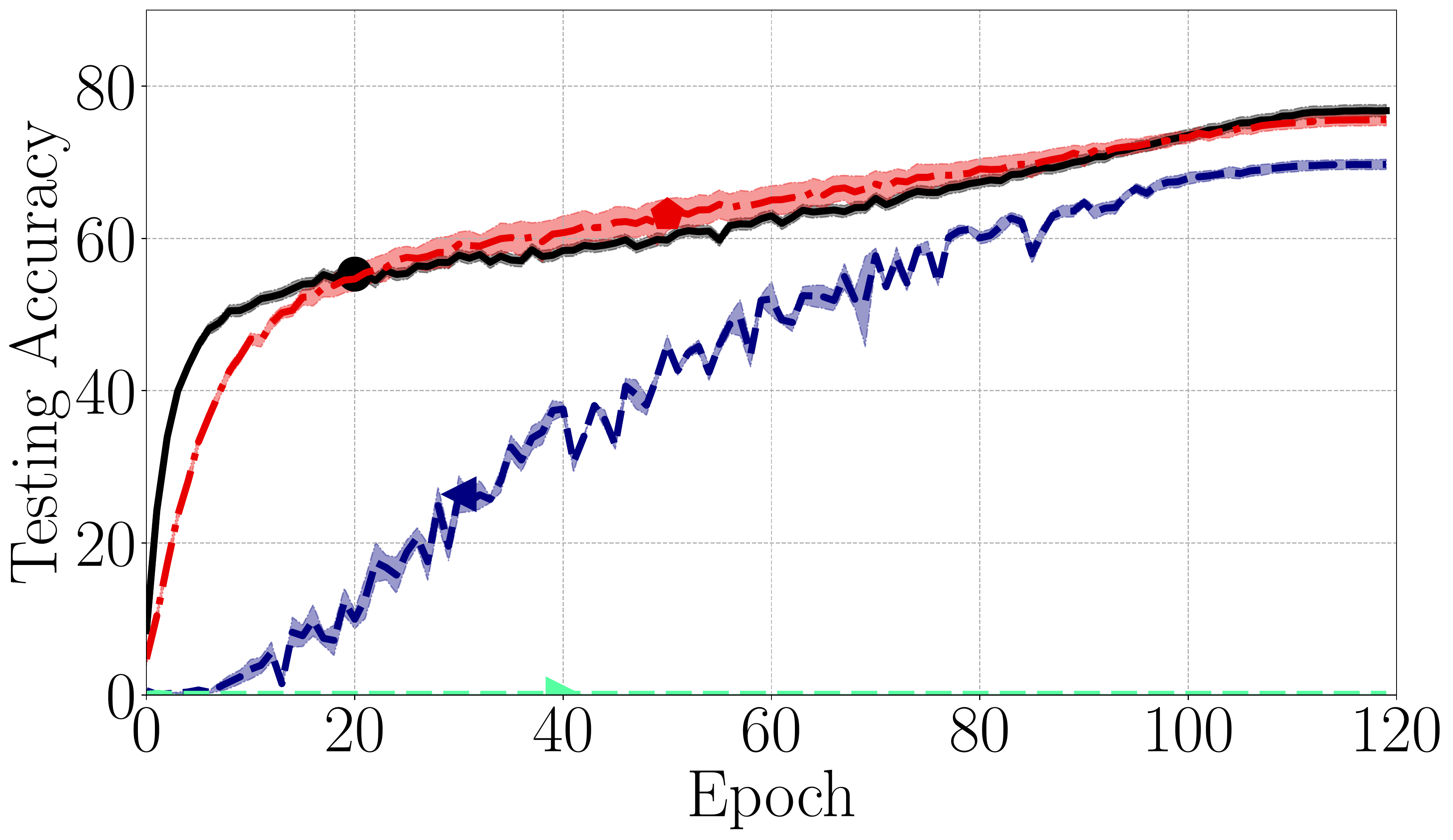}}
\subfigure[$R_{\CM}=1024$]{\includegraphics[width=0.32\textwidth]{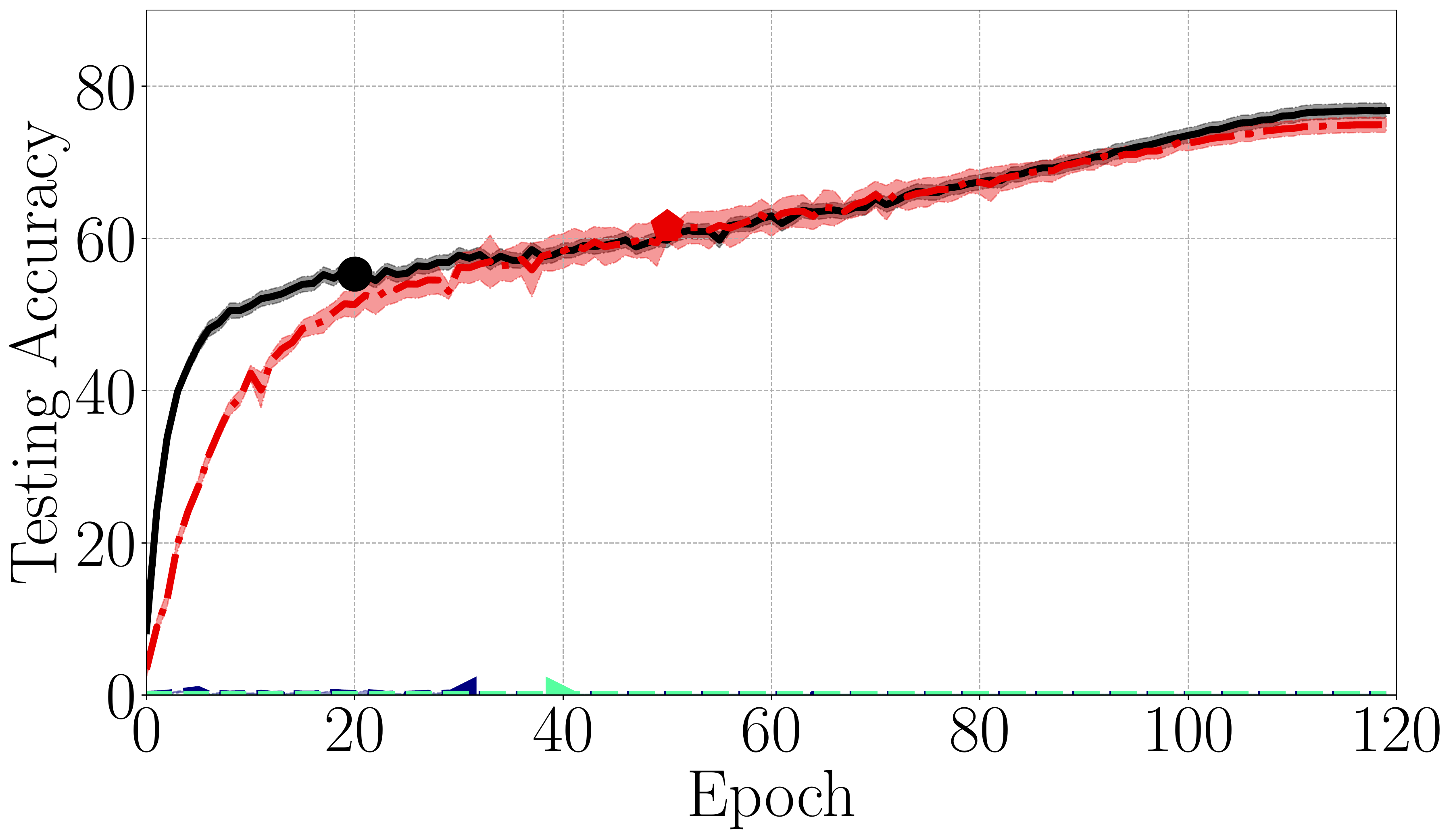}}
\subfigure[$R_{\CM}=32$]{\includegraphics[width=0.32\textwidth]{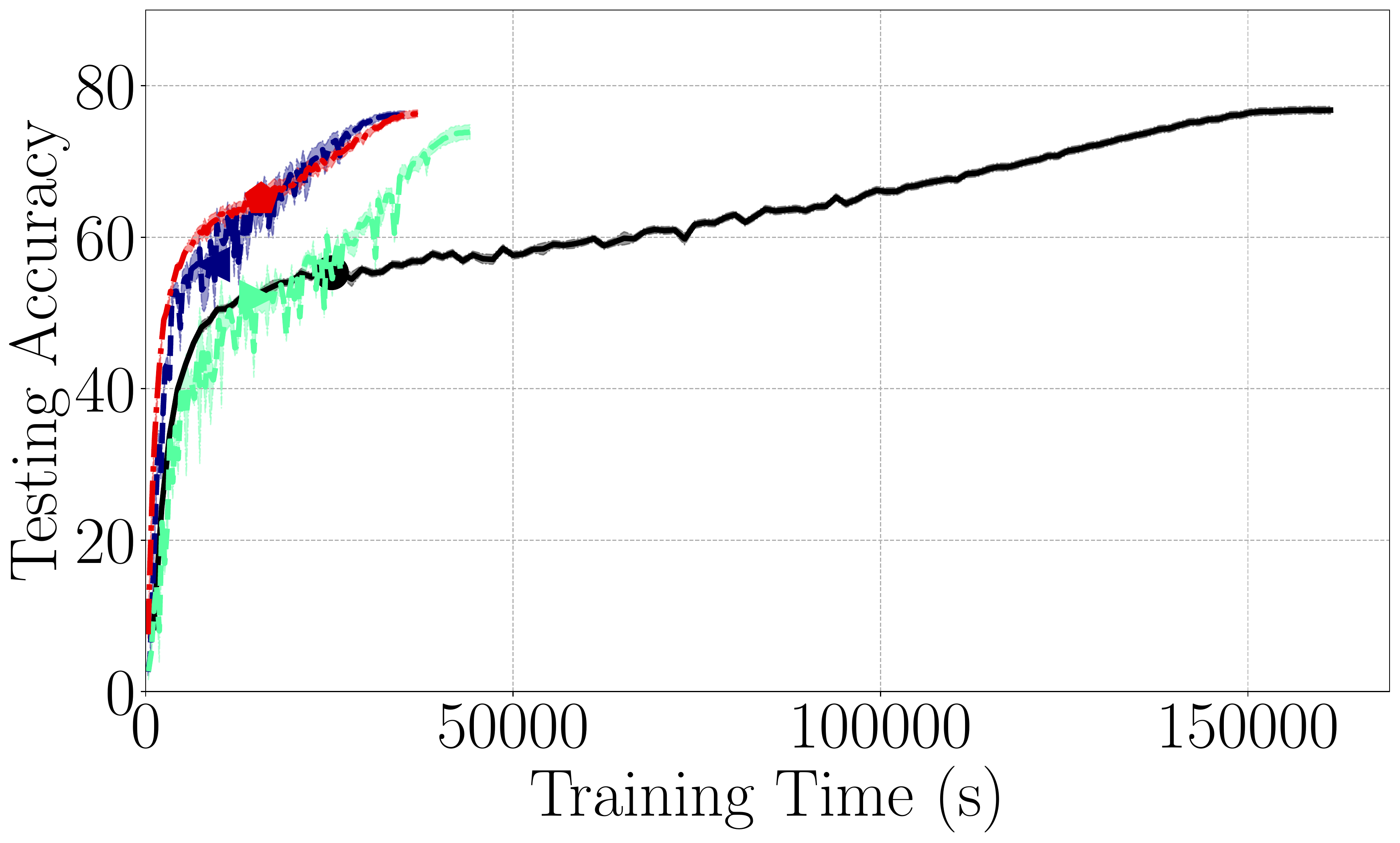}}
\subfigure[$R_{\CM}=256$]{\includegraphics[width=0.32\textwidth]{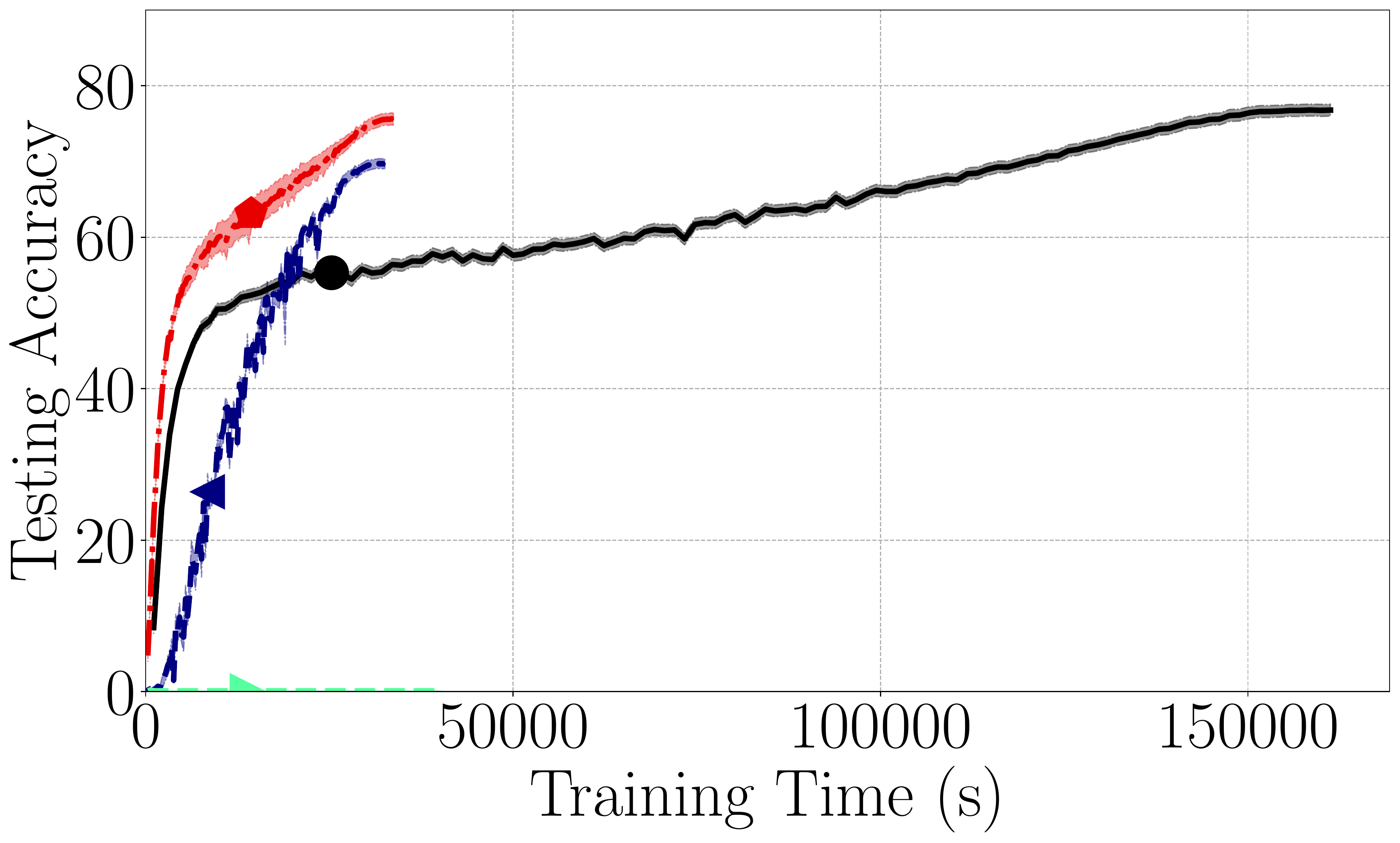}}
\subfigure[$R_{\CM}=1024$]{\includegraphics[width=0.32\textwidth]{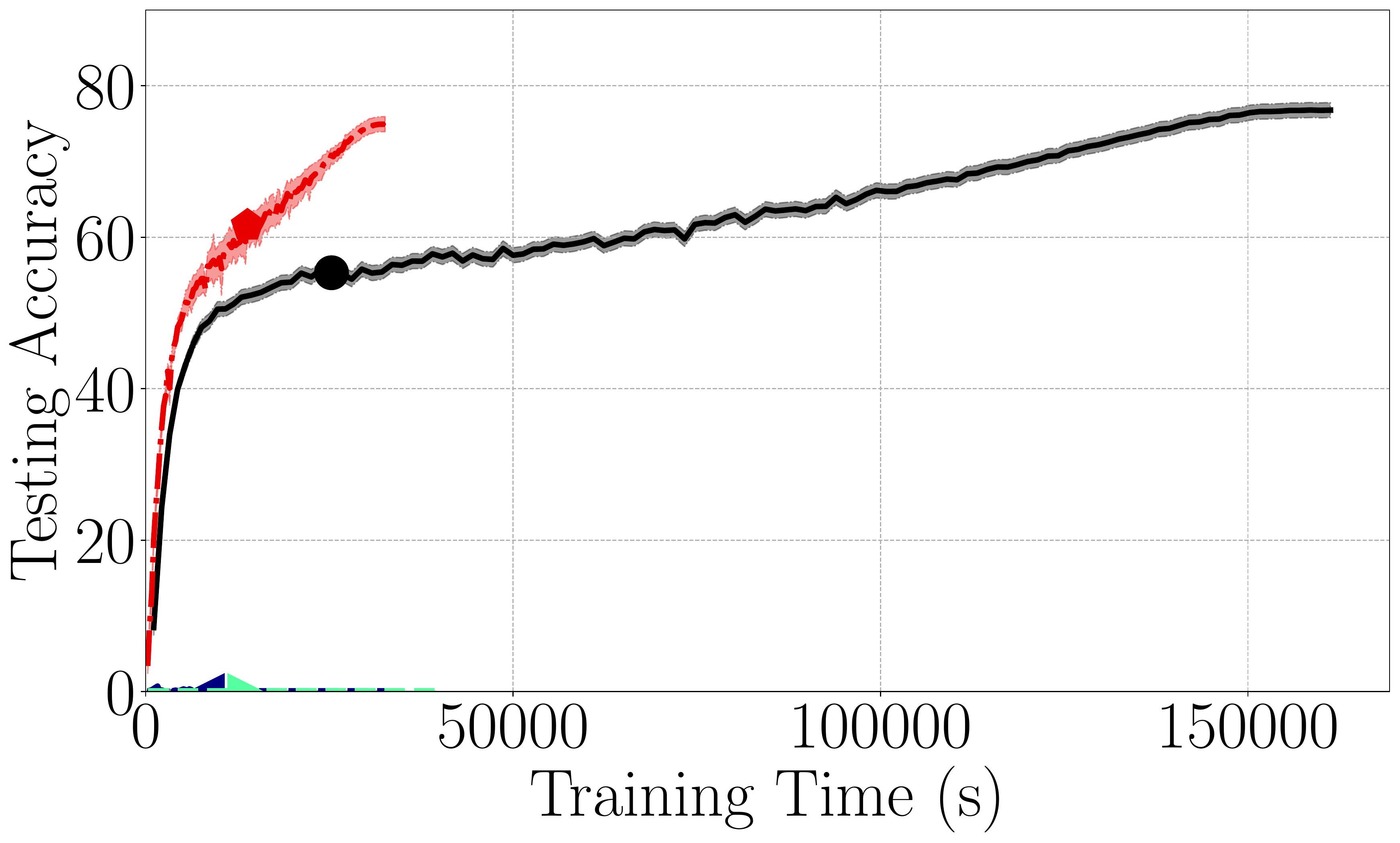}}
\caption{Testing accuracy with different algorithms, for ResNet-50 on ImageNet.}
\label{fig:imagenet}
\end{figure*}

In Figure~\ref{fig:cifar100} and Figure~\ref{fig:imagenet}, we show the test accuracy on CIFAR-100 and ImageNet respectively,  with the overall compression ratios in $\{32, 256, 1024\}$. 
Since the experiments on ImageNet are expensive, we do not tune different configurations of compressors~($H, R_{\CM_1}, R_{\CM_2}$) for each overall $R_{\CM}$ on ImageNet, but directly use the best configurations tuned on CIFAR-100.


\subsection{Discussion}
We can see that in all the experiments, with the same compression ratio, CSER shows better performance than the baselines. When the compression ratio is small enough~($\leq 16$), the test accuracy is even better than fully synchronous SGD on CIFAR-100. When $R_{\CM} \leq 32$, for CIFAR-100, QSparse-local-SGD has comparable performance to CSER or its special cases. 
Even with very large compression ratio~($\approx 256$), the proposed algorithm can achieve comparable accuracy to SGD with full precision. CSER  accelerates training by $10 \times$ for CIFAR-100, and $4.5 \times$ for ImageNet. 

Compared to EF-SGD, CSER shows much better performance when the compression ratio is large~($R_{\CM} \geq 64$). We can see that CSER fixes the convergence issue in EF-SGD and QSparse-local-SGD when aggressive compressors are used, as discussed in Section~\ref{sec:ersgd}. For ImageNet with $R_C=1024$, even if we decrease the learning rates to $0.025$, EF-SGD and QSparse-local-SGD still diverge, while CSER still converges well with even larger learning rates.

Note that in most cases, CSER performs better than CSEA and CSER-PL. The reason is that CSER uses both gradient partial synchronization and model partial synchronization. With finely tuned compression ratios, the local models will not be too far away from each other between the model synchronization rounds, which results in better convergence. Note that although CSEA has slightly worse performance compared to CSER and CSER-PL, it has less hyperparameters to be tuned for the compressors, which is more user-friendly in practice.

\section{Conclusion}

We proposed a novel communication-efficient SGD algorithm called CSER. 
We introduce error reset and partial synchronization that enable an aggressive compression rate as high as $1024 \times$. 
Theoretically, we show that the proposed algorithm enjoys a linear speedup using more workers. Our empirical results show that the proposed algorithm accelerates the training of deep neural networks.
In future work, we will apply our algorithms to other datasets and applications.

\section*{Broader Impact}

As this work is mainly algorithmic, the impact is mainly in scientific aspects rather than ethical and societal aspects. Hopefully, our work would enable faster training of machine learning models without regression in accuracy. It would save not only the time but also the expense cost by training large and complex models. On the other hand, there are some related aspects that we have not studied in this work. For example, we do not know how our approaches impact fairness and privacy of the model training, which will be our future work.

\begin{ack}
This work was funded in part by the following grants: NSF IIS 1909577, NSF CNS 1908888, NSF CCF 1934986 and a JP Morgan Chase Fellowship, along with computational resources donated by Intel, AWS, and Microsoft Azure.
\end{ack}

\bibliography{ersgd}
\bibliographystyle{abbrvnat}

\appendix
\newpage
\onecolumn

\vspace*{0.1cm}
\begin{center}
	\Large\textbf{Appendix}
\end{center}

\setcounter{theorem}{0}
\setcounter{corollary}{0}
\setcounter{lemma}{0}

\section{Special cases}

In this section, we introduce some important special cases of CSER, as well as the memory-efficient implementation of CSER with GRBS as the compressor. The corresponding experiments are shown in Appendix~\ref{sec:additional_exp}.

\subsection{Special cases of CSER}

By specifying $\delta_1$, $\delta_2$ and $H$, we recover some important special cases of CSER. Some existing approaches are similar to these special cases, though the differences often turn out to be important. 

\subsubsection{SGD with error assimilation}

Taking $\CM_2(v) = 0$ and $H=1$, we recover a special case similar to EF-SGD~\cite{Karimireddy2019ErrorFF}.
However, different from EF-SGD, our special case directly assimilates the remaining error into the local model used for gradient computation in the next iteration. We name this special case ``communication-efficient SGD with error assimilation''~(CSEA). 
Importantly, ``error assimilation'' results in bifurcated local models without staleness in the local residuals-- in contrast, ``error feedback'' of EF-SGD always produces synchronized local models but delayed local residuals. Thus, CSEA trades off the synchronization of the local models for the elimination of the staleness in the local residuals, and potentially mitigates the noise caused by staleness when high compression ratios are used.

\begin{minipage}[t]{0.50\textwidth}
\vspace{-0.5cm}
\begin{algorithm}[H]
\caption{CSER}
\begin{algorithmic}[1]
\State {\bf Input}: $\CM_1, \CM_2$ - compressors, $H > 0$ - synchronization interval
\State Initialize $x_{i, 0} = \hat{x}_0 \in \R^d, e_{i,0} = \0, \forall i \in [n]$
\ForAll{iteration $t \in [T]$}
    \ForAll{Workers $i \in [n]$ in parallel}
        \State $g_{i, t} \leftarrow \nabla f(x_{i, t-1}; z_{i, t})$, $z_{i, t} \sim \mathcal{D}_i$
        \State $g'_{i, t}, r_{i, t} \leftarrow PSync(g_{i, t}, \CM_2)$
        \State $x_{i, t-\frac{1}{2}} \leftarrow x_{i, t-1} - \eta g'_{i, t}$
        \State $e_{i, t-\frac{1}{2}} \leftarrow e_{i, t-1} - \eta r_{i,t}$
    	\If{$\mod(t, H) \neq 0$}
    	    \State $x_{i, t} \leftarrow x_{i, t-\frac{1}{2}}$, \quad
    	    $e_{i, t} \leftarrow e_{i, t-\frac{1}{2}}$
    	\Else
    	    \State $e'_{i,t-\frac{1}{2}}, e_{i, t} \leftarrow PSync(e_{i, t-\frac{1}{2}}, \CM_1)$
    	    \State $x_{i, t} \leftarrow x_{i, t-\frac{1}{2}} - e_{i,t-\frac{1}{2}} + e'_{i,t-\frac{1}{2}}$
    	\EndIf 
    \EndFor
\EndFor
\end{algorithmic}
\label{alg:ersgd-1}
\end{algorithm}
\vspace{-0.5cm}
\begin{algorithm}[H]
\caption{Partial Synchronization (PSync)}
\begin{algorithmic}[1]
\Function{PSync}{$v_i \in \R^d$, $\CM$ - compressor} 
    \State On worker $i$:
    \State $v'_i = \CM(v_i)$
    \State $r_i = v_i - v'_i$
	\State Partial synchronization: 
	\State \quad $\bar{v}' = \frac{1}{n} \sum_{i \in [n]} v'_i$
	\State $v'_i = \bar{v}' + r_i$
	\State \Return $v'_i, r_i$
\EndFunction
\end{algorithmic}
\label{alg:ps_func-1}
\end{algorithm}
\end{minipage}
\hfill
\begin{minipage}[t]{0.49\textwidth}
\vspace{-0.5cm}
\begin{algorithm}[H]
\caption{CSEA (Implementation I)}
\begin{algorithmic}[1]
\State {\bf Input}: $\CM_1$ - compressor
\State Initialize $x_{i, 0} = \hat{x}_0 \in \R^d, e_{i,0} = \0, \forall i \in [n]$
\ForAll{iteration $t \in [T]$}
    \ForAll{Workers $i \in [n]$ in parallel}
        \State $p_{i,t} \leftarrow e_{i,t-1} - \eta\nabla f(x_{i, t-1}; z_{i, t})$
    	   \State $e'_{i,t}, e_{i,t} \leftarrow PSync(p_{i,t}, \CM_1)$
	    \State $x_{i, t} \leftarrow x_{i, t-1} + e'_{i,t} - e_{i,t-1}$
    \EndFor
\EndFor
\end{algorithmic}
\label{alg:easgd-1}
\end{algorithm}
\begin{algorithm}[H]
\caption{CSER-PL (Implementation I)}
\begin{algorithmic}[1]
\State {\bf Input}: $\CM_1$ - compressor, $H > 0$ - synchronization interval
\State Initialize $x_{i, 0} = \hat{x}_0 \in \R^d, e_{i,0} = \0, \forall i \in [n]$
\ForAll{iteration $t \in [T]$}
    \ForAll{Workers $i \in [n]$ in parallel}
        \State $g_{i,t} \leftarrow \nabla f(x_{i, t-1}; z_{i, t})$
        \State $x_{i, t-\frac{1}{2}} \leftarrow x_{i, t-1} - \eta g_{i,t}$
    	\State $e_{i, t-\frac{1}{2}} \leftarrow e_{i, t-1} - \eta g_{i,t}$
    	\If{$\mod(t, H) \neq 0$}
    	    \State $x_{i, t} \leftarrow x_{i, t-\frac{1}{2}}$, \quad
    	    $e_{i, t} \leftarrow e_{i, t-\frac{1}{2}}$
    	\Else
    	    \State $e'_{i,t}, e_{i,t} \leftarrow PSync(e_{i,t-\frac{1}{2}}, \CM_1)$
    	    \State $x_{i, t} \leftarrow x_{i, t-\frac{1}{2}} + e'_{i,t} - e_{i,t-\frac{1}{2}} $
    	\EndIf 
    \EndFor
\EndFor
\end{algorithmic}
\label{alg:partial-local-sgd-1}
\end{algorithm}
\end{minipage}

\subsubsection{Partial-local-SGD}
Taking $\CM_2(v) = 0$, we recover a special case of CSER, which is similar to QSparse-local-SGD~\cite{basu2019qsparse}. To distinguish it from Qsparse-local-SGD, we call this special case ``Partial-local-SGD'', or CSER-PL~(partial-local special case of CSER) in brief. While QSparse-local-SGD keeps the local models fully synchronized after every communication round, CSER-PL maintains different local models across the workers. Taking $\delta_1 = 1$, CSER-PL recovers local SGD with synchronization interval $H$.

\subsection{Comparision to the existing work}

\subsubsection{CSEA}

If we take $\CM_2(v) = 0$~(i.e., no synchronization at all) and $H=1$, then we recover a special case similar to EF-SGD, with the same communication overhead, if the same compressor $\CM_1$ is used.
\begin{minipage}{0.49\textwidth}
\begin{algorithm}[H]
\caption{CSEA}
\begin{algorithmic}[1]
\State {\bf Input}: $\CM_1$ - compressor
\State Initialize $x_{i, 0} = \hat{x}_0 \in \R^d, e_{i,t}=\0, \forall i \in [n]$
\ForAll{iteration $t \in [T]$}
    \ForAll{Workers $i \in [n]$ in parallel}
        \State $p_{i,t} \leftarrow e_{i,t-1} - \eta\nabla f(x_{i, t-1}; z_{i, t})$
    	    \State $p'_{i,t} \leftarrow \CM_1(p_{i,t})$
	    \State $e_{i,t} \leftarrow p_{i,t} - p'_{i,t}$
	    \State $\bar{p}'_{t} \leftarrow \frac{1}{n} \sum_{i \in [n]} p'_{i,t}$
	    \State \colorbox{green}{$x_{i, t} \leftarrow x_{i, t-1} - e_{i,t-1} + e_{i,t} + \bar{p}'_{t}$}
    \EndFor
\EndFor
\end{algorithmic}
\label{alg:emsgd}
\end{algorithm}
\end{minipage}
\hfill
\begin{minipage}{0.49\textwidth}
\begin{algorithm}[H]
\caption{EF-SGD}
\begin{algorithmic}[1]
\State {\bf Input}: $\CM_1$ - compressor
\State Initialize $x_{i, 0} = \hat{x}_0 \in \R^d, e_{i,t}=\0, \forall i \in [n]$
\ForAll{iteration $t \in [T]$}
    \ForAll{Workers $i \in [n]$ in parallel}
        \State $p_{i,t} \leftarrow e_{i,t-1} - \eta\nabla f(x_{i, t-1}; z_{i, t})$
	    \State $p'_{i,t} \leftarrow \CM_1(p_{i,t})$
	    \State $e_{i,t} \leftarrow p_{i,t} - p'_{i,t}$
	    \State $\bar{p}'_{t} \leftarrow \frac{1}{n} \sum_{i \in [n]} p'_{i,t}$
	    \State \colorbox{green}{$x_{i, t} \leftarrow x_{i, t-1} + \bar{p}'_{t}$}
    \EndFor
\EndFor
\end{algorithmic}
\label{alg:efsgd}
\end{algorithm}
\end{minipage}

\subsection{CSER-PL}
By taking $\CM_2(v) = 0$~(i.e., no partial synchronization of gradients), we recover a special case of CSER, which is similar to Qspars-local-SGD. To distinguish it from Qsparse-local-SGD, we call this special case ``CSER-PL'', or CSER-PL in brief. The major differences between CSER-PL and Qspars-local-SGD are shown below. 

Note that we rewrite both algorithms into a new format for an easier comparison. Some notations are inconsistent to Algorithm~\ref{alg:ersgd}.

CSER-PL is more memory-efficient compared to Qsparse-local-SGD, since it does not maintain the variable $r_{i,t}$ during the local updates.

\begin{minipage}{0.49\textwidth}
\begin{algorithm}[H]
\caption{CSER-PL}
\begin{algorithmic}[1]
\State {\bf Input}: $\CM_1$ - compressor, $H > 0$ - synchronization interval
\State Initialize $x_{i, 0} = \hat{x}_0 = \0, \forall i \in [n]$
\State \colorbox{green}{Pass} \colorbox{white}{\color{white}Initialize $r_{i, 0} = \0, \forall i \in [n]$}
\ForAll{iteration $t \in [T]$}
    \ForAll{Workers $i \in [n]$ in parallel}
        \State $x_{i, t-\frac{1}{2}} \leftarrow x_{i, t-1} - \eta\nabla f(x_{i, t-1}; z_{i, t})$
    	\If{$\mod(t, H) \neq 0$}
    	    \State $x_{i, t} \leftarrow x_{i, t-\frac{1}{2}}$
    	    \State $\hat{x}_{t} \leftarrow \hat{x}_{t-1}$
    	    \State \colorbox{green}{Pass} 
    	\Else
	        \State \colorbox{green}{$p_{i,t} \leftarrow  x_{i, t-\frac{1}{2}}  - \hat{x}_{t}$}
    	    \State $p'_{i,t} \leftarrow \CM_1(p_{i,t})$
    	    \State $e_{i,t} \leftarrow p_{i,t} - p'_{i,t}$
    	    \State $\bar{p}'_{t} \leftarrow \frac{1}{n} \sum_{i \in [n]} p'_{i,t}$
    	    \State \colorbox{green}{$x_{i, t} \leftarrow \hat{x}_{t-1} +\bar{p}'_{t} + e_{i,t} $}
	    \State $\hat{x}_{t} \leftarrow  \hat{x}_{t-1} + \bar{p}'_{t}$
    	\EndIf 
    \EndFor
\EndFor
\end{algorithmic}
\label{alg:plsgd}
\end{algorithm}
\end{minipage}
\hfill
\begin{minipage}{0.49\textwidth}
\begin{algorithm}[H]
\caption{Qsparse-local-SGD}
\begin{algorithmic}[1]
\State {\bf Input}: $\CM_1$ - compressor, $H > 0$ - synchronization interval
\State Initialize $x_{i, 0} = \hat{x}_0 = \0, \forall i \in [n]$
\State \colorbox{green}{Initialize $e_{i, 0} = \0, \forall i \in [n]$}
\ForAll{iteration $t \in [T]$}
    \ForAll{Workers $i \in [n]$ in parallel}
        \State $x_{i, t-\frac{1}{2}} \leftarrow x_{i, t-1} - \eta\nabla f(x_{i, t-1}; z_{i, t})$
    	\If{$\mod(t, H) \neq 0$}
    	    \State $x_{i, t} \leftarrow x_{i, t-\frac{1}{2}}$
    	    \State $\hat{x}_t \leftarrow \hat{x}_{t-1}$
    	    \State \colorbox{green}{$e_{i,t} \leftarrow e_{i,t-1}$}
    	\Else
    	    \State \colorbox{green}{$p_{i,t} \leftarrow e_{i,t-1} + x_{i, t-\frac{1}{2}} - \hat{x}_{t-1}$}
    	    \State $p'_{i,t} \leftarrow \CM_1(p_{i,t})$
    	    \State $e_{i,t} \leftarrow p_{i,t} - p'_{i,t}$
    	    \State $\bar{p}'_{t} \leftarrow \frac{1}{n} \sum_{i \in [n]} p'_{i,t}$
    	    \State \colorbox{green}{$x_{i, t} \leftarrow \hat{x}_{t-1} + \bar{p}'_{t}$}
    	    \State $\hat{x}_{t} \leftarrow \hat{x}_{t-1} + \bar{p}'_{t}$
    	\EndIf 
    \EndFor
\EndFor
\end{algorithmic}
\label{alg:qlsgd}
\end{algorithm}
\end{minipage}

\subsection{Special implementations with GRBS}

Using GRBS as the compressor, the implementation of CSER can be simplified. For any block, its local residual is either already assimilated into the local model, or reset to $0$. Thus, we can directly do partial synchronization on the local models $x_{i,t}$, instead of the residuals $e_{i,t}$. The detailed implementations are shown in Algorithm~\ref{alg:ersgdm-2}, \ref{alg:CSER-PL-2}, and \ref{alg:easgd-2}.


\begin{minipage}[t]{0.49\textwidth}
\begin{algorithm}[H]
\caption{CSER (implementation II)}
\begin{algorithmic}[1]
\State {\bf Input}: $\CM_1, \CM_2$ - randomized sparsifiers, $H > 0$ - synchronization interval
\State $x_{i, 0} = \hat{x}_0 \in \R^d, m_{i, 0} = \0, \forall i \in [n]$
\ForAll{iteration $t \in [T]$}
    \ForAll{Workers $i \in [n]$ in parallel}
        \State $p_{i, t} \leftarrow Get\_p(x_{i, t-1}, m_{i, t-1})$
        \State $p'_{i, t}, \_ \leftarrow PSync(p_{i, t}, \CM_2)$
        \State $x_{i, t-\frac{1}{2}} \leftarrow x_{i, t-1} -  p'_{i, t}$
    	\If{$\mod(t, H) \neq 0$}
    	    \State $x_{i, t} \leftarrow x_{i, t-\frac{1}{2}}$
    	\Else
    	    \State $x_{i, t},\_  \leftarrow PSync(x_{i, t-\frac{1}{2}}, \CM_1)$ 
    	\EndIf 
    \EndFor
\EndFor
\end{algorithmic}
\label{alg:ersgdm-2}
\end{algorithm}
\end{minipage}
\hfill
\begin{minipage}[t]{0.49\textwidth}
\begin{algorithm}[H]
\caption{CSER-PL (implementation II)}
\begin{algorithmic}[1]
\State {\bf Input}: $\CM_1$ - randomized sparsifier, $H > 0$ - synchronization interval
\State $x_{i, 0} = \hat{x}_0 \in \R^d, m_{i, 0} = \0, \forall i \in [n]$
\ForAll{iteration $t \in [T]$}
    \ForAll{Workers $i \in [n]$ in parallel}
        \State $p_{i, t} \leftarrow Get\_p(x_{i, t-1}, m_{i, t-1})$
        \State $x_{i, t-\frac{1}{2}} \leftarrow x_{i, t-1} - p_{i, t}$
        \If{$\mod(t, H) \neq 0$}
    	    \State $x_{i, t}  \leftarrow x_{i, t-\frac{1}{2}}$
    	\Else
    	    \State $x_{i, t},\_  \leftarrow PSync(x_{i, t-\frac{1}{2}}, \CM_1)$
    	\EndIf
    \EndFor
\EndFor
\end{algorithmic}
\label{alg:CSER-PL-2}
\end{algorithm}
\end{minipage}

\begin{minipage}[t]{0.49\textwidth}
\begin{algorithm}[H]
\caption{CSEA (implementation II)}
\begin{algorithmic}[1]
\State {\bf Input}: $\CM_1$ - randomized sparsifier
\State $x_{i, 0} = \hat{x}_0 \in \R^d, m_{i, 0} = \0, \forall i \in [n]$
\ForAll{iteration $t \in [T]$}
    \ForAll{Workers $i \in [n]$ in parallel}
        \State $p_{i, t} \leftarrow Get\_p(x_{i, t-1}, m_{i, t-1})$
        \State $x_{i, t-\frac{1}{2}} \leftarrow x_{i, t-1} - p_{i, t}$
    	\State $x_{i, t},\_  \leftarrow PSync(x_{i, t-\frac{1}{2}}, \CM_1)$
    \EndFor
\EndFor
\end{algorithmic}
\label{alg:easgd-2}
\end{algorithm}
\end{minipage}
\hfill
\begin{minipage}[t]{0.49\textwidth}
\begin{algorithm}[H]
\caption{Calculate update on worker $i$}
\begin{algorithmic}[1]
\Function{Get\_p}{$x_{i, t-1}, m_{i, t-1}$}
\State $g_{i, t} \leftarrow \nabla f(x_{i, t-1}; z_{i, t})$, $z_{i, t} \sim \mathcal{D}_i$
\If{use momentum}
    \State $m_{i, t} \leftarrow \beta m_{i, t-1} + g_{i, t}$
    \State $p_{i, t} \leftarrow \eta (\beta m_{i, t} + g_{i, t})$
\Else
    \State $p_{i, t} \leftarrow \eta g_{i, t}$
\EndIf
\State \Return $p_{i, t}$
\EndFunction
\end{algorithmic}
\label{alg:momentum}
\end{algorithm}
\end{minipage}

\section{Proofs}

\begin{lemma}~(Bifurcated local models)
$e_{i,t}, \forall i \in [n], t$ maintains the differences between the local models $x_{i,t}$:
\begin{align*}
x_{i, t} - e_{i,t} = x_{j,t} - e_{j,t}, \forall i,j\in [n], t.
\end{align*}
\end{lemma}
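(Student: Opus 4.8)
The plan is to prove the identity by induction on the iteration index $t$, carrying the (equivalent but more convenient) statement that $x_{i,t}-e_{i,t}$ does not depend on the worker index $i$. The base case $t=0$ is immediate from the initialization in Algorithm~\ref{alg:ersgd}: $x_{i,0}=\hat{x}_0$ is common to all workers and $e_{i,0}=\0$, so $x_{i,0}-e_{i,0}=\hat{x}_0$ for every $i$.

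The structural fact driving the inductive step is that a single call $v'_i,r_i \leftarrow PSync(v_i,\CM)$ (Algorithm~\ref{alg:ps_func}) produces $v'_i - r_i = \bar{v}' = \frac1n\sum_{j\in[n]}\CM(v_j)$, which is the \emph{same for all workers}, whatever the inputs $\{v_j\}$ are. I would apply this twice per iteration. First to the gradient synchronization in Line~6: since $g'_{i,t}-r_{i,t} = \frac1n\sum_{j}\CM_2(g_{j,t})$ is worker-independent, the updates in Line~7 give
\begin{align*}
x_{i,t-\frac12}-e_{i,t-\frac12} = \bigl(x_{i,t-1}-e_{i,t-1}\bigr) - \eta\bigl(g'_{i,t}-r_{i,t}\bigr),
\end{align*}
which is worker-independent by the inductive hypothesis together with the fact just stated (note that Line~7 subtracts $\eta g'_{i,t}$ from $x$ but only $\eta r_{i,t}$ from $e$, so the worker-dependent residual part cancels). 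If $t\bmod H\neq 0$, Line~9 copies these values verbatim to $x_{i,t}$ and $e_{i,t}$, and the inductive step is complete.

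If $t\bmod H=0$, I would unroll Line~11: the call $PSync(e_{i,t-\frac12},\CM_1)$ returns $e'_{i,t-\frac12}$ and $e_{i,t}$ with $e'_{i,t-\frac12}-e_{i,t}=\frac1n\sum_{j}\CM_1(e_{j,t-\frac12})$, again worker-independent. Substituting the Line~12 update $x_{i,t}=x_{i,t-\frac12}-e_{i,t-\frac12}+e'_{i,t-\frac12}$ yields
\begin{align*}
x_{i,t}-e_{i,t} = \bigl(x_{i,t-\frac12}-e_{i,t-\frac12}\bigr) + \bigl(e'_{i,t-\frac12}-e_{i,t}\bigr),
\end{align*}
a sum of two worker-independent quantities, which closes the induction. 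There is no genuine obstacle here; the only place requiring care is the bookkeeping of the two return values of $PSync$ and matching them correctly to $e'_{i,t-\frac12}$ and $e_{i,t}$ (and likewise $g'_{i,t}$ and $r_{i,t}$), so that each difference collapses to the worker-independent average of a compressed quantity.
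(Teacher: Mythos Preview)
Your proof is correct and follows essentially the same inductive argument as the paper: both proceed by induction on $t$, split on whether $t\bmod H=0$, and rely on the fact that the synchronized averages $\frac{1}{n}\sum_k\CM_2(g_{k,t})$ and $\frac{1}{n}\sum_k\CM_1(e_{k,t-\frac12})$ are worker-independent so that the residual terms cancel in $x_{i,t}-e_{i,t}$. Your only (minor, presentational) difference is that you abstract the observation $v'_i-r_i=\bar v'$ as a general property of $PSync$ and invoke it twice, whereas the paper unrolls each $PSync$ call explicitly inside the two cases.
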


\begin{proof}
We prove the lemma by induction.

For $t=0$, we have $x_{i,0} = \hat{x}_0, e_{i,0} = \0, \forall i\in [n]$, thus $x_{i, 0} - e_{i,0} = x_{j,0} - e_{j,0}, \forall i,j\in [n]$.

Assume that 
\begin{align*}
x_{i, t-1} - e_{i,t-1} = x_{j,t-1} - e_{j,t-1}, \forall i,j\in [n],
\end{align*}
then we have 2 cases:

\textbf{Case 1:} $\mod(t,H) \neq 0$.
Then we have
\begin{align*}
    x_{i,t} &= x_{i, t-\frac{1}{2}} = x_{i, t-1} - \eta \left[ \frac{1}{n} \sum_{k \in [n]} \CM_2(g_{k,t}) + r_{i,t} \right], \\
    e_{i,t} &= e_{i,t-\frac{1}{2}} = e_{i, t-1}- \eta r_{i,t}.
\end{align*}
Thus, we have
\begin{align*}
    x_{i, t} - e_{i,t} 
    = x_{i, t-1} - e_{i, t-1} - \eta  \frac{1}{n} \sum_{k \in [n]} \CM_2(g_{k,t}) 
    = x_{j, t-1} - e_{j, t-1} - \eta  \frac{1}{n} \sum_{k \in [n]} \CM_2(g_{k,t}) 
    =  x_{j,t} - e_{j,t}
\end{align*}

\textbf{Case 2:} $\mod(t,H) = 0$.
Note that in Case 1 we have already proved 
\begin{align*}
x_{i, t-\frac{1}{2}} - e_{i,t-\frac{1}{2}} = x_{j,t-\frac{1}{2}} - e_{j,t-\frac{1}{2}}, \forall i,j\in [n].
\end{align*}
Then, we have
\begin{align*}
x_{i, t} 
= x_{i, t-\frac{1}{2}} - e_{i,t-\frac{1}{2}} + e'_{i,t-\frac{1}{2}}
= x_{i, t-\frac{1}{2}} - e_{i,t-\frac{1}{2}} + \left[  
\frac{1}{n} \sum_{k} \CM_1(e_{k,t-\frac{1}{2}}) + e_{i,t} \right].
\end{align*}
Thus, we get
\begin{align*}
x_{i, t} - e_{i,t}
= x_{i, t-\frac{1}{2}} - e_{i,t-\frac{1}{2}} +  
\frac{1}{n} \sum_{k} \CM_1(e_{k,t-\frac{1}{2}})
= x_{j, t-\frac{1}{2}} - e_{j,t-\frac{1}{2}} +  
\frac{1}{n} \sum_{k} \CM_1(e_{k,t-\frac{1}{2}})
= x_{j, t} - e_{j,t}.
\end{align*}

\end{proof}

\setcounter{lemma}{2}
\begin{lemma}\label{lem:error_reset}~(Error Reset of CSER)
After every $H$ steps, the local error will be reset to 
\begin{align*}
&\E\left\| e_{i,t} \right\|^2 \leq \frac{(1-\delta_2) (1-\delta_1) \eta^2 H^2 V_2}{\left( 1 - \sqrt{1-\delta_1} \right)^2},
\end{align*}
for $\forall t \in [T], \mod(t,H) = 0$.
\end{lemma}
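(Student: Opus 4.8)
The plan is to derive a contractive recursion for $a_k := \E\left\| e_{i,kH} \right\|^2$, the expected squared norm of the local residual right after the $k$-th error reset, and then conclude by induction on $k$ (with $a_0 = 0$ since $e_{i,0} = \0$). First I would unroll the update of the residual between two consecutive reset times $(k-1)H$ and $kH$. For iterations $s$ strictly between them the algorithm follows the non-reset branch, so $e_{i,s} = e_{i,s-\frac{1}{2}} = e_{i,s-1} - \eta r_{i,s}$, while at $s = kH$ we get $e_{i,kH-\frac{1}{2}} = e_{i,kH-1} - \eta r_{i,kH}$ followed by the reset $e_{i,kH} = e_{i,kH-\frac{1}{2}} - \CM_1(e_{i,kH-\frac{1}{2}})$. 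Telescoping yields
\[
e_{i,kH-\frac{1}{2}} \;=\; e_{i,(k-1)H} \;-\; \eta \sum_{s=(k-1)H+1}^{kH} r_{i,s},
\]
valid for all $H \geq 1$, and for $k=1$ with $e_{i,0}=\0$.

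Next I would apply the $\delta_1$-approximate property of $\CM_1$ in expectation (conditioning on $e_{i,kH-\frac{1}{2}}$) to obtain $a_k \leq (1-\delta_1)\,\E\left\| e_{i,kH-\frac{1}{2}} \right\|^2$, and split the right-hand side with the elementary bound $\|u-v\|^2 \leq (1+\rho)\|u\|^2 + (1+\rho^{-1})\|v\|^2$ for a free parameter $\rho>0$ (which avoids any cross term). The "$v$" term is bounded by Cauchy--Schwarz, $\left\| \sum_{s=(k-1)H+1}^{kH} r_{i,s} \right\|^2 \leq H \sum_s \|r_{i,s}\|^2$, combined with the $\delta_2$-approximate property $\E\|r_{i,s}\|^2 = \E\left\| g_{i,s} - \CM_2(g_{i,s}) \right\|^2 \leq (1-\delta_2)\E\|g_{i,s}\|^2$ and the bounded second moment $\E\|g_{i,s}\|^2 \leq V_2$ from Assumption~\ref{asm:gradient}. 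This produces
\[
a_k \;\leq\; (1-\delta_1)(1+\rho)\,a_{k-1} \;+\; (1-\delta_1)\left(1+\rho^{-1}\right)(1-\delta_2)\,\eta^2 H^2 V_2 .
\]

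The crux is choosing $\rho$ so the recursion is at once contractive and has exactly the claimed fixed point. I would take $\rho = \frac{1-\sqrt{1-\delta_1}}{\sqrt{1-\delta_1}}$, which makes $(1-\delta_1)(1+\rho) = \sqrt{1-\delta_1} < 1$ (a genuine contraction) and $1+\rho^{-1} = \frac{1}{1-\sqrt{1-\delta_1}}$; the recursion then collapses to $a_k \leq \sqrt{1-\delta_1}\,a_{k-1} + \left(1-\sqrt{1-\delta_1}\right)A$ with $A := \frac{(1-\delta_2)(1-\delta_1)\,\eta^2 H^2 V_2}{\left(1-\sqrt{1-\delta_1}\right)^2}$. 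Since its right-hand side is a convex combination of $a_{k-1}$ and $A$, and $a_0 = 0 \leq A$, induction gives $a_k \leq A$ for every $k$, which is the statement. I expect the only real subtlety to be pinning down this $\rho$: equating the general fixed point $\frac{(1-\delta_1)(1+\rho^{-1})(1-\delta_2)\eta^2 H^2 V_2}{1-(1-\delta_1)(1+\rho)}$ to $A$ reduces to an equation in $\rho$ that factors as the perfect square $\left(\sqrt{1-\delta_1}\,\rho - \left(1-\sqrt{1-\delta_1}\right)\right)^2 = 0$, so the choice and the constants fall out unambiguously. The degenerate case $\delta_1 = 1$ is immediate, since $\CM_1$ is then exact and $e_{i,kH} = \0$.
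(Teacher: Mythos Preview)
Your proposal is correct and follows essentially the same approach as the paper: both unroll the residual between consecutive reset times, apply the $\delta_2$- and $\delta_1$-approximate compressor bounds together with $\E\|g_{i,s}\|^2 \leq V_2$, split via $\|u-v\|^2 \leq (1+\rho)\|u\|^2 + (1+\rho^{-1})\|v\|^2$, and pick $\rho = \tfrac{1}{\sqrt{1-\delta_1}} - 1$. The only cosmetic difference is that the paper writes the recursion for the pre-reset quantity $\E\|e_{i,t-\frac{1}{2}}\|^2$ and sums the resulting geometric series, whereas you write it for the post-reset quantity and close it by the convex-combination induction; these are equivalent.
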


\begin{proof}

First, we establish the bound of the local error $\E \|e_{i,t-\frac{1}{2}}\|^2$ before (partial) synchronization. 

\textbf{Case I:}
For $t=0$, we have no local error:
\begin{align*}
e_{i,0} = \0.
\end{align*}

\textbf{Case II:}
For $t = H$, we have the local error:
\begin{align*}
&\E \|e_{i,t-\frac{1}{2}}\|^2 \\
&= \E \left\|  - \eta \sum_{\tau = 1}^{H} r_{i,\tau} \right\|^2 \\
&= \eta^2 \E \left\|  \sum_{\tau = 1}^{H} r_{i,\tau} \right\|^2 \\
&\leq \eta^2 H \sum_{\tau = 1}^{H} \E \left\| r_{i,\tau} \right\|^2 \\
&= \eta^2 H \sum_{\tau = 1}^{H} \E \left\| g_{i,\tau} - \CM_1\left( g_{i,\tau} \right) \right\|^2 \\
&\leq \eta^2 H (1-\delta_2) \sum_{\tau = 1}^{H} \E \left\| g_{i,\tau} \right\|^2 \\
&\leq (1-\delta_2) \eta^2 H^2 V_2.
\end{align*}

\textbf{Case III:}
For any $t \in [T]$ such that $\mod(t,H) = 0, t > H$, we can bound the local error:
\begin{align*}
\E \|e_{i,t-\frac{1}{2}}\|^2 
= \E \left\| e_{i,t-H}  - \eta \sum_{\tau = t-H+1}^{t} r_{i,\tau} \right\|^2.
\end{align*}

Note that after (partial) synchronization, the local error is reset as $e_{i,t-H} = e_{i,t-H-\frac{1}{2}} - \CM_2\left(e_{i,t-H-\frac{1}{2}}\right)$. 

Thus, we have
\begin{align*}
&\E \|e_{i,t-\frac{1}{2}}\|^2 \\
&= \E \left\| e_{i,t-H-\frac{1}{2}} - \CM_2\left(e_{i,t-H-\frac{1}{2}}\right) - \eta\sum_{\tau = t-H+1}^{t} \left( g_{i,\tau} - \CM_1\left( g_{i,\tau} \right) \right)  \right\|^2 \\
&\leq (1+a)\E \left\| e_{i,t-H-\frac{1}{2}} - \CM_2\left(e_{i,t-H-\frac{1}{2}}\right) \right\|^2 + (1+1/a)\E \left\| \eta\sum_{\tau = t-H+1}^{t} \left( g_{i,\tau} - \CM_1\left( g_{i,\tau} \right) \right)  \right\|^2 \\
&\leq (1+a)(1-\delta_1)\E \left\| e_{i,t-H-\frac{1}{2}} \right\|^2 + (1+1/a)\eta^2 H (1-\delta_2) \sum_{\tau = t-H+1}^{t} \E \left\| g_{i,\tau}  \right\|^2 \\
&\leq (1+a)(1-\delta_1)\E \left\| e_{i,t-H-\frac{1}{2}} \right\|^2 + (1+1/a)(1-\delta_2)\eta^2 H^2 V_2 \\
&\leq (1+1/a)(1-\delta_2)\eta^2 H^2 V_2 \sum_{\tau=0}^{+\infty} \left[ (1+a)(1-\delta_1) \right]^{\tau} \\
&\leq \frac{1+1/a}{1- (1+a)(1-\delta_1)}  (1-\delta_2)\eta^2 H^2 V_2,
\end{align*}
for any $a > 0$, such that $(1+a)(1-\delta_1) \in (0,1)$.
The bound above is minimized when we take $a = \frac{1}{\sqrt{1-\delta_1}} - 1$, which results in
\begin{align*}
&\E \|e_{i,t-\frac{1}{2}}\|^2 \leq \frac{(1-\delta_2)\eta^2 H^2 V_2}{\left( 1 - \sqrt{1-\delta_1} \right)^2}.
\end{align*}

Combining all the 3 cases above, we obtain that for $\forall t \in [T]$ such that $\mod(t,H) = 0$:
\begin{align*}
&\E \|e_{i,t-\frac{1}{2}}\|^2 \leq \frac{(1-\delta_2)\eta^2 H^2 V_2}{\left( 1 - \sqrt{1-\delta_1} \right)^2}.
\end{align*}

Then, after the (partial) synchronization, we have
\begin{align*}
&\E \left\| e_{i,t} \right\|^2 \\
&\leq (1-\delta_1) \E \left\| e_{i,t} \right\|^2 \\
&\leq \frac{(1-\delta_2) (1-\delta_1) \eta^2 H^2 V_2}{\left( 1 - \sqrt{1-\delta_1} \right)^2},
\end{align*}
for $\forall t \in [T], \mod(t,H) = 0$.
\end{proof}

\begin{theorem}
Taking $\eta \leq \frac{1}{L}$, after $T$ iterations, Algorithm~\ref{alg:ersgd} has the following error bound: 
\begin{align*}
&\frac{1}{T} \sum_{t=1}^T \E\left[ \left\| \nabla F(\bar{x}_{t-1}) \right\|^2 \right] \\
&\leq \frac{2\left[ F(\bar{x}_0) - F(x_*) \right]}{\eta T}
+ \left[ \frac{(1-\delta_1) }{\left( 1 - \sqrt{1-\delta_1} \right)^2} + 1 \right] 2(1-\delta_2) \eta^2 L^2  H^2 V_2
+ \frac{L\eta V_1}{n} \\
&\leq \frac{2\left[ F(\bar{x}_0) - F(x_*) \right]}{\eta T}
+ \left[ \frac{4(1-\delta_1) }{\delta_1^2} + 1 \right] 2(1-\delta_2) \eta^2 L^2  H^2 V_2
+ \frac{L\eta V_1}{n}.
\end{align*}
\end{theorem}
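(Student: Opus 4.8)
The plan is to reduce everything to the averaged iterate $\bar x_t=\frac1n\sum_{i\in[n]}x_{i,t}$: run the standard smooth non-convex SGD descent-lemma argument on $\bar x_t$, and then fold all the compression- and synchronization-induced error into a single bound on the consensus quantity $\frac1n\sum_i\|x_{i,t}-\bar x_t\|^2$, for which Lemma~\ref{lem:local_diff} and Lemma~\ref{lem:error_reset} are exactly the tools provided.

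First I would check the structural fact that neither the partial gradient synchronization ($\CM_2$) nor the error reset ($\CM_1$) moves the average. In PSync the worker output is $\bar v'+r_i$ with $\bar v'=\frac1n\sum_k\CM(v_k)$ and $r_i=v_i-\CM(v_i)$, so $\frac1n\sum_i(\bar v'+r_i)=\frac1n\sum_i v_i$; applied to Line~6 of Algorithm~\ref{alg:ersgd} this gives $\frac1n\sum_i g'_{i,t}=\frac1n\sum_i g_{i,t}=:\bar g_t$, and the analogous cancellation in Lines~11--12 gives $\bar x_t=\bar x_{t-\frac12}$ at reset steps. Hence $\bar x_t=\bar x_{t-1}-\eta\bar g_t$ for every $t$. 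Then I would apply $L$-smoothness of $F=\frac1n\sum_iF_i$ to this recursion, take expectations conditioned on iteration $t-1$ (so $\E\bar g_t=\frac1n\sum_i\nabla F_i(x_{i,t-1})$, and by worker independence in Assumption~\ref{asm:gradient}, $\E\|\bar g_t\|^2\le\frac{V_1}{n}+\|\frac1n\sum_i\nabla F_i(x_{i,t-1})\|^2$), and combine with the identity $\ip{a}{b}=\frac12(\|a\|^2+\|b\|^2-\|a-b\|^2)$ and the Lipschitz-gradient bound $\|\nabla F(\bar x_{t-1})-\frac1n\sum_i\nabla F_i(x_{i,t-1})\|^2\le\frac{L^2}{n}\sum_i\|x_{i,t-1}-\bar x_{t-1}\|^2$. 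With $\eta\le\frac1L$ the $\|\frac1n\sum_i\nabla F_i(x_{i,t-1})\|^2$ contributions cancel, leaving
\begin{align*}
\E F(\bar x_t)&\le\E F(\bar x_{t-1})-\frac{\eta}{2}\E\|\nabla F(\bar x_{t-1})\|^2\\
&\quad+\frac{\eta L^2}{2n}\sum_{i\in[n]}\E\|x_{i,t-1}-\bar x_{t-1}\|^2+\frac{L\eta^2V_1}{2n}.
\end{align*}
Telescoping over $t=1,\dots,T$, using $F(\bar x_T)\ge F(x_*)$, and dividing by $\eta T/2$ yields $\frac1T\sum_{t=1}^T\E\|\nabla F(\bar x_{t-1})\|^2$ bounded by $\frac{2[F(\bar x_0)-F(x_*)]}{\eta T}+\frac{L\eta V_1}{n}$ plus the term $\frac{L^2}{T}\sum_{t=1}^T\frac1n\sum_i\E\|x_{i,t-1}-\bar x_{t-1}\|^2$.

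The crux is bounding that averaged consensus term. By Lemma~\ref{lem:local_diff}, $x_{i,t}-\bar x_t=e_{i,t}-\bar e_t$ with $\bar e_t=\frac1n\sum_i e_{i,t}$, so $\frac1n\sum_i\|x_{i,t}-\bar x_t\|^2\le\frac1n\sum_i\|e_{i,t}\|^2$. I would then split time into windows $[kH,(k+1)H-1]$: at the reset point $t=kH$, Lemma~\ref{lem:error_reset} gives $\E\|e_{i,kH}\|^2\le B_0:=\frac{(1-\delta_2)(1-\delta_1)\eta^2H^2V_2}{(1-\sqrt{1-\delta_1})^2}$ (trivially for $k=0$ since $e_{i,0}=\0$); for $t=kH+s$ with $1\le s\le H-1$, iterating the residual update in Line~7 gives $e_{i,t}=e_{i,kH}-\eta\sum_{\tau=kH+1}^{kH+s}r_{i,\tau}$, so $(a+b)^2\le2a^2+2b^2$, Cauchy--Schwarz, and $\E\|r_{i,\tau}\|^2=\E\|g_{i,\tau}-\CM_2(g_{i,\tau})\|^2\le(1-\delta_2)V_2$ give $\E\|e_{i,t}\|^2\le2B_0+2(1-\delta_2)\eta^2H^2V_2$. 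Averaging over each window, hence over all $t\in\{0,\dots,T-1\}$, gives $\frac1T\sum_t\frac1n\sum_i\E\|x_{i,t}-\bar x_t\|^2\le2(1-\delta_2)\eta^2H^2V_2\big[\frac{1-\delta_1}{(1-\sqrt{1-\delta_1})^2}+1\big]$. Multiplying by $L^2$ and substituting gives the first displayed inequality of the theorem; the second follows from $1-\sqrt{1-\delta_1}\ge\delta_1/2$, i.e.\ $\frac{1}{(1-\sqrt{1-\delta_1})^2}\le\frac{4}{\delta_1^2}$.

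The hard part will be the window argument in the consensus bound: I need to glue the reset bound from Lemma~\ref{lem:error_reset} to the residuals accumulated over the following $H-1$ steps while keeping the $2B_0$ and $2(1-\delta_2)\eta^2H^2V_2$ pieces distinct, so that they assemble into exactly the factor $\big[\frac{1-\delta_1}{(1-\sqrt{1-\delta_1})^2}+1\big]$ rather than something merely of the right order. The rest is standard bookkeeping; the only other points needing care are the use of worker independence (Assumption~\ref{asm:gradient}) for the $V_1/n$ variance reduction and the precise spot where $\eta\le1/L$ is invoked to discard the $\|\frac1n\sum_i\nabla F_i(x_{i,t-1})\|^2$ term.
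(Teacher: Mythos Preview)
Your proposal is correct and follows essentially the same route as the paper's proof: a standard smooth non-convex descent argument on $\bar x_t$ (using that PSync preserves averages so $\bar x_t=\bar x_{t-1}-\eta\bar g_t$), the polarization identity, the Lipschitz-gradient step to reduce to $\frac{1}{n}\sum_i\|\bar x_{t-1}-x_{i,t-1}\|^2$, and then the two-term split via $(a+b)^2\le 2a^2+2b^2$ into the reset-point error (handled by Lemma~\ref{lem:error_reset}) plus the within-window residual accumulation (bounded by $(1-\delta_2)\eta^2H^2V_2$). The only cosmetic difference is that the paper decomposes $\bar x_{t-1}-x_{i,t-1}$ directly back to the last reset time $t_0$ and invokes Lemma~\ref{lem:local_diff} there, whereas you invoke Lemma~\ref{lem:local_diff} at time $t-1$ to get $x_{i,t-1}-\bar x_{t-1}=e_{i,t-1}-\bar e_{t-1}$ and then unroll $e_{i,t-1}$; both yield the identical bound $2(1-\delta_2)\eta^2H^2V_2\big[\frac{1-\delta_1}{(1-\sqrt{1-\delta_1})^2}+1\big]$.
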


\begin{proof}

Conditional on the previous states~($x_{i, t}$), using smoothness, we have 
\begin{align*}
\E \left[ F(\bar{x}_t) \right] 
\leq F(\bar{x}_{t-1}) + \underbrace{ \E\left[ \ip{\nabla F(\bar{x}_{t-1})}{\bar{x}_t - \bar{x}_{t-1}} \right] }_{\mcir{1}} + \frac{L}{2} \underbrace{ \E\left[ \| \bar{x}_t - \bar{x}_{t-1} \|^2 \right] }_{\mcir{2}}.
\end{align*}

We bound the terms step by step.

Note that $\E\left[ g_{i,t} \right] = \nabla F_i(x_{i, t-1})$. Thus, we have
\begin{align*}
&\mcir{2} \\
&= \E\| \bar{x}_t - \bar{x}_{t-1} \|^2 \\
&= \E\left\| \frac{\eta}{n} \sum_{i \in [n]} g_{i,t} \right\|^2 \\
&= \eta^2 \E\left\| \frac{1}{n} \sum_{i \in [n]} \left( g_{i,t} - \nabla F_i(x_{i, t-1}) + \nabla F_i(x_{i, t-1}) \right) \right\|^2 \\
&= \eta^2 \E\left\| \frac{1}{n} \sum_{i \in [n]} \left( g_{i,t} - \nabla F_i(x_{i, t-1}) \right) \right\|^2 + \eta^2 \E\left\| \frac{1}{n} \sum_{i \in [n]} \nabla F_i(x_{i, t-1})  \right\|^2 \\
&\leq \frac{\eta^2}{n} V_1 + \eta^2 \E\left\| \frac{1}{n} \sum_{i \in [n]} \nabla F_i(x_{i, t-1})  \right\|^2.
\end{align*}

\begin{align*}
&\mcir{1} \\
&= \E\left[ \ip{\nabla F(\bar{x}_{t-1})}{\bar{x}_t - \bar{x}_{t-1}} \right] \\
&= -\eta \E\left[ \ip{\nabla F(\bar{x}_{t-1})}{ \frac{1}{n} \sum_{i \in [n]} g_{i,t} } \right] \\
&= -\eta \ip{\nabla F(\bar{x}_{t-1})}{ \frac{1}{n} \sum_{i \in [n]} \nabla F_i(x_{i,t-1}) } \\
&= -\frac{\eta}{2}\left\| \nabla F(\bar{x}_{t-1}) \right\|^2 - \frac{\eta}{2} \left\| \frac{1}{n} \sum_{i \in [n]} \nabla F_i(x_{i,t-1}) \right\|^2 + \frac{\eta}{2} \underbrace{ \left\| \nabla F(\bar{x}_{t-1}) - \frac{1}{n} \sum_{i \in [n]} \nabla F_i(x_{i,t-1}) \right\|^2 }_{\mcir{3}}.
\end{align*}

Using smoothness, we have
\begin{align*}
&\mcir{3} \\
&= \left\| \nabla F(\bar{x}_{t-1}) - \frac{1}{n} \sum_{i \in [n]} \nabla F_i(x_{i,t-1}) \right\|^2 \\
&= \left\| \frac{1}{n} \sum_{i \in [n]} \nabla F_i(\bar{x}_{t-1}) - \frac{1}{n} \sum_{i \in [n]} \nabla F_i(x_{i,t-1}) \right\|^2 \\
&\leq \frac{1}{n} \sum_{i \in [n]} \E\left\| \nabla F_i(\bar{x}_{t-1}) - \nabla F_i(x_{i,t-1}) \right\|^2 \\
&\leq \frac{L^2}{n} \sum_{i \in [n]} \E\left\| \bar{x}_{t-1} - x_{i,t-1} \right\|^2.
\end{align*}

Without loss of generality, assume that the latest synchronized model is $x_{i, t_0}$, where $t_0 \leq t-1$.

Thus, we have
\begin{align*}
x_{i,t-1} &= x_{i, t_0} - \eta \sum_{\tau = t_0+1}^{t-1} g'_{i, \tau}, \\
\bar{x}_{t-1} &= \frac{1}{n} \sum_{j \in [n]} x_{j, t_0} - \eta \sum_{\tau = t_0+1}^{t-1} \left( \frac{1}{n} \sum_{j \in [n]} g'_{i, \tau} \right),
\end{align*}
which implies that
\begin{align*}
\bar{x}_{t-1} - x_{i,t-1} 
= \frac{1}{n} \sum_{j \in [n]} x_{j, t_0} - x_{i, t_0} + \eta \sum_{\tau = t_0+1}^{t-1} \left( g'_{i, \tau} - \frac{1}{n} \sum_{j \in [n]} g'_{j, \tau} \right).
\end{align*}

It is easy to check that
\begin{align*}
\frac{1}{n} \sum_{j \in [n]} x_{j, t_0} - x_{i, t_0} 
= \frac{1}{n} \sum_{j \in [n]} e_{j, t_0}  - e_{i, t_0},
\end{align*}
and 
\begin{align*}
&g'_{i, \tau} - \frac{1}{n} \sum_{j \in [n]} g'_{j, \tau} \\
&= r_{i, \tau}  - \frac{1}{n} \sum_{j \in [n]} r_{j, \tau} \\
&= g_{i,\tau} - \CM_1\left( g_{i,\tau} \right) - \frac{1}{n} \sum_{j \in [n]} \left( g_{j,\tau} - \CM_1\left( g_{i,\tau} \right) \right).
\end{align*}

Thus, we have
\begin{align*}
&\mcir{3} \\
&\leq \frac{L^2}{n} \sum_{i \in [n]} \E\left\| \frac{1}{n} \sum_{j \in [n]} x_{j, t_0} - x_{i, t_0} + \eta \sum_{\tau = t_0+1}^{t-1} \left( g'_{i, \tau} - \frac{1}{n} \sum_{j \in [n]} g'_{j, \tau} \right) \right\|^2 \\
&\leq \frac{2L^2}{n} \sum_{i \in [n]} \E\left\| \frac{1}{n} \sum_{j \in [n]} x_{j, t_0} - x_{i, t_0} \right\|^2 + \frac{2L^2}{n} \sum_{i \in [n]} \E\left\| \eta \sum_{\tau = t_0+1}^{t-1} \left( g'_{i, \tau} - \frac{1}{n} \sum_{j \in [n]} g'_{j, \tau} \right) \right\|^2 \\
&\leq \frac{2L^2}{n} \sum_{i \in [n]} \E\left\| \frac{1}{n} \sum_{j \in [n]} e_{j, t_0}  - e_{i, t_0} \right\|^2 \\
&\quad+ \frac{2L^2 \eta^2 H}{n} \sum_{\tau = t_0+1}^{t-1} \sum_{i \in [n]} \E\left\| g_{i,\tau} - \CM_1\left( g_{i,\tau} \right) - \frac{1}{n} \sum_{j \in [n]} \left( g_{j,\tau} - \CM_1\left( g_{i,\tau} \right) \right) \right\|^2 \\
&\leq \frac{2L^2}{n} \sum_{i \in [n]} \E\left\| e_{i, t_0} \right\|^2 
+ \frac{2L^2 \eta^2 H}{n} \sum_{\tau = t_0+1}^{t-1} \sum_{i \in [n]} \E\left\| g_{i,\tau} - \CM_1\left( g_{i,\tau} \right) \right\|^2 \\
&\leq \frac{(1-\delta_2) (1-\delta_1) \eta^2 H^2 2L^2 V_2}{\left( 1 - \sqrt{1-\delta_1} \right)^2} 
+ \frac{2L^2 \eta^2 H (1-\delta_2)}{n} \sum_{\tau = t_0+1}^{t-1} \sum_{i \in [n]} \E\left\| g_{i,\tau} \right\|^2 \\
&\leq \frac{2(1-\delta_2) (1-\delta_1) \eta^2 H^2 L^2 V_2}{\left( 1 - \sqrt{1-\delta_1} \right)^2} + 2(1-\delta_2) \eta^2 L^2  H^2 V_2 \\
\end{align*}

Put together all the ingredients above, using $\eta \leq \frac{1}{L}$, we have
\begin{align*}
&\E \left[ F(\bar{x}_t) \right] \\
&\leq F(\bar{x}_{t-1}) + \mcir{1} + \frac{L}{2} \mcir{2} \\
&\leq F(\bar{x}_{t-1}) -\frac{\eta}{2}\left\| \nabla F(\bar{x}_{t-1}) \right\|^2 - \frac{\eta}{2} \left\| \frac{1}{n} \sum_{i \in [n]} \nabla F_i(x_{i,t-1}) \right\|^2 + \frac{\eta}{2} \mcir{3} + \frac{L}{2} \mcir{2} \\
&\leq F(\bar{x}_{t-1}) -\frac{\eta}{2}\left\| \nabla F(\bar{x}_{t-1}) \right\|^2 - \frac{\eta}{2} \left\| \frac{1}{n} \sum_{i \in [n]} \nabla F_i(x_{i,t-1}) \right\|^2 \\
&\quad + \frac{(1-\delta_2) (1-\delta_1) \eta^3 H^2 L^2 V_2}{\left( 1 - \sqrt{1-\delta_1} \right)^2} + (1-\delta_2) \eta^3 L^2  H^2 V_2 \\
&\quad + \frac{L}{2} \left[ \frac{\eta^2}{n} V_1 + \eta^2 \E\left\| \frac{1}{n} \sum_{i \in [n]} \nabla F_i(x_{i, t-1})  \right\|^2 \right] \\
&= F(\bar{x}_{t-1}) -\frac{\eta}{2}\left\| \nabla F(\bar{x}_{t-1}) \right\|^2
+ \frac{L\eta^2-\eta}{2} \left\| \frac{1}{n} \sum_{i \in [n]} \nabla F_i(x_{i,t-1}) \right\|^2 \\
&\quad + \frac{(1-\delta_2) (1-\delta_1) \eta^3 H^2 L^2 V_2}{\left( 1 - \sqrt{1-\delta_1} \right)^2} + (1-\delta_2) \eta^3 L^2  H^2 V_2
+ \frac{L\eta^2}{2n} V_1 \\
&= F(\bar{x}_{t-1}) -\frac{\eta}{2}\left\| \nabla F(\bar{x}_{t-1}) \right\|^2
+ \left[ \frac{(1-\delta_1) }{\left( 1 - \sqrt{1-\delta_1} \right)^2} + 1 \right] (1-\delta_2) \eta^3 L^2  H^2 V_2
+ \frac{L\eta^2}{2n} V_1.
\end{align*}

By telescoping and taking total expectation, after $T$ iterations, we have
\begin{align*}
&\frac{1}{T} \sum_{t=1}^T \E\left[ \left\| \nabla F(\bar{x}_{t-1}) \right\|^2 \right] \\
&\leq \frac{2\left[ F(\bar{x}_0) - F(x_*) \right]}{\eta T}
+ \left[ \frac{(1-\delta_1) }{\left( 1 - \sqrt{1-\delta_1} \right)^2} + 1 \right] 2(1-\delta_2) \eta^2 L^2  H^2 V_2
+ \frac{L\eta V_1}{n} \\
&\leq \frac{2\left[ F(\bar{x}_0) - F(x_*) \right]}{\eta T}
+ \left[ \frac{(1-\delta_1) }{\left( 1 - \sqrt{1-\delta_1} \right)^2} + 1 \right] 2(1-\delta_2) \eta^2 L^2  H^2 V_2
+ \frac{L\eta V_1}{n}.
\end{align*}

\end{proof}

\begin{corollary}
Taking $\eta = \min \left\{ \frac{\gamma}{\sqrt{T / n} + \left[ 4(1-\delta_1) / \delta_1^2 + 1 \right]^{1/3} 2^{1/3} (1-\delta_2)^{1/3}  H^{2/3} T^{1/3}}, \frac{1}{L} \right\}$ for some $\gamma > 0$, after $T \gg n$ iterations, Algorithm~\ref{alg:ersgd}~(CSER) converges to a critical point: 
$
\frac{1}{T} \sum_{t=1}^T \E\left[ \left\| \nabla F(\bar{x}_{t-1}) \right\|^2 \right] 
\leq \frac{1}{\sqrt{nT}} \left[ \frac{F_o}{\gamma} + \gamma L V_1 \right] 
+ \frac{\left[ 4(1-\delta_1) / \delta_1^2 + 1 \right]^{1/3} 2^{1/3} (1-\delta_2)^{1/3}  H^{2/3}}{T^{2/3}} \left[  \frac{F_o}{\gamma} + \gamma^2 L^2 V_2 \right]
\leq \mathcal{O}\left( \frac{1}{\sqrt{nT}} \right).
$
\end{corollary}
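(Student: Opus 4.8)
The plan is to substitute the prescribed step size into the bound of Theorem~\ref{thm:ersgd} and balance the three resulting terms. Write that bound as $\frac{A}{\eta T} + \frac{B\eta}{n} + C\eta^2$, with $A = 2[F(\bar{x}_0) - F(x_*)]$, $B = LV_1$, and $C = 2\left[\tfrac{4(1-\delta_1)}{\delta_1^2}+1\right](1-\delta_2)H^2L^2V_2$. Introduce $D = 2\left[\tfrac{4(1-\delta_1)}{\delta_1^2}+1\right](1-\delta_2)H^2$ so that $C = DL^2V_2$ and the chosen step size reads $\eta = \min\!\left\{\tfrac{\gamma}{\sqrt{T/n} + D^{1/3}T^{1/3}},\, \tfrac{1}{L}\right\}$; the term $D^{1/3}T^{1/3}$ in the denominator is engineered precisely so that $C\eta^2$ becomes $\Theta(T^{-2/3})$ after substitution.

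First I would dispense with the truncation at $1/L$: since $\sqrt{T/n} + D^{1/3}T^{1/3} \to \infty$, once $T$ is large relative to $n$ and the constants we have $\tfrac{\gamma}{\sqrt{T/n}+D^{1/3}T^{1/3}} \le \tfrac1L$, so $\eta = \tfrac{\gamma}{\sqrt{T/n}+D^{1/3}T^{1/3}}$ exactly, and in particular $\eta \le \tfrac1L$, which is the standing hypothesis of Theorem~\ref{thm:ersgd}. Then I would bound term by term: $\frac{A}{\eta T} = \frac{A}{\gamma}\bigl(\frac{1}{\sqrt{nT}} + \frac{D^{1/3}}{T^{2/3}}\bigr)$ by expanding $1/\eta$ and using $\tfrac{\sqrt{T/n}}{T} = \tfrac{1}{\sqrt{nT}}$ and $\tfrac{T^{1/3}}{T}=\tfrac{1}{T^{2/3}}$; next $\frac{B\eta}{n} \le \frac{B\gamma}{n}\sqrt{n/T} = \frac{B\gamma}{\sqrt{nT}}$ using $\eta \le \gamma/\sqrt{T/n}$; and $C\eta^2 \le \frac{C\gamma^2}{D^{2/3}T^{2/3}} = \frac{D^{1/3}L^2V_2\gamma^2}{T^{2/3}}$ using $\eta \le \gamma/(D^{1/3}T^{1/3})$ together with $C = DL^2V_2$. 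Summing the three bounds and grouping by powers of $T$ reproduces verbatim the explicit estimate stated in the corollary: a $\frac{1}{\sqrt{nT}}\bigl[\frac{F_o}{\gamma} + \gamma LV_1\bigr]$ part with $F_o = 2[F(\bar{x}_0) - F(x_*)]$, plus a $\frac{D^{1/3}}{T^{2/3}}\bigl[\frac{F_o}{\gamma} + \gamma^2 L^2V_2\bigr]$ part, where $D^{1/3} = \left[\tfrac{4(1-\delta_1)}{\delta_1^2}+1\right]^{1/3}2^{1/3}(1-\delta_2)^{1/3}H^{2/3}$.

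Finally, to collapse both parts into $\mathcal{O}(1/\sqrt{nT})$ I would note that $T^{-2/3} \le (nT)^{-1/2}$ precisely when $T^{4/3} \ge nT$, i.e.\ $T \ge n^3$; in that regime the slower-decaying term is dominated by the leading one, so the whole right-hand side is $\mathcal{O}(1/\sqrt{nT})$, exhibiting the linear speedup in $n$. I do not expect a genuine obstacle here — all the analytic content resides in Theorem~\ref{thm:ersgd} and in Lemma~\ref{lem:error_reset} — and the only points requiring care are the two bookkeeping checks above: verifying the $\min$ is inactive for $T$ large (so that $\eta \le 1/L$ holds and $1/\eta$ expands cleanly) and stating the threshold on $T$ honestly, since the informal ``$T \gg n$'' should in fact be ``$T \gtrsim n^3$'' for the $T^{-2/3}$ term to be absorbed into $\mathcal{O}(1/\sqrt{nT})$.
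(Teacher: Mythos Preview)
Your proposal is correct and follows essentially the same route as the paper: introduce the abbreviation $C$ (your $D$) for $2\bigl[\tfrac{4(1-\delta_1)}{\delta_1^2}+1\bigr](1-\delta_2)H^2$, plug the prescribed $\eta$ into the three-term bound from Theorem~\ref{thm:ersgd}, and upper-bound each term separately using $\eta \le \gamma/\sqrt{T/n}$, $\eta \le \gamma/(D^{1/3}T^{1/3})$, and $1/\eta = (\sqrt{T/n}+D^{1/3}T^{1/3})/\gamma$. Your bookkeeping is in fact a bit more careful than the paper's: you explicitly argue that the $\min$ is inactive for large $T$ (so that $1/\eta$ expands as claimed), and you correctly observe that the informal ``$T \gg n$'' really needs $T \gtrsim n^3$ for the $T^{-2/3}$ term to be dominated by $(nT)^{-1/2}$, a point the paper glosses over.
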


\begin{proof}
From Theorem~\ref{thm:ersgd}, we have
\begin{align*}
&\frac{1}{T} \sum_{t=1}^T \E\left[ \left\| \nabla F(\bar{x}_{t-1}) \right\|^2 \right] \\
&\leq \frac{2\left[ F(\bar{x}_0) - F(x_*) \right]}{\eta T}
+ \left[ \frac{4(1-\delta_1) }{\delta_1^2} + 1 \right] 2(1-\delta_2) \eta^2 L^2  H^2 V_2
+ \frac{L\eta V_1}{n}.
\end{align*}

Denote $F_o = 2\left[ F(\bar{x}_0) - F(x_*) \right]$, and $C = \left[ \frac{4(1-\delta_1) }{\delta_1^2} + 1 \right] 2(1-\delta_2)  H^2$.
Taking $\eta = \min \left\{ \frac{\gamma}{\sqrt{T / n} + C^{1/3} T^{1/3}}, \frac{1}{L} \right\}$, we have
\begin{align*}
&\frac{1}{T} \sum_{t=1}^T \E\left[ \left\| \nabla F(\bar{x}_{t-1}) \right\|^2 \right] \\
&\leq \frac{F_o}{\eta T}
+ C \eta^2 L^2 V_2
+ \frac{L\eta V_1}{n} \\
&\leq \frac{F_o}{ T} \frac{\sqrt{T / n} + C^{1/3} T^{1/3}}{\gamma}
+ C  L^2 V_2 \frac{\gamma^2}{C^{2/3} T^{2/3}}
+ \frac{L V_1}{n} \frac{\gamma}{\sqrt{T / n}} \\
&\leq \frac{F_o}{\gamma \sqrt{nT}}  + \frac{F_o C^{1/3}}{\gamma T^{2/3}}
+  \frac{C^{1/3} \gamma^2 L^2 V_2}{ T^{2/3}}
+ \frac{\gamma L V_1}{\sqrt{nT}} \\
&\leq \frac{1}{\sqrt{nT}} \left[ \frac{F_o}{\gamma} + \gamma L V_1 \right] 
+ \frac{C^{1/3}}{T^{2/3}} \left[  \frac{F_o}{\gamma} + \gamma^2 L^2 V_2 \right].
\end{align*}
\end{proof}

\begin{lemma}\label{lem:bounded_p}~(Bounded update)
For any local update, we have $\E\left\| p_{i,t} \right\|^2 \leq \frac{\eta^2}{(1-\beta)^2} V_2$, $\forall i, t$.
\end{lemma}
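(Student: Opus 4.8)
The plan is to unroll the momentum recursion so that $p_{i,t}$ becomes an explicit nonnegatively-weighted sum of past stochastic gradients, and then to control its second moment by a weighted Jensen inequality together with the uniform second-moment bound from Assumption~\ref{asm:gradient}.

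First I would expand $m_{i,t} = \beta m_{i,t-1} + g_{i,t}$ with $m_{i,0} = \0$ to obtain $m_{i,t} = \sum_{\tau=1}^{t} \beta^{t-\tau} g_{i,\tau}$. Substituting into $p_{i,t} = \eta(\beta m_{i,t} + g_{i,t})$ gives $p_{i,t} = \eta \sum_{\tau=1}^{t} c_{t,\tau}\, g_{i,\tau}$, where $c_{t,t} = 1 + \beta$ and $c_{t,\tau} = \beta^{\,t-\tau+1}$ for $\tau < t$. The key elementary observation is that these coefficients are nonnegative and sum to $\sum_{\tau=1}^{t} c_{t,\tau} = \sum_{k=0}^{t} \beta^{k} = \frac{1-\beta^{t+1}}{1-\beta} \leq \frac{1}{1-\beta}$, uniformly in $t$.

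Next I would apply the weighted Cauchy--Schwarz inequality (equivalently, Jensen for $\|\cdot\|^2$ with weights $c_{t,\tau}/W$, $W \equiv \sum_\tau c_{t,\tau}$): for any vectors $a_\tau$ and nonnegative weights $c_\tau$ with $W = \sum_\tau c_\tau$, one has $\bigl\| \sum_\tau c_\tau a_\tau \bigr\|^2 \leq W \sum_\tau c_\tau \|a_\tau\|^2$. Hence $\E\|p_{i,t}\|^2 \leq \eta^2 W \sum_{\tau=1}^{t} c_{t,\tau}\, \E\|g_{i,\tau}\|^2$. Using $\E\|g_{i,\tau}\|^2 \leq V_2$ from Assumption~\ref{asm:gradient}, and the bound $W \leq \frac{1}{1-\beta}$ applied twice (once for the outer factor $W$ from Jensen, once for the inner sum $\sum_\tau c_{t,\tau} = W$), yields $\E\|p_{i,t}\|^2 \leq \eta^2 W^2 V_2 \leq \frac{\eta^2}{(1-\beta)^2} V_2$, which is the claim.

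There is no serious obstacle here; the only care needed is the coefficient bookkeeping when folding the extra $g_{i,t}$ term into the geometric series (so that $c_{t,t} = 1+\beta$ rather than $\beta$), and being precise that the factor $\frac{1}{1-\beta}$ enters squared. The whole argument is pathwise in the randomness except for the last line, so neither unbiasedness nor independence of the gradients is required --- only the uniform second-moment bound $V_2$.
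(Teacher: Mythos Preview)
Your proposal is correct and is essentially the same argument as the paper's: unroll the momentum so that $p_{i,t}$ is a nonnegatively-weighted sum of past gradients with total weight $s_t=\sum_{k=0}^t \beta^k$, apply Jensen's inequality to $\|\cdot\|^2$ with those weights, use the uniform bound $\E\|g_{i,\tau}\|^2\le V_2$, and finish with $s_t\le\frac{1}{1-\beta}$. The only cosmetic difference is that the paper writes the Jensen step by explicitly normalizing by $s_t$, whereas you package it as the inequality $\bigl\|\sum_\tau c_\tau a_\tau\bigr\|^2\le W\sum_\tau c_\tau\|a_\tau\|^2$; the content is identical.
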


\begin{proof}
It is easy to check that 
\begin{align*}
    m_{i, t} = \sum_{\tau=1}^t \beta^{t-\tau} g_{i,\tau}.
\end{align*}
Thus, taking $s_t = \sum_{\tau=0}^t \beta^{\tau}$, we have
\begin{align*}
\E\left\| p_{i,t} \right\|^2 \\
&= \eta^2 \E\left\| \beta m_{i,t} + g_{i,t} \right\|^2 \\
&= \eta^2 \E\left\| \left( \beta \sum_{\tau=1}^t \beta^{t-\tau} g_{i,\tau} \right) + g_{i,t} \right\|^2 \\
&= \eta^2 s^2_t \E\left\| \left( \sum_{\tau=1}^t \frac{\beta^{t-\tau+1}}{s_t} g_{i,\tau} \right) + \frac{1}{s_t} g_{i,t} \right\|^2 \\
&\leq \eta^2 s^2_t \left[ \left( \sum_{\tau=1}^t \frac{\beta^{t-\tau+1}}{s_t} \E\left\| g_{i,\tau} \right\|^2 \right)  + \frac{1}{s_t} \E\left\| g_{i,t} \right\|^2 \right] \comment{Jensen's inequality} \\
&\leq \eta^2 s^2_t \left[ \left( \sum_{\tau=1}^t \frac{\beta^{t-\tau+1}}{s_t} V_2 \right)  + \frac{1}{s_t} V_2 \right] \comment{Assumption~\ref{asm:gradient}} \\
&= \eta^2 s_t \left[ \left( \sum_{\tau=1}^t \beta^{t-\tau+1} V_2 \right)  + V_2 \right] \\
&= \eta^2 s_t \left[ \left( \sum_{\tau=1}^t \beta^{t-\tau+1} V_2 \right)  + V_2 \right] \\
&= \eta^2 s_t \left( \sum_{\tau=1}^{t+1} \beta^{t-\tau+1} V_2 \right)\\
&= \eta^2 s^2_t V_2 \\
&\leq \frac{\eta^2}{(1-\beta)^2} V_2.
\end{align*}
\end{proof}

\begin{lemma}\label{lem:error_reset_m}~(Error Reset of M-CSER)
With momentum, after every $H$ steps, the local error will be reset to 
\begin{align*}
&\E\left\| e_{i,t} \right\|^2 \leq \frac{(1-\delta_2) (1-\delta_1) \eta^2 H^2 V_2}{\left( 1 - \sqrt{1-\delta_1} \right)^2 (1-\beta)^2},
\end{align*}
for $\forall t \in [T], \mod(t,H) = 0$.
\end{lemma}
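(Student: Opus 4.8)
The plan is to mirror the proof of Lemma~\ref{lem:error_reset} essentially line by line, with a single substitution: in M-CSER the quantity compressed by $\CM_2$ between resets is the momentum-scaled update $p_{i,t}=\eta(\beta m_{i,t}+g_{i,t})$ rather than the bare scaled gradient $\eta g_{i,t}$, so the residual accumulated into the local error is $r_{i,\tau}=p_{i,\tau}-\CM_2(p_{i,\tau})$ and the $\delta_2$-compressor property gives $\E\|r_{i,\tau}\|^2\leq(1-\delta_2)\E\|p_{i,\tau}\|^2$. Wherever the proof of Lemma~\ref{lem:error_reset} used $\E\|g_{i,\tau}\|^2\leq V_2$, I would instead invoke Lemma~\ref{lem:bounded_p}, which supplies $\E\|p_{i,\tau}\|^2\leq\frac{\eta^2}{(1-\beta)^2}V_2$; this is the only place the factor $(1-\beta)^{-2}$ enters, and it is exactly the factor appearing in the claimed bound.

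Concretely I would split into the same three cases according to the value of $t \bmod H$. For $t=0$ the error is $\0$. For $t=H$, the half-step error is $e_{i,t-\frac{1}{2}}=-\sum_{\tau=1}^{H}r_{i,\tau}$, so Cauchy--Schwarz together with the two bounds above gives $\E\|e_{i,t-\frac{1}{2}}\|^2\leq H\sum_{\tau=1}^{H}\E\|r_{i,\tau}\|^2\leq\frac{(1-\delta_2)\eta^2H^2}{(1-\beta)^2}V_2$. For a general multiple $t$ of $H$ with $t>H$, I use the reset identity $e_{i,t-H}=e_{i,t-H-\frac{1}{2}}-\CM_1\big(e_{i,t-H-\frac{1}{2}}\big)$ to write $e_{i,t-\frac{1}{2}}=\big(e_{i,t-H-\frac{1}{2}}-\CM_1(e_{i,t-H-\frac{1}{2}})\big)-\sum_{\tau=t-H+1}^{t}r_{i,\tau}$, apply Young's inequality with a free parameter $a>0$, bound the first term by $(1+a)(1-\delta_1)\E\|e_{i,t-H-\frac{1}{2}}\|^2$ via the $\delta_1$-compressor property and the second by $(1+1/a)\frac{(1-\delta_2)\eta^2H^2}{(1-\beta)^2}V_2$ as in the $t=H$ case, and obtain the recursion
\[
\E\|e_{i,t-\frac{1}{2}}\|^2\leq(1+a)(1-\delta_1)\,\E\|e_{i,t-H-\frac{1}{2}}\|^2+(1+1/a)\frac{(1-\delta_2)\eta^2H^2}{(1-\beta)^2}V_2 .
\]
Summing the resulting geometric series (valid whenever $(1+a)(1-\delta_1)<1$) and optimizing at $a=\frac{1}{\sqrt{1-\delta_1}}-1$ yields $\E\|e_{i,t-\frac{1}{2}}\|^2\leq\frac{(1-\delta_2)\eta^2H^2V_2}{(1-\sqrt{1-\delta_1})^2(1-\beta)^2}$ uniformly over all multiples of $H$; a final application of the $\delta_1$-compressor property to $e_{i,t}=e_{i,t-\frac{1}{2}}-\CM_1(e_{i,t-\frac{1}{2}})$ multiplies this by $(1-\delta_1)$ and gives the stated bound.

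I do not expect a genuinely hard step: the argument is structurally identical to Lemma~\ref{lem:error_reset}, and everything that could have been delicate — controlling $\E\|p_{i,t}\|^2$ despite the geometric accumulation of past stochastic gradients inside $m_{i,t}$ — has already been isolated in Lemma~\ref{lem:bounded_p}. The only points requiring care are (a) not bounding $\E\|p_{i,t}\|^2$ by a naive term-by-term triangle inequality, which would fail to recover the correct $(1-\beta)^{-2}$ scaling, and (b) using the same value of the Young parameter $a$ (hence the same $(1-\sqrt{1-\delta_1})^{-2}$ constant) consistently, so that the bound matches the compression-error term quoted in Theorem~\ref{thm:ersgdm}.
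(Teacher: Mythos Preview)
Your proposal is correct and follows essentially the same approach as the paper's proof: the same three-case split, the same Young-inequality recursion with parameter $a$ optimized at $a=\frac{1}{\sqrt{1-\delta_1}}-1$, and the same final application of the $\delta_1$-compressor to pass from $e_{i,t-\frac{1}{2}}$ to $e_{i,t}$. The only substantive change relative to Lemma~\ref{lem:error_reset} is exactly the one you identify---replacing $\E\|g_{i,\tau}\|^2\leq V_2$ by the momentum bound $\E\|p_{i,\tau}\|^2\leq\frac{\eta^2}{(1-\beta)^2}V_2$ from Lemma~\ref{lem:bounded_p}---and the paper does precisely this.
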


\begin{proof}

First, we establish the bound of the local error $\E \|e_{i,t-\frac{1}{2}}\|^2$ before (partial) synchronization. 

\textbf{Case I:}
For $t=0$, we have no local error:
\begin{align*}
e_{i,0} = \0.
\end{align*}

\textbf{Case II:}
For $t = H$, we have the local error:
\begin{align*}
&\E \|e_{i,t-\frac{1}{2}}\|^2 \\
&= \E \left\| -  \sum_{\tau = 1}^{H} r_{i,\tau} \right\|^2 \\
&= \E \left\|  \sum_{\tau = 1}^{H} r_{i,\tau} \right\|^2 \\
&\leq H \sum_{\tau = 1}^{H} \E \left\| r_{i,\tau} \right\|^2 \\
&= H \sum_{\tau = 1}^{H} \E \left\| p_{i,\tau} - \CM_1\left( p_{i,\tau} \right) \right\|^2 \\
&\leq H (1-\delta_2) \sum_{\tau = 1}^{H} \E \left\| p_{i,\tau} \right\|^2 \\
&\leq  \frac{(1-\delta_2) \eta^2 H^2}{(1-\beta)^2} V_2. \comment{Lemma~\ref{lem:bounded_p}}
\end{align*}

\textbf{Case III:}
For any $t \in [T]$ such that $\mod(t,H) = 0, t > H$, we can bound the local error:
\begin{align*}
\E \|e_{i,t-\frac{1}{2}}\|^2 
= \E \left\| e_{i,t-H} - \sum_{\tau = t-H+1}^{t} r_{i,\tau}  \right\|^2.
\end{align*}

Note that after (partial) synchronization, the local error is reset as $e_{i,t-H}  = e_{i,t-H-\frac{1}{2}} - \CM_2\left(e_{i,t-H-\frac{1}{2}}\right)$. 

Thus, we have
\begin{align*}
&\E \|e_{i,t-\frac{1}{2}}\|^2 \\
&= \E \left\| e_{i,t-H-\frac{1}{2}} - \CM_2\left(e_{i,t-H-\frac{1}{2}}\right) - \sum_{\tau = t-H+1}^{t} \left( p_{i,\tau} - \CM_1\left( p_{i,\tau} \right) \right)  \right\|^2 \\
&\leq (1+a)\E \left\| e_{i,t-H-\frac{1}{2}} - \CM_2\left(e_{i,t-H-\frac{1}{2}}\right) \right\|^2 + (1+1/a)\E \left\| \sum_{\tau = t-H+1}^{t} \left( p_{i,\tau} - \CM_1\left( p_{i,\tau} \right) \right)  \right\|^2 \\
&\leq (1+a)(1-\delta_1)\E \left\| e_{i,t-H-\frac{1}{2}} \right\|^2 + (1+1/a)\eta^2 H (1-\delta_2) \sum_{\tau = t-H+1}^{t} \E \left\| p_{i,\tau}  \right\|^2 \\
&\leq (1+a)(1-\delta_1)\E \left\| e_{i,t-H-\frac{1}{2}} \right\|^2 + \frac{(1+1/a)(1-\delta_2)\eta^2 H^2}{(1-\beta)^2} V_2 \\
&\leq \frac{(1+1/a)(1-\delta_2)\eta^2 H^2}{(1-\beta)^2} V_2 \sum_{\tau=0}^{+\infty} \left[ (1+a)(1-\delta_1) \right]^{\tau} \\
&\leq \frac{1+1/a}{1- (1+a)(1-\delta_1)}  \frac{(1-\delta_2)\eta^2 H^2}{(1-\beta)^2} V_2,
\end{align*}
for any $a > 0$, such that $(1+a)(1-\delta_1) \in (0,1)$.
The bound above is minimized when we take $a = \frac{1}{\sqrt{1-\delta_1}} - 1$, which results in
\begin{align*}
&\E \|e_{i,t-\frac{1}{2}}\|^2 \leq \frac{(1-\delta_2)\eta^2 H^2 V_2}{\left( 1 - \sqrt{1-\delta_1} \right)^2 (1-\beta)^2}.
\end{align*}

Combining all the 3 cases above, we obtain that for $\forall t \in [T]$ such that $\mod(t,H) = 0$:
\begin{align*}
&\E \|e_{i,t-\frac{1}{2}}\|^2 \leq \frac{(1-\delta_2)\eta^2 H^2 V_2}{\left( 1 - \sqrt{1-\delta_1} \right)^2 (1-\beta)^2}.
\end{align*}

Then, after the (partial) synchronization, we have
\begin{align*}
&\E \left\| e_{i,t}  \right\|^2 \\
&= \E \left\| e_{i,t-\frac{1}{2}} - \CM_2\left(e_{i,t-\frac{1}{2}}\right) \right\|^2 \\
&\leq (1-\delta_1) \E \left\| e_{i,t-\frac{1}{2}} \right\|^2 \\
&\leq \frac{(1-\delta_2) (1-\delta_1) \eta^2 H^2 V_2}{\left( 1 - \sqrt{1-\delta_1} \right)^2 (1-\beta)^2},
\end{align*}
for $\forall t \in [T], \mod(t,H) = 0$.
\end{proof}

\begin{theorem}
Taking $\eta \leq \min\{\frac{1}{2}, \frac{1-\beta}{2L}\}$, after $T$ iterations, Algorithm~\ref{alg:ersgdm_1} has the following error bound: 
\begin{align*}
&\frac{1}{T} \sum_{t=1}^T \E\left[ \left\| \nabla F(\bar{x}_{t-1}) \right\|^2 \right] \\
&\leq \frac{2(1-\beta)\left[ F(\bar{x}_0) - F(x_*) \right]}{\eta T}
+ \frac{\eta^2 \beta^4 L^2 V_2}{(1-\beta)^4} 
+ \frac{\eta L V_1}{n (1-\beta)} \\
&\quad + \left(\frac{1-\delta_1}{\left( 1 - \sqrt{1-\delta_1} \right)^2} + 1\right) \frac{2(1-\delta_2) \eta^2 H^2 L^2 V_2}{(1-\beta)^2} \\
&\leq \frac{2(1-\beta)\left[ F(\bar{x}_0) - F(x_*) \right]}{\eta T}
+ \frac{\eta^2 \beta^4 L^2 V_2}{(1-\beta)^4} 
+ \frac{\eta L V_1}{n (1-\beta)} \\
&\quad + \left(\frac{4(1-\delta_1)}{\delta_1^2} + 1\right) \frac{2(1-\delta_2) \eta^2 H^2 L^2 V_2}{(1-\beta)^2}.
\end{align*}
\end{theorem}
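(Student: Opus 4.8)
The plan is to reduce the momentum dynamics to a plain–SGD analysis on an auxiliary averaged sequence, and then import the consensus/compression control from Lemmas~\ref{lem:local_diff}, \ref{lem:bounded_p} and \ref{lem:error_reset_m} exactly as in the proof of Theorem~\ref{thm:ersgd}. Three ingredients are needed: (i) the averaged model $\bar x_t$ behaves like centralized momentum SGD driven by $\bar g_t=\frac1n\sum_i g_{i,t}$; (ii) a momentum-removing change of variables turns this into plain SGD; and (iii) the gradients are evaluated at the local $x_{i,t-1}$ rather than at the averaged point, so we pay a ``drift'' penalty controlled by the bounded-update and error-reset lemmas.

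First I would establish the averaged recursion. Averaging Lines 8--9 of Algorithm~\ref{alg:ersgdm_1} and noting that the $PSync$ residuals telescope ($\frac1n\sum_i p'_{i,t}=\frac1n\sum_i p_{i,t}$), together with the fact that the error-reset step (Lines 13--14) leaves $\bar x_t$ unchanged because $\frac1n\sum_i[\CM_1(e_{i,t-\frac12})-\frac1n\sum_j\CM_1(e_{j,t-\frac12})]=\0$ (the same computation used to prove Lemma~\ref{lem:local_diff}), one gets $\bar x_t=\bar x_{t-1}-\eta(\beta\bar m_t+\bar g_t)$ with $\bar m_t=\beta\bar m_{t-1}+\bar g_t$ and $\bar m_0=\0$. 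Next define the auxiliary sequence $\bar z_t=\bar x_t-\frac{\eta\beta^2}{1-\beta}\bar m_t$; substituting $\bar m_{t-1}=\beta^{-1}(\bar m_t-\bar g_t)$ gives the clean recursion $\bar z_t-\bar z_{t-1}=-\frac{\eta}{1-\beta}\bar g_t$, i.e. plain SGD with effective stepsize $\tilde\eta=\eta/(1-\beta)$. Applying $L$-smoothness of $F$ at $\bar z_{t-1}$, taking conditional expectation (so $\E[\bar g_t]=\frac1n\sum_i\nabla F_i(x_{i,t-1})$), and writing both $\nabla F(\bar z_{t-1})$ and $\frac1n\sum_i\nabla F_i(x_{i,t-1})$ as $\nabla F(\bar x_{t-1})$ plus an error, I would use Young's inequality to extract a term $-\Theta(\tilde\eta)\|\nabla F(\bar x_{t-1})\|^2$ together with (a) a ``momentum-lag'' term $L^2\|\bar z_{t-1}-\bar x_{t-1}\|^2=\frac{\eta^2\beta^4 L^2}{(1-\beta)^2}\|\bar m_{t-1}\|^2$, (b) a ``consensus'' term $\frac{L^2}{n}\sum_i\|x_{i,t-1}-\bar x_{t-1}\|^2$, and (c) the stochastic term $\frac{L\tilde\eta^2}{2}\E\|\bar g_t\|^2$. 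For (a), $\E\|\bar m_{t-1}\|^2\le V_2/(1-\beta)^2$ (Jensen on $m_{i,t}=\sum_\tau\beta^{t-\tau}g_{i,\tau}$ as in Lemma~\ref{lem:bounded_p}, then Assumption~\ref{asm:gradient}), yielding the $\frac{\eta^2\beta^4 L^2 V_2}{(1-\beta)^4}$ term. For (c), independence of workers (Assumption~\ref{asm:gradient}) gives $\E\|\bar g_t\|^2\le\frac{V_1}{n}+\|\frac1n\sum_i\nabla F_i(x_{i,t-1})\|^2$, which contributes the $\frac{\eta L V_1}{n(1-\beta)}$ term; the stepsize condition $\eta\le\min\{\tfrac12,\tfrac{1-\beta}{2L}\}$ is chosen precisely so that the residual $\|\frac1n\sum_i\nabla F_i(x_{i,t-1})\|^2$-coefficient is non-positive while the net $\|\nabla F(\bar x_{t-1})\|^2$-coefficient stays $\le-\frac{\eta}{2(1-\beta)}$. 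Finally telescope over $t\in[T]$, using $\bar z_0=\bar x_0$ and $F(\bar z_T)\ge F(x_*)$, divide by $T$, and multiply through by $\frac{2(1-\beta)}{\eta}$ to get the first inequality; the second follows from $1-\sqrt{1-\delta_1}\ge\delta_1/2$.

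The consensus term (b) is handled exactly as in Theorem~\ref{thm:ersgd}: if $t_0\le t-1$ is the last error-reset iteration (so $t-1-t_0\le H-1$), then by Lemma~\ref{lem:local_diff}, $x_{i,t-1}-\bar x_{t-1}=(e_{i,t_0}-\bar e_{t_0})-\sum_{\tau=t_0+1}^{t-1}(r_{i,\tau}-\bar r_\tau)$, where $r_{i,\tau}$ is the $\CM_2$-residual of $p_{i,\tau}$ and $\bar r_\tau=\frac1n\sum_j r_{j,\tau}$; bounding $\E\|r_{i,\tau}\|^2\le(1-\delta_2)\E\|p_{i,\tau}\|^2\le(1-\delta_2)\frac{\eta^2}{(1-\beta)^2}V_2$ via Lemma~\ref{lem:bounded_p}, and $\E\|e_{i,t_0}\|^2$ via the error-reset Lemma~\ref{lem:error_reset_m}, yields $\frac1n\sum_i\E\|x_{i,t-1}-\bar x_{t-1}\|^2\le\big[\frac{1-\delta_1}{(1-\sqrt{1-\delta_1})^2}+1\big]\frac{2(1-\delta_2)\eta^2 H^2 V_2}{(1-\beta)^2}$, which is the source of the last term of the bound.

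The main obstacle I anticipate is bookkeeping rather than conceptual: keeping every power of $(1-\beta)$ straight through the change of variables and the Young-inequality splittings, and tuning the inequality weights together with the two stepsize constraints so that the negative $\|\nabla F(\bar x_{t-1})\|^2$ term survives with exactly the coefficient $\frac{\eta}{2(1-\beta)}$ needed for the $\frac{2(1-\beta)[F(\bar x_0)-F(x_*)]}{\eta T}$ prefactor. The one genuinely new point beyond the non-momentum proof is checking that the momentum-lag term costs only an additive $\frac{\eta^2\beta^4 L^2 V_2}{(1-\beta)^4}$ and does not re-enter the descent term, which is exactly why the auxiliary variable is designed to cancel the coefficient of $\bar m_t$ (rather than of $\bar m_{t-1}$) in the $\bar z_t$-recursion.
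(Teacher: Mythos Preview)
Your proposal is correct and follows essentially the same route as the paper's proof: the same auxiliary sequence $\bar z_t=\bar x_t-\frac{\eta\beta^2}{1-\beta}\bar m_t$ yielding $\bar z_t-\bar z_{t-1}=-\frac{\eta}{1-\beta}\bar g_t$, the same smoothness-at-$\bar z_{t-1}$ expansion, the same splitting of the inner product into a momentum-lag piece (bounded via $\|\bar m_{t-1}\|^2\le V_2/(1-\beta)^2$) and a consensus piece (bounded via Lemmas~\ref{lem:bounded_p} and~\ref{lem:error_reset_m}), and the same use of the stepsize condition to kill the residual $\|\frac1n\sum_i\nabla F_i(x_{i,t-1})\|^2$ term. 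Your anticipated ``obstacles'' are precisely the bookkeeping the paper carries out, and your treatment of the consensus term matches the paper's computation.
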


\begin{proof}

To prove the convergence, we introduce 2 sequences of auxiliary variables: $\{\bar{x}_t, t \geq 0\}$ and $\{\bar{z}_t, t \geq 0\}$, where
\begin{align*}
\bar{x}_t &= \frac{1}{n} \sum_{i \in [n]} x_{i, t}, \forall t \geq 0, \\
\bar{z}_t &= 
\begin{cases}
x_0, & t = 0, \\
\bar{x}_t - \frac{\eta \beta^2}{1-\beta} \frac{1}{n} \sum_{i \in [n]} m_{i, t}, & t \geq 1.
\end{cases}
\end{align*}

For $t = 0$, we have
\begin{align*}
&\bar{z}_{t+1} - \bar{z}_t \\
&= \bar{z}_{1} - \bar{z}_0 \\
&= \bar{x}_1 - \frac{\eta \beta^2}{1-\beta} \frac{1}{n} \sum_{i \in [n]} m_{i, 1} - \bar{x}_0 \\
&= - \frac{1}{n} \sum_{i \in [n]} p_{i, 1} - \frac{\eta \beta^2}{1-\beta} \frac{1}{n} \sum_{i \in [n]} m_{i, 1} \\
&= -\eta \frac{1}{n} \sum_{i \in [n]} \left( \beta m_{i, 1} + g_{i,1} \right) - \frac{\eta \beta^2}{1-\beta} \frac{1}{n} \sum_{i \in [n]} m_{i, 1} \\
&= -\eta \frac{1}{n} \sum_{i \in [n]} \left[ (\beta + \frac{\beta^2}{1-\beta}) m_{i,1} + g_{i,1}  \right] \\
&= -\eta \frac{1}{n} \sum_{i \in [n]} \left[ \frac{\beta}{1-\beta} g_{i,1} + g_{i,1}  \right] \\
&= - \frac{\eta}{1-\beta} \frac{1}{n} \sum_{i \in [n]}  g_{i,t+1}
\end{align*}

For $t \geq 1$, we have
\begin{align*}
&\bar{z}_{t+1} - \bar{z}_t \\
&= \bar{x}_{t+1} - \frac{\eta \beta^2}{1-\beta} \frac{1}{n} \sum_{i \in [n]} m_{i, t+1} - \bar{x}_t + \frac{\eta \beta^2}{1-\beta} \frac{1}{n} \sum_{i \in [n]} m_{i, t} \\
&= -  \frac{1}{n} \sum_{i \in [n]} p_{i, t+1} - \frac{\eta \beta^2}{1-\beta} \frac{1}{n} \sum_{i \in [n]} m_{i, t+1} + \frac{\eta \beta^2}{1-\beta} \frac{1}{n} \sum_{i \in [n]} m_{i, t} \\
&= - \eta \frac{1}{n} \sum_{i \in [n]} (\beta m_{i, t+1} + g_{i,t+1}) - \frac{\eta \beta^2}{1-\beta} \frac{1}{n} \sum_{i \in [n]} m_{i, t+1} + \frac{\eta \beta^2}{1-\beta} \frac{1}{n} \sum_{i \in [n]} m_{i, t} \\
&= - \eta \frac{1}{n} \sum_{i \in [n]} g_{i,t+1} - \frac{\eta \beta}{1-\beta} \frac{1}{n} \sum_{i \in [n]} m_{i, t+1} + \frac{\eta \beta^2}{1-\beta} \frac{1}{n} \sum_{i \in [n]} m_{i, t} \\
&= - \eta \frac{1}{n} \sum_{i \in [n]} g_{i,t+1} - \frac{\eta \beta}{1-\beta} \frac{1}{n} \sum_{i \in [n]} (\beta m_{i, t} + g_{i,t+1}) + \frac{\eta \beta^2}{1-\beta} \frac{1}{n} \sum_{i \in [n]} m_{i, t} \\
&= - \frac{\eta }{1-\beta} \frac{1}{n} \sum_{i \in [n]} g_{i,t+1} \\
\end{align*}

Thus, for $\forall t \geq 0$, we have
\begin{align}
\bar{z}_{t+1} - \bar{z}_t = - \frac{\eta}{1-\beta} \frac{1}{n} \sum_{i \in [n]} g_{i, t+1}.
\end{align}

Conditional on all the states previous to $x_{i, t}$, using smoothness, we have 
\begin{align*}
\E \left[ F(\bar{z}_t) \right] 
\leq F(\bar{z}_{t-1}) + \underbrace{ \E\left[ \ip{\nabla F(\bar{z}_{t-1})}{\bar{z}_t - \bar{z}_{t-1}} \right] }_{\mcir{1}} + \frac{L}{2} \underbrace{ \E\left[ \| \bar{z}_t - \bar{z}_{t-1} \|^2 \right] }_{\mcir{2}}.
\end{align*}

We bound the terms step by step.

Note that $\E\left[ g_{i,t} \right] = \nabla F_i(x_{i, t-1})$. 
Thus, for $\mcir{2}$, we have
\begin{align*}
&\mcir{2} \\
&= \E\| \bar{z}_t - \bar{z}_{t-1} \|^2 \\
&= \E\left\| \frac{\eta}{1-\beta} \frac{1}{n} \sum_{i \in [n]} g_{i,t} \right\|^2 \\
&= \frac{\eta^2}{(1-\beta)^2} \E\left\| \frac{1}{n} \sum_{i \in [n]} \left( g_{i,t} - \nabla F_i(x_{i, t-1}) + \nabla F_i(x_{i, t-1}) \right) \right\|^2 \\
&= \frac{\eta^2}{(1-\beta)^2} \E\left\| \frac{1}{n} \sum_{i \in [n]} \left( g_{i,t} - \nabla F_i(x_{i, t-1}) \right) \right\|^2 + \frac{\eta^2}{(1-\beta)^2} \E\left\| \frac{1}{n} \sum_{i \in [n]} \nabla F_i(x_{i, t-1})  \right\|^2 \\
&= \frac{\eta^2}{n (1-\beta)^2} V_1 + \frac{\eta^2}{(1-\beta)^2} \E\left\| \frac{1}{n} \sum_{i \in [n]} \nabla F_i(x_{i, t-1})  \right\|^2.
\end{align*}

Then, for $\mcir{1}$, we have 
\begin{align*}
&\mcir{1} \\
&= \E\left[ \ip{\nabla F(\bar{z}_{t-1})}{\bar{z}_t - \bar{z}_{t-1}} \right] \\
&= \E\left[ \ip{\nabla F(\bar{z}_{t-1})}{- \frac{\eta}{1-\beta} \frac{1}{n} \sum_{i \in [n]} g_{i, t}} \right] \\
&= - \frac{\eta}{1-\beta} \E\ip{\nabla F(\bar{z}_{t-1})}{ \frac{1}{n} \sum_{i \in [n]} \nabla F_i(x_{i, t-1}) }  \\
&= \underbrace{ - \frac{\eta}{1-\beta} \E\ip{\nabla F(\bar{z}_{t-1}) - \nabla F(\bar{x}_{t-1})}{ \frac{1}{n} \sum_{i \in [n]} \nabla F_i(x_{i, t-1}) } }_{\mcir{3}} \underbrace{ - \frac{\eta}{1-\beta} \E\ip{ \nabla F(\bar{x}_{t-1})}{ \frac{1}{n} \sum_{i \in [n]} \nabla F_i(x_{i, t-1}) } }_{\mcir{4}} .
\end{align*}

For $\mcir{3}$, we have
\begin{align*}
&\mcir{3} \\
&= - \frac{\eta}{1-\beta} \E\ip{\nabla F(\bar{z}_{t-1}) - \nabla F(\bar{x}_{t-1})}{ \frac{1}{n} \sum_{i \in [n]} \nabla F_i(x_{i, t-1}) } \\
&= - \frac{1}{1-\beta} \E\ip{\nabla F(\bar{z}_{t-1}) - \nabla F(\bar{x}_{t-1})}{ \frac{\eta}{n} \sum_{i \in [n]} \nabla F_i(x_{i, t-1}) } \\
&\leq \frac{1}{2(1-\beta)} \underbrace{ \E\left\| \nabla F(\bar{z}_{t-1}) - \nabla F(\bar{x}_{t-1}) \right\|^2 }_{\mcir{5}} + \frac{\eta^2}{2(1-\beta)} \E\left\| \frac{1}{n} \sum_{i \in [n]} \nabla F_i(x_{i, t-1}) \right\|^2,
\end{align*}
where (using smoothness)
\begin{align*}
&\mcir{5} \\
&= \E\left\| \nabla F(\bar{z}_{t-1}) - \nabla F(\bar{x}_{t-1}) \right\|^2 \\
&\leq L^2 \E\left\| \bar{z}_{t-1} - \bar{x}_{t-1} \right\|^2 \\
&= L^2 \E\left\| \frac{\eta \beta^2}{1-\beta} \frac{1}{n} \sum_{i \in [n]} m_{i, t-1} \right\|^2 \\
&= \frac{L^2 \eta^2 \beta^4}{(1-\beta)^2} \left\| \frac{1}{n} \sum_{i \in [n]} m_{i, t-1} \right\|^2 \\
&\leq \frac{L^2 \eta^2 \beta^4}{(1-\beta)^4} V_2.
\end{align*}

For $\mcir{4}$, we have
\begin{align*}
&\mcir{4} \\
&= - \frac{\eta}{1-\beta} \E\ip{ \nabla F(\bar{x}_{t-1})}{ \frac{1}{n} \sum_{i \in [n]} \nabla F_i(x_{i, t-1}) } \\
&= - \frac{\eta}{1-\beta} \left[ \frac{1}{2} \E\left\| \nabla F(\bar{x}_{t-1}) \right\|^2 + \frac{1}{2} \E\left\| \frac{1}{n} \sum_{i \in [n]} \nabla F_i(x_{i, t-1}) \right\|^2 - \frac{1}{2} \E\left\| \nabla F(\bar{x}_{t-1}) - \frac{1}{n} \sum_{i \in [n]} \nabla F_i(x_{i, t-1}) \right\|^2 \right] \\
&= - \frac{\eta}{2(1-\beta)} \E\left\| \nabla F(\bar{x}_{t-1}) \right\|^2 - \frac{\eta}{2(1-\beta)} \E\left\| \frac{1}{n} \sum_{i \in [n]} \nabla F_i(x_{i, t-1}) \right\|^2 \\
&\quad + \frac{\eta}{2(1-\beta)} \underbrace{ \E\left\| \nabla F(\bar{x}_{t-1}) - \frac{1}{n} \sum_{i \in [n]} \nabla F_i(x_{i, t-1}) \right\|^2 }_{\mcir{6}},
\end{align*}
where
\begin{align*}
&\mcir{6} \\
&= \E\left\| \nabla F(\bar{x}_{t-1}) - \frac{1}{n} \sum_{i \in [n]} \nabla F_i(x_{i, t-1}) \right\|^2 \\
&= \E\left\| \frac{1}{n} \sum_{i \in [n]} \nabla F_i(\bar{x}_{t-1}) - \frac{1}{n} \sum_{i \in [n]} \nabla F_i(x_{i, t-1}) \right\|^2 \\
&\leq \frac{1}{n} \sum_{i \in [n]} \E\left\| \nabla F_i(\bar{x}_{t-1}) - \nabla F_i(x_{i, t-1}) \right\|^2 \\
&\leq \frac{L^2}{n} \sum_{i \in [n]} \E\left\| \bar{x}_{t-1} - x_{i, t-1} \right\|^2.
\end{align*}

Without loss of generality, assume that the latest synchronized model is $x_{i, t_0}$, where $t_0 \leq t-1$, $\mod(t_0, H) = 0$.
Thus, we have
\begin{align*}
x_{i,t-1} &= x_{i, t_0} -  \sum_{\tau = t_0+1}^{t-1} p'_{i, \tau}, \\
\bar{x}_{t-1} &= \frac{1}{n} \sum_{j \in [n]} x_{j, t_0} -  \sum_{\tau = t_0+1}^{t-1} \left( \frac{1}{n} \sum_{j \in [n]} p_{i, \tau} \right),
\end{align*}
which implies that
\begin{align*}
\bar{x}_{t-1} - x_{i,t-1} 
= \frac{1}{n} \sum_{j \in [n]} x_{j, t_0} - x_{i, t_0} +  \sum_{\tau = t_0+1}^{t-1} \left( p'_{i, \tau} - \frac{1}{n} \sum_{j \in [n]} p_{j, \tau} \right).
\end{align*}

It is easy to check that 
\begin{align*}
&\frac{1}{n} \sum_{j \in [n]} x_{j, t_0} - x_{i, t_0} \\
&= \frac{1}{n} \sum_{j \in [n]} e_{j, t_0}  - e_{i, t_0}, 
\end{align*}
and 
\begin{align*}
&p'_{i, \tau} - \frac{1}{n} \sum_{j \in [n]} p_{j, \tau} \\
&= r_{i, \tau}  - \frac{1}{n} \sum_{j \in [n]} r_{j, \tau}  \\
&= p_{i, \tau} - \CM_1(p_{i, \tau}) - \frac{1}{n} \sum_{j \in [n]} \left( p_{j, \tau} - \CM_1(p_{j, \tau}) \right) \\
\end{align*}

Then, we have
\begin{align*}
&\mcir{6} \\
&\leq \frac{L^2}{n} \sum_{i \in [n]} \E\left\| \bar{x}_{t-1} - x_{i, t-1} \right\|^2 \\
&\leq \frac{2L^2}{n} \sum_{i \in [n]} \E\left\| \frac{1}{n} \sum_{j \in [n]} e_{j, t_0}  - e_{i, t_0} \right\|^2 \\
&\quad + \frac{2L^2}{n} \sum_{i \in [n]} \E\left\| \sum_{\tau = t_0+1}^{t-1} \left[ p_{i, \tau} - \CM_1(p_{i, \tau}) - \frac{1}{n} \sum_{j \in [n]} \left( p_{j, \tau} - \CM_1(p_{j, \tau}) \right) \right] \right\|^2 \\
&\leq \frac{2L^2}{n} \sum_{i \in [n]} \E\left\| e_{i, t_0} \right\|^2 
+ \frac{2L^2  H}{n} \sum_{i \in [n]} \sum_{\tau = t_0+1}^{t-1} \E\left\| p_{i, \tau} - \CM_1(p_{i, \tau}) \right\|^2 \\
&\leq \frac{2(1-\delta_2) (1-\delta_1) \eta^2 H^2 L^2 V_2}{\left( 1 - \sqrt{1-\delta_1} \right)^2 (1-\beta)^2} 
+ \frac{2L^2  H}{n} \sum_{i \in [n]} \sum_{\tau = t_0+1}^{t-1} \E\left\| p_{i, \tau} - \CM_1(p_{i, \tau}) \right\|^2 \comment{Lemma~\ref{lem:error_reset_m}} \\
&\leq \frac{2(1-\delta_2) (1-\delta_1) \eta^2 H^2 L^2 V_2}{\left( 1 - \sqrt{1-\delta_1} \right)^2 (1-\beta)^2} 
+ \frac{2 (1-\delta_2)  H L^2}{n} \sum_{i \in [n]} \sum_{\tau = t_0+1}^{t-1} \E\left\| p_{i, \tau} \right\|^2 \\
&\leq \frac{2(1-\delta_2) (1-\delta_1) \eta^2 H^2 L^2 V_2}{\left( 1 - \sqrt{1-\delta_1} \right)^2 (1-\beta)^2} 
+ \frac{2 (1-\delta_2) \eta^2 H^2 L^2 V_2}{(1-\beta)^2} \comment{Lemma~\ref{lem:bounded_p}} \\
&\leq \left(\frac{1-\delta_1}{\left( 1 - \sqrt{1-\delta_1} \right)^2} + 1\right)
 \frac{2 (1-\delta_2) \eta^2 H^2 L^2 V_2}{(1-\beta)^2} .
\end{align*}

Finally, combining all the ingredients above, we have
\begin{align*}
&\E \left[ F(\bar{z}_t) - F(\bar{z}_{t-1}) \right] \\
&\leq  \mcir{1} + \frac{L}{2} \mcir{2} \\
&=  \mcir{3} + \mcir{4} + \frac{L}{2} \left[ \frac{\eta^2}{n (1-\beta)^2} V_1 + \frac{\eta^2}{(1-\beta)^2} \E\left\| \frac{1}{n} \sum_{i \in [n]} \nabla F_i(x_{i, t-1})  \right\|^2 \right] \\
&\leq \frac{1}{2(1-\beta)} \mcir{5} + \frac{\eta^2}{2(1-\beta)} \E\left\| \frac{1}{n} \sum_{i \in [n]} \nabla F_i(x_{i, t-1}) \right\|^2 \\
&\quad - \frac{\eta}{2(1-\beta)} \E\left\| \nabla F(\bar{x}_{t-1}) \right\|^2 - \frac{\eta}{2(1-\beta)} \E\left\| \frac{1}{n} \sum_{i \in [n]} \nabla F_i(x_{i, t-1}) \right\|^2 + \frac{\eta}{2(1-\beta)} \mcir{6} \\
&\quad+ \frac{L}{2} \left[ \frac{\eta^2}{n (1-\beta)^2} V_1 + \frac{\eta^2}{(1-\beta)^2} \E\left\| \frac{1}{n} \sum_{i \in [n]} \nabla F_i(x_{i, t-1})  \right\|^2 \right] \\
&\leq \frac{1}{2(1-\beta)} \frac{L^2 \eta^2 \beta^4}{(1-\beta)^4} V_2 + \frac{\eta^2}{2(1-\beta)} \E\left\| \frac{1}{n} \sum_{i \in [n]} \nabla F_i(x_{i, t-1}) \right\|^2 \\
&\quad - \frac{\eta}{2(1-\beta)} \left\| \nabla F(\bar{x}_{t-1}) \right\|^2 - \frac{\eta}{2(1-\beta)} \E\left\| \frac{1}{n} \sum_{i \in [n]} \nabla F_i(x_{i, t-1}) \right\|^2 \\
&\quad + \frac{\eta}{2(1-\beta)} \left(\frac{1-\delta_1}{\left( 1 - \sqrt{1-\delta_1} \right)^2} + 1\right) \frac{2 (1-\delta_2) \eta^2 H^2 L^2 V_2}{(1-\beta)^2} \\
&\quad+ \frac{L}{2} \left[ \frac{\eta^2}{n (1-\beta)^2} V_1 + \frac{\eta^2}{(1-\beta)^2} \E\left\| \frac{1}{n} \sum_{i \in [n]} \nabla F_i(x_{i, t-1})  \right\|^2 \right] \\
&\leq \frac{1}{2(1-\beta)} \frac{L^2 \eta^2 \beta^4}{(1-\beta)^4} V_2 + \frac{L}{2} \frac{\eta^2}{n (1-\beta)^2} V_1 - \frac{\eta}{2(1-\beta)} \left\| \nabla F(\bar{x}_{t-1}) \right\|^2 \comment{using $\eta \leq \min\{\frac{1}{2}, \frac{1-\beta}{2L}\}$}\\
&\quad + \frac{\eta}{2(1-\beta)} \left(\frac{1-\delta_1}{\left( 1 - \sqrt{1-\delta_1} \right)^2} + 1\right) \frac{2 (1-\delta_2) \eta^2 H^2 L^2 V_2}{(1-\beta)^2} \\
&\leq \frac{\eta^2 \beta^4 L^2 V_2}{2(1-\beta)^5}  + \frac{\eta^2 L V_1}{2n (1-\beta)^2} - \frac{\eta}{2(1-\beta)} \left\| \nabla F(\bar{x}_{t-1}) \right\|^2 \\
&\quad + \left(\frac{1-\delta_1}{\left( 1 - \sqrt{1-\delta_1} \right)^2} + 1\right) \frac{(1-\delta_2) \eta^3 H^2 L^2 V_2}{(1-\beta)^3}.
\end{align*}

By re-arranging the terms, we have
\begin{align*}
&\left\| \nabla F(\bar{x}_{t-1}) \right\|^2 \\
&\leq \frac{2(1-\beta) \E \left[ F(\bar{z}_{t-1}) - F(\bar{z}_t) \right]}{\eta}
+ \frac{\eta^2 \beta^4 L^2 V_2}{(1-\beta)^4} 
+ \frac{\eta L V_1}{n (1-\beta)} \\
&\quad + \left(\frac{1-\delta_1}{\left( 1 - \sqrt{1-\delta_1} \right)^2} + 1\right) \frac{2(1-\delta_2) \eta^2 H^2 L^2 V_2}{(1-\beta)^2}.
\end{align*}

By telescoping and taking total expectation, after $T$ iterations, we have
\begin{align*}
&\frac{1}{T} \sum_{t=1}^T \E\left[ \left\| \nabla F(\bar{x}_{t-1}) \right\|^2 \right] \\
&\leq \frac{2(1-\beta)\left[ F(\bar{x}_0) - F(x_*) \right]}{\eta T}
+ \frac{\eta^2 \beta^4 L^2 V_2}{(1-\beta)^4} 
+ \frac{\eta L V_1}{n (1-\beta)} \\
&\quad + \left(\frac{1-\delta_1}{\left( 1 - \sqrt{1-\delta_1} \right)^2} + 1\right) \frac{2(1-\delta_2) \eta^2 H^2 L^2 V_2}{(1-\beta)^2}
\end{align*}

\end{proof}

\begin{corollary}
Taking $\eta = \min \left\{ \frac{\gamma}{\sqrt{T / n} + \left[ \left( 4(1-\delta_1) / \delta_1^2 + 1 \right) 2(1-\delta_2)  H^2 + 1 \right]^{1/3} T^{1/3}}, \frac{1}{2} \right\}$ for some $\gamma > 0$, after $T \geq \frac{4 \gamma^2 L^2 n}{(1-\beta)^2}$ iterations, Algorithm~\ref{alg:ersgdm_1}~(CSERM) converges to a critical point: 
$
\frac{1}{T} \sum_{t=1}^T \E\left[ \left\| \nabla F(\bar{x}_{t-1}) \right\|^2 \right] 
\leq \frac{1}{\sqrt{nT}} \left[ \frac{2(1-\beta)\left[ F(\bar{x}_0) - F(x_*) \right]}{\gamma} + \frac{\gamma L V_1}{(1-\beta)} \right] 
+ \frac{\left[ \left( 4(1-\delta_1) / \delta_1^2 + 1 \right) 2(1-\delta_2)  H^2 + 1 \right]^{1/3}}{T^{2/3}} \left[  \frac{ 2(1-\beta)\left[ F(\bar{x}_0) - F(x_*) \right]}{\gamma} + \frac{\gamma^2 L^2 V_2}{(1-\beta)^4} \right].
$
\end{corollary}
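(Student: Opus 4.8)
The plan is to read off the corollary from the non-asymptotic bound just proved for M\textnormal{-}CSER (Algorithm~\ref{alg:ersgdm_1}) by substituting the prescribed step size and tracking, term by term, the dependence on $T$ and $n$. Write $F_0' \coloneqq F(\bar x_0)-F(x_*)$ and $C \coloneqq \bigl(\tfrac{4(1-\delta_1)}{\delta_1^2}+1\bigr)2(1-\delta_2)H^2 + 1 \ge 1$, so that $\eta = \min\bigl\{\gamma/(\sqrt{T/n}+C^{1/3}T^{1/3}),\ \tfrac12\bigr\}$. Two elementary estimates do the work: $\eta \le \gamma\sqrt{n/T}$ (drop $C^{1/3}T^{1/3}$ from the denominator) and $\eta \le \gamma\,C^{-1/3}T^{-1/3}$ (drop $\sqrt{T/n}$ from the denominator).

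The first and only load-bearing step is to verify that $\eta$ is admissible for the theorem, i.e.\ $\eta \le \min\{\tfrac12,(1-\beta)/(2L)\}$; this is exactly where the hypothesis $T \ge 4\gamma^2 L^2 n/(1-\beta)^2$ enters. The bound $\eta \le \tfrac12$ is built into the definition of $\eta$, and from $\eta \le \gamma\sqrt{n/T}$ the hypothesis yields $\eta^2 \le \gamma^2 n/T \le (1-\beta)^2/(4L^2)$, hence $\eta \le (1-\beta)/(2L)$. So the theorem applies; moreover, since $1/\min\{a,\tfrac12\}\le 1/a+2$, we also have $1/\eta \le \gamma^{-1}\bigl(\sqrt{T/n}+C^{1/3}T^{1/3}\bigr)+2$, the trailing $+2$ feeding only an $\mathcal{O}(1/T)$ remainder below.

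Then I would bound the four summands of the theorem's right-hand side. The term $\tfrac{2(1-\beta)F_0'}{\eta T}$ becomes $\tfrac{2(1-\beta)F_0'}{\gamma}\bigl(\tfrac{1}{\sqrt{nT}}+\tfrac{C^{1/3}}{T^{2/3}}\bigr)$ up to the $\mathcal{O}(1/T)$ remainder. The momentum term $\tfrac{\eta^2\beta^4 L^2 V_2}{(1-\beta)^4}$, using $\eta^2 \le \gamma^2 C^{-2/3}T^{-2/3}$ with $\beta^4\le 1$ and $C^{-2/3}\le C^{1/3}$ (valid since $C\ge1$), is at most $\tfrac{\gamma^2 L^2 V_2}{(1-\beta)^4}\cdot\tfrac{C^{1/3}}{T^{2/3}}$. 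The sampling term $\tfrac{\eta L V_1}{n(1-\beta)}$ is at most $\tfrac{\gamma L V_1}{(1-\beta)\sqrt{nT}}$ by $\eta\le\gamma\sqrt{n/T}$. The compression term has prefactor $\bigl(\tfrac{4(1-\delta_1)}{\delta_1^2}+1\bigr)2(1-\delta_2)H^2 \le C$, so with $\eta^2\le\gamma^2 C^{-2/3}T^{-2/3}$ and $(1-\beta)^{-2}\le(1-\beta)^{-4}$ it is at most $\tfrac{C^{1/3}\gamma^2 L^2 V_2}{(1-\beta)^4\,T^{2/3}}$. Collecting the $(nT)^{-1/2}$ pieces (from the first and sampling terms) and the $C^{1/3}T^{-2/3}$ pieces (from the first, momentum, and compression terms) reproduces the displayed explicit bound up to an absolute constant; treating $\delta_1,\delta_2,H,\beta,\gamma,L,V_1,V_2,F_0'$ and $n$ as fixed, the $T^{-2/3}$ part is lower order than $T^{-1/2}$, so the right-hand side is $\mathcal{O}(1/\sqrt{nT})$, the linear speedup in $n$ coming from the $V_1/\sqrt{nT}$ term.

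I expect no real obstacle beyond the admissibility check: the hypothesis on $T$ is tailored to force $\eta \le (1-\beta)/(2L)$ via $\eta\le\gamma\sqrt{n/T}$, and everything afterwards is bookkeeping with powers of $T$, $n$, and $1-\beta$. The one informal point — absorbing $C^{1/3}T^{-2/3}$ into $\mathcal{O}(1/\sqrt{nT})$ — tacitly requires $T$ large relative to $n$ (strictly, $T\gtrsim n^3$ if one insists on the explicit $n$-dependence), which the statement folds into its $\mathcal{O}(\cdot)$ notation, exactly as in the analogous corollary for CSER.
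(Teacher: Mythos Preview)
Your proposal is correct and follows essentially the same route as the paper: verify $\eta\le(1-\beta)/(2L)$ from the hypothesis $T\ge 4\gamma^2L^2n/(1-\beta)^2$ via $\eta\le\gamma\sqrt{n/T}$, then substitute $\eta$ into the four summands of Theorem~\ref{thm:ersgdm} using $\eta\le\gamma\sqrt{n/T}$ and $\eta\le\gamma C^{-1/3}T^{-1/3}$. The only cosmetic difference is that the paper absorbs the momentum term into the compression term by writing their sum as $C\eta^2L^2V_2/(1-\beta)^4$ (this is precisely why the ``$+1$'' is baked into $C$), which yields the displayed constants exactly rather than up to the factor of~$2$ you incur by bounding $C^{-2/3}\le C^{1/3}$ separately; conversely, your handling of $1/\eta$ via $1/\min\{a,\tfrac12\}\le 1/a+2$ is more careful than the paper's, which tacitly assumes the first branch of the $\min$ is active.
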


\begin{proof}
From Theorem~\ref{thm:ersgdm}, taking $\eta \leq \min\{\frac{1}{2}, \frac{1-\beta}{2L}\}$, we have
\begin{align*}
&\frac{1}{T} \sum_{t=1}^T \E\left[ \left\| \nabla F(\bar{x}_{t-1}) \right\|^2 \right] \\
&\leq \frac{2(1-\beta)\left[ F(\bar{x}_0) - F(x_*) \right]}{\eta T}
+ \frac{\eta^2 \beta^4 L^2 V_2}{(1-\beta)^4} 
+ \frac{\eta L V_1}{n (1-\beta)} \\
&\quad + \left(\frac{4(1-\delta_1)}{\delta_1^2} + 1\right) \frac{2(1-\delta_2) \eta^2 H^2 L^2 V_2}{(1-\beta)^2}.
\end{align*}

Denote $F_o = 2\left[ F(\bar{x}_0) - F(x_*) \right]$, and $C = \left[ \frac{4(1-\delta_1) }{\delta_1^2} + 1 \right] 2(1-\delta_2)  H^2 + 1$.
Taking $\eta = \min \left\{ \frac{\gamma}{\sqrt{T / n} + C^{1/3} T^{1/3}}, \frac{1}{2} \right\}$, and $T \geq \frac{4 \gamma^2 L^2 n}{(1-\beta)^2}$, we have $\eta \leq \frac{1-\beta}{2L}$. Thus, we have
\begin{align*}
&\frac{1}{T} \sum_{t=1}^T \E\left[ \left\| \nabla F(\bar{x}_{t-1}) \right\|^2 \right] \\
&\leq \frac{(1-\beta)F_o}{\eta T}
+ \frac{L\eta V_1}{n (1-\beta)}
+ \frac{C \eta^2 L^2 V_2}{(1-\beta)^4} \\
&\leq \frac{(1-\beta) F_o}{ T} \frac{\sqrt{T / n} + C^{1/3} T^{1/3}}{\gamma} 
+ \frac{L V_1}{n (1-\beta)} \frac{\gamma}{\sqrt{T / n}}
+ C  L^2 V_2 \frac{\gamma^2}{C^{2/3} T^{2/3} (1-\beta)^4} \\
&\leq \frac{(1-\beta) F_o}{\gamma \sqrt{nT}}  + \frac{(1-\beta) F_o C^{1/3}}{\gamma T^{2/3}}
+  \frac{C^{1/3} \gamma^2 L^2 V_2}{ T^{2/3} (1-\beta)^4}
+ \frac{\gamma L V_1}{(1-\beta)\sqrt{nT}} \\
&\leq \frac{1}{\sqrt{nT}} \left[ (1-\beta) \frac{F_o}{\gamma} + \frac{\gamma L V_1}{(1-\beta)} \right] 
+ \frac{C^{1/3}}{T^{2/3}} \left[  \frac{(1-\beta) F_o}{\gamma} + \frac{\gamma^2 L^2 V_2}{(1-\beta)^4} \right].
\end{align*}
\end{proof}

\section{Compressor configurations}

\begin{table}[htb!]
\caption{Compressor configurations}
\label{tbl:config}
\begin{center}
\begin{small}
\begin{tabular}{|l|r|l|r|l|}
\hline
 Optimizer         &   Overall $R_{\CM}$ & $R_{\CM_2}$          &   $R_{\CM_1}$ & H              \\ \hline
 EF-SGD            &            2 & \strike{|c|}{} &       2 & \strike{|c|}{} \\ \hline
 QSparse-local-SGD &            2 & \strike{|c|}{} &       1 & 2              \\ \hline
 CSEA            &            2 & \strike{|c|}{} &       2 & \strike{|c|}{} \\ \hline
 CSER            &            2 & 4              &       2 & 2              \\ \hline
 EF-SGD            &            4 & \strike{|c|}{} &       4 & \strike{|c|}{} \\ \hline
 QSparse-local-SGD &            4 & \strike{|c|}{} &       1 & 4              \\ \hline
 CSEA            &            4 & \strike{|c|}{} &       4 & \strike{|c|}{} \\ \hline
 CSER            &            4 & 8              &       2 & 4              \\ \hline
 CSER-PL &            4 & \strike{|c|}{} &       2 & 2              \\ \hline
 EF-SGD            &            8 & \strike{|c|}{} &       8 & \strike{|c|}{} \\ \hline
 QSparse-local-SGD &            8 & \strike{|c|}{} &       1 & 8              \\ \hline
 CSEA            &            8 & \strike{|c|}{} &       8 & \strike{|c|}{} \\ \hline
 CSER            &            8 & 16             &       2 & 8              \\ \hline
 CSER-PL &            8 & \strike{|c|}{} &       2 & 4              \\ \hline
 EF-SGD            &           16 & \strike{|c|}{} &      16 & \strike{|c|}{} \\ \hline
 QSparse-local-SGD &           16 & \strike{|c|}{} &       4 & 4              \\ \hline
 CSEA            &           16 & \strike{|c|}{} &      16 & \strike{|c|}{} \\ \hline
 CSER            &           16 & 32             &       8 & 4              \\ \hline
 CSER-PL &           16 & \strike{|c|}{} &       4 & 4              \\ \hline
 EF-SGD            &           32 & \strike{|c|}{} &      32 & \strike{|c|}{} \\ \hline
 QSparse-local-SGD &           32 & \strike{|c|}{} &       4 & 8              \\ \hline
 CSEA            &           32 & \strike{|c|}{} &      32 & \strike{|c|}{} \\ \hline
 CSER            &           32 & 64             &       8 & 8              \\ \hline
 CSER-PL &           32 & \strike{|c|}{} &       8 & 4              \\ \hline
 EF-SGD            &           64 & \strike{|c|}{} &      64 & \strike{|c|}{} \\ \hline
 QSparse-local-SGD &           64 & \strike{|c|}{} &      16 & 4              \\ \hline
 CSEA            &           64 & \strike{|c|}{} &      64 & \strike{|c|}{} \\ \hline
 CSER            &           64 & 128            &       8 & 16             \\ \hline
 CSER-PL &           64 & \strike{|c|}{} &       8 & 8              \\ \hline
 EF-SGD            &          128 & \strike{|c|}{} &     128 & \strike{|c|}{} \\ \hline
 QSparse-local-SGD &          128 & \strike{|c|}{} &      16 & 8              \\ \hline
 CSEA            &          128 & \strike{|c|}{} &     128 & \strike{|c|}{} \\ \hline
 CSER            &          128 & 256            &       4 & 64             \\ \hline
 CSER-PL &          128 & \strike{|c|}{} &       8 & 16             \\ \hline
 EF-SGD            &          256 & \strike{|c|}{} &     256 & \strike{|c|}{} \\ \hline
 QSparse-local-SGD &          256 & \strike{|c|}{} &     128 & 2              \\ \hline
 CSEA            &          256 & \strike{|c|}{} &     256 & \strike{|c|}{} \\ \hline
 CSER            &          256 & 512            &      16 & 32             \\ \hline
 CSER-PL &          256 & \strike{|c|}{} &      16 & 16             \\ \hline
 EF-SGD            &          512 & \strike{|c|}{} &     512 & \strike{|c|}{} \\ \hline
 QSparse-local-SGD &          512 & \strike{|c|}{} &     128 & 4              \\ \hline
 CSEA            &          512 & \strike{|c|}{} &     512 & \strike{|c|}{} \\ \hline
 CSER            &          512 & 1024           &       8 & 128            \\ \hline
 CSER-PL &          512 & \strike{|c|}{} &      16 & 32             \\ \hline
 EF-SGD            &         1024 & \strike{|c|}{} &    1024 & \strike{|c|}{} \\ \hline
 QSparse-local-SGD &         1024 & \strike{|c|}{} &     128 & 8              \\ \hline
 CSEA            &         1024 & \strike{|c|}{} &    1024 & \strike{|c|}{} \\ \hline
 CSER            &         1024 & 2048           &      32 & 64             \\ \hline
 CSER-PL &         1024 & \strike{|c|}{} &      32 & 32             \\ \hline
\end{tabular}
\end{small}
\end{center}
\end{table}

In Table~\ref{tbl:config}, we show the best configurations of the hyperparameters $H$, $R_{\CM_1}$, and $R_{\CM_2}$ for each optimizer and overall compression ratio $R_{\CM}$. When tuning the hyperparameters, given the overall compression ratio $R_{\CM}$, we enumerate the hyperparameters that satisfies $R_{\CM} = \frac{1}{1/R_{\CM_2} + 1/(R_{\CM_1} \times H)}$, such that $H \geq 2$, $R_{\CM_1} \geq 1$, and $R_{\CM_2} \geq 4$ are all varied in $\{2^c: c \in \{0, 1, \ldots, 10\} \}$

\newpage
\section{Additional experiments}
\label{sec:additional_exp}

In this section, we present additional experiments, including the results on the special cases CSEA and CSER-PL. For CIFAR-100, we also report the results on relatively small overall compression ratios~($R_{\CM} \in \{ 2,4,8 \}$). The training loss vs. the number of epochs is also reported for both CIFAR-100 and ImageNet. Furthermore, we report the testing accuracy vs. the communication overhead~(in bits).

\begin{figure*}[htb!]
\centering
\subfigure[$CR=32$, accuracy vs. \# epochs]{\includegraphics[width=0.7\textwidth]{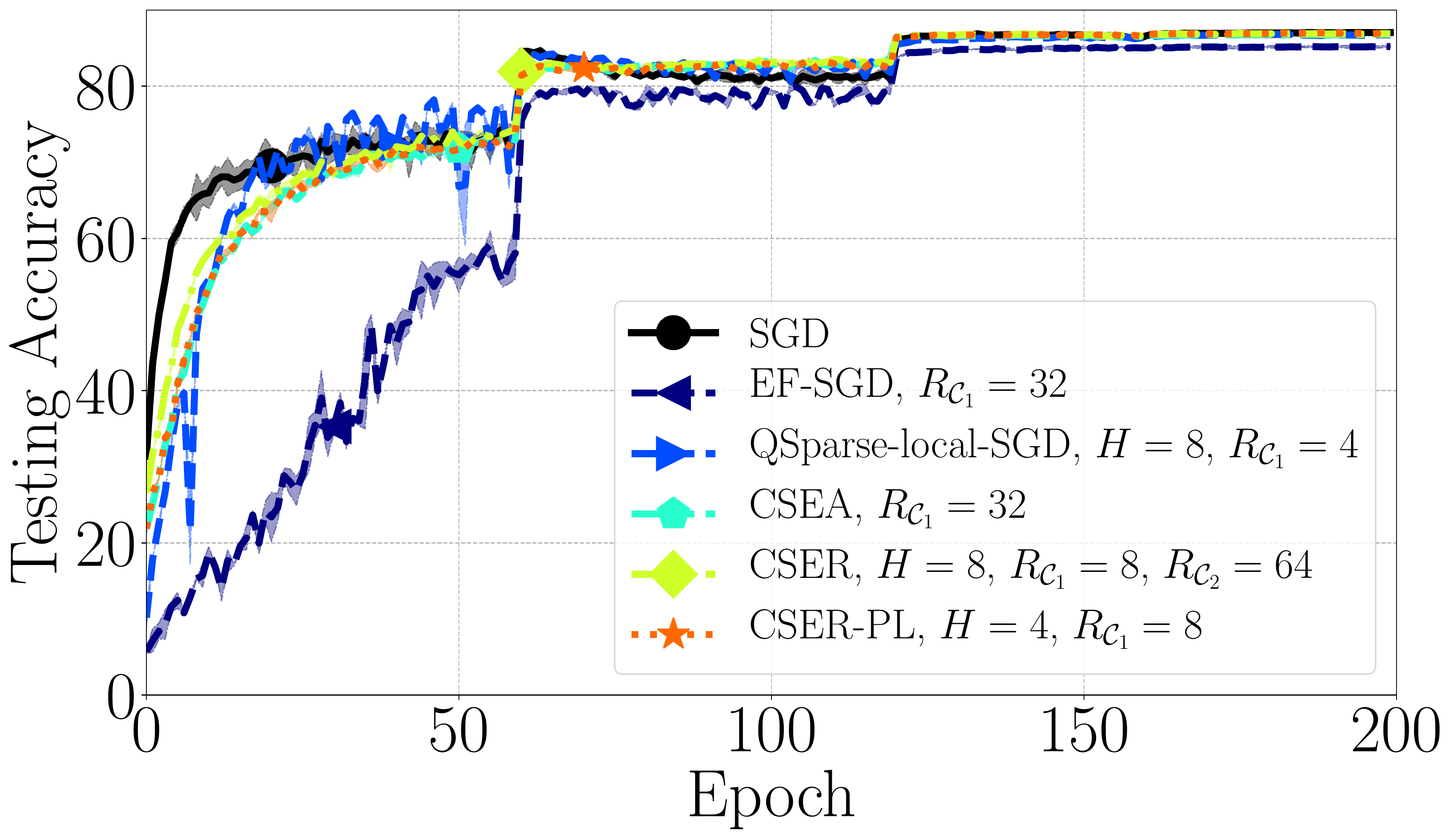}}
\subfigure[$CR=256$, accuracy vs. \# epochs]{\includegraphics[width=0.7\textwidth]{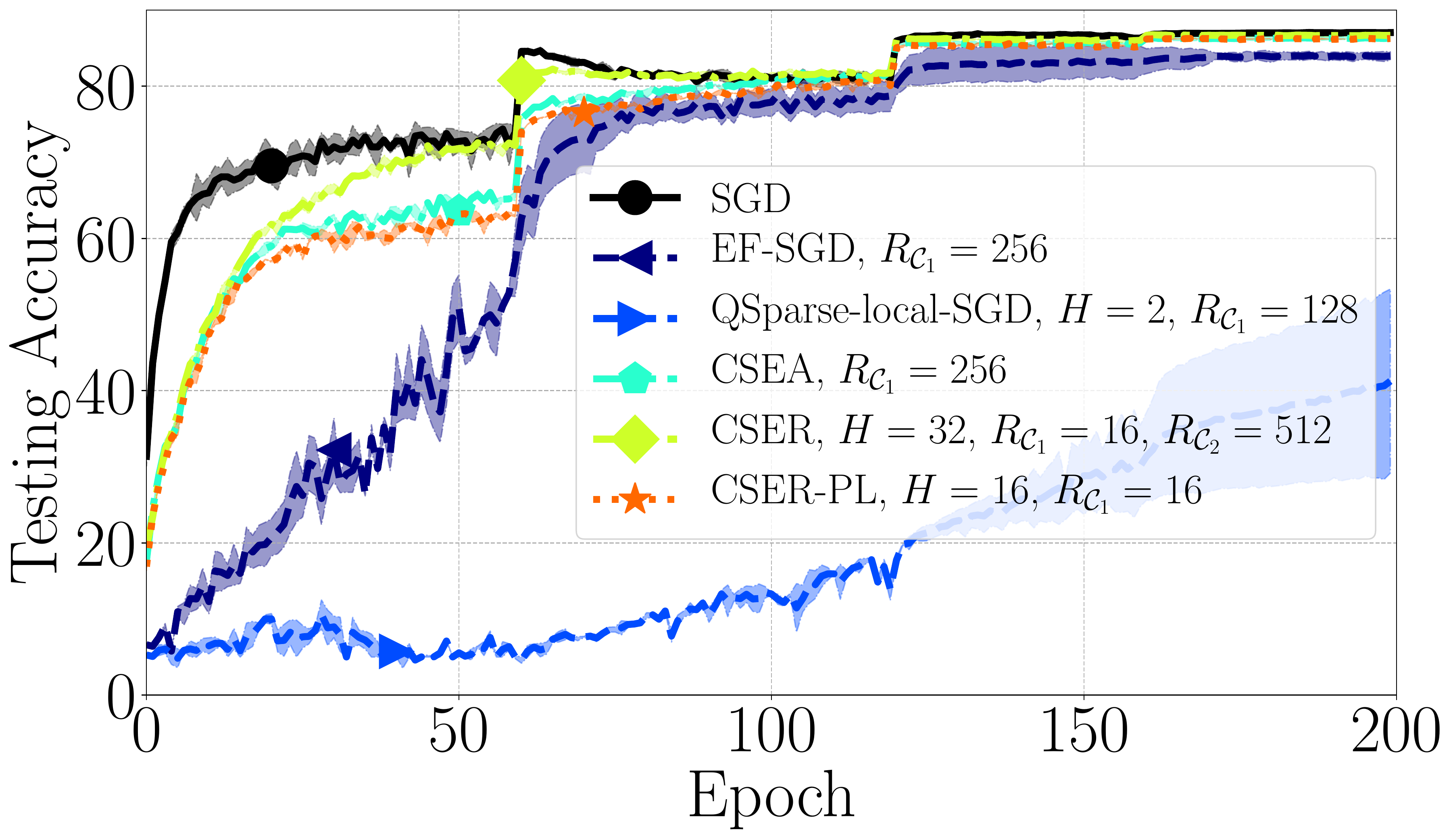}}
\subfigure[$CR=1024$, accuracy vs. \# epochs]{\includegraphics[width=0.7\textwidth]{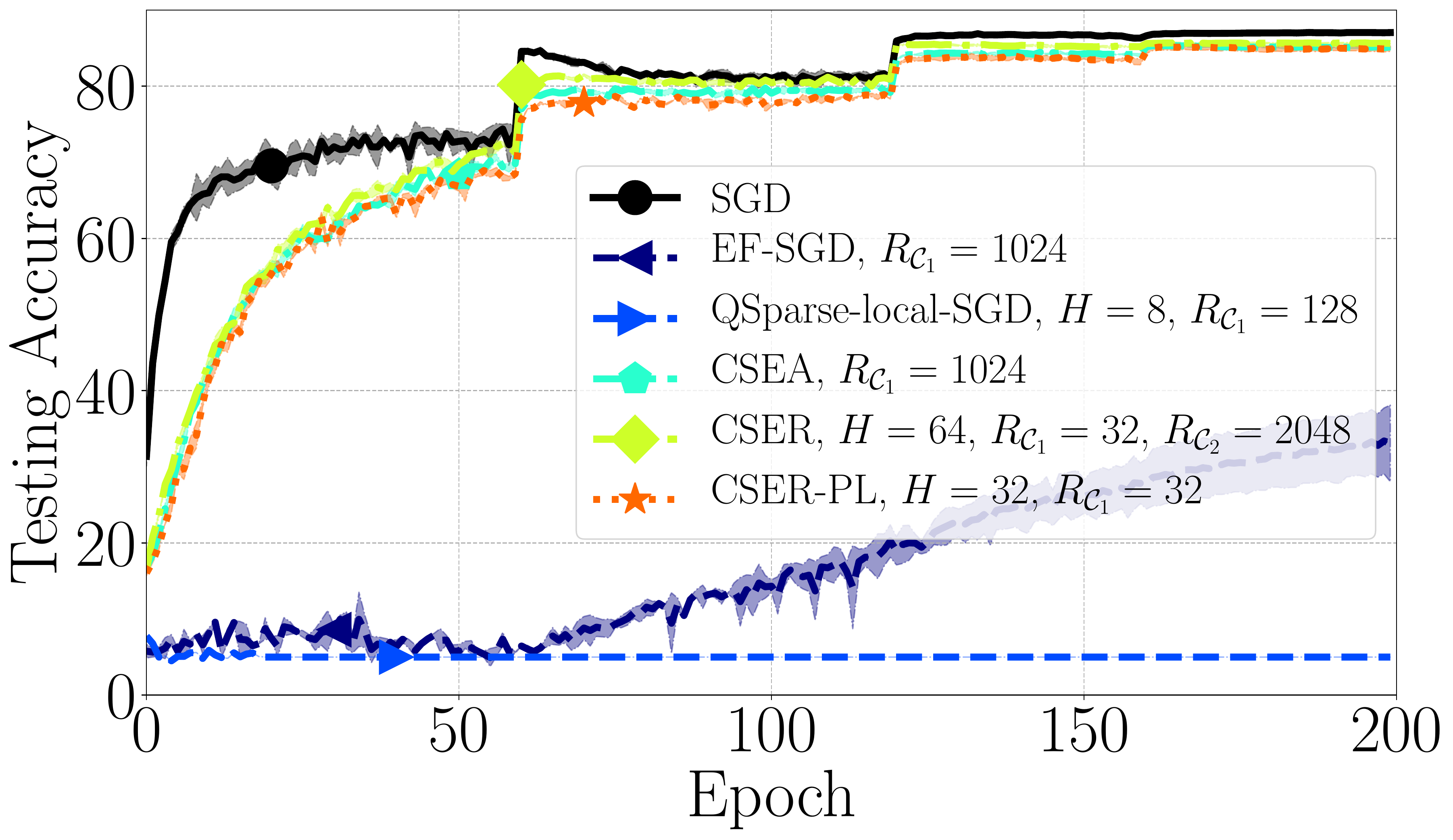}}
\caption{Testing accuracy vs.\# of epochs with different algorithms, for WideResNet-40-8 on CIFAR-100.}
\label{fig:cifar100_epoch}
\end{figure*}

\begin{figure*}[htb!]
\centering
\subfigure[$CR=32$, accuracy vs. training time]{\includegraphics[width=0.7\textwidth]{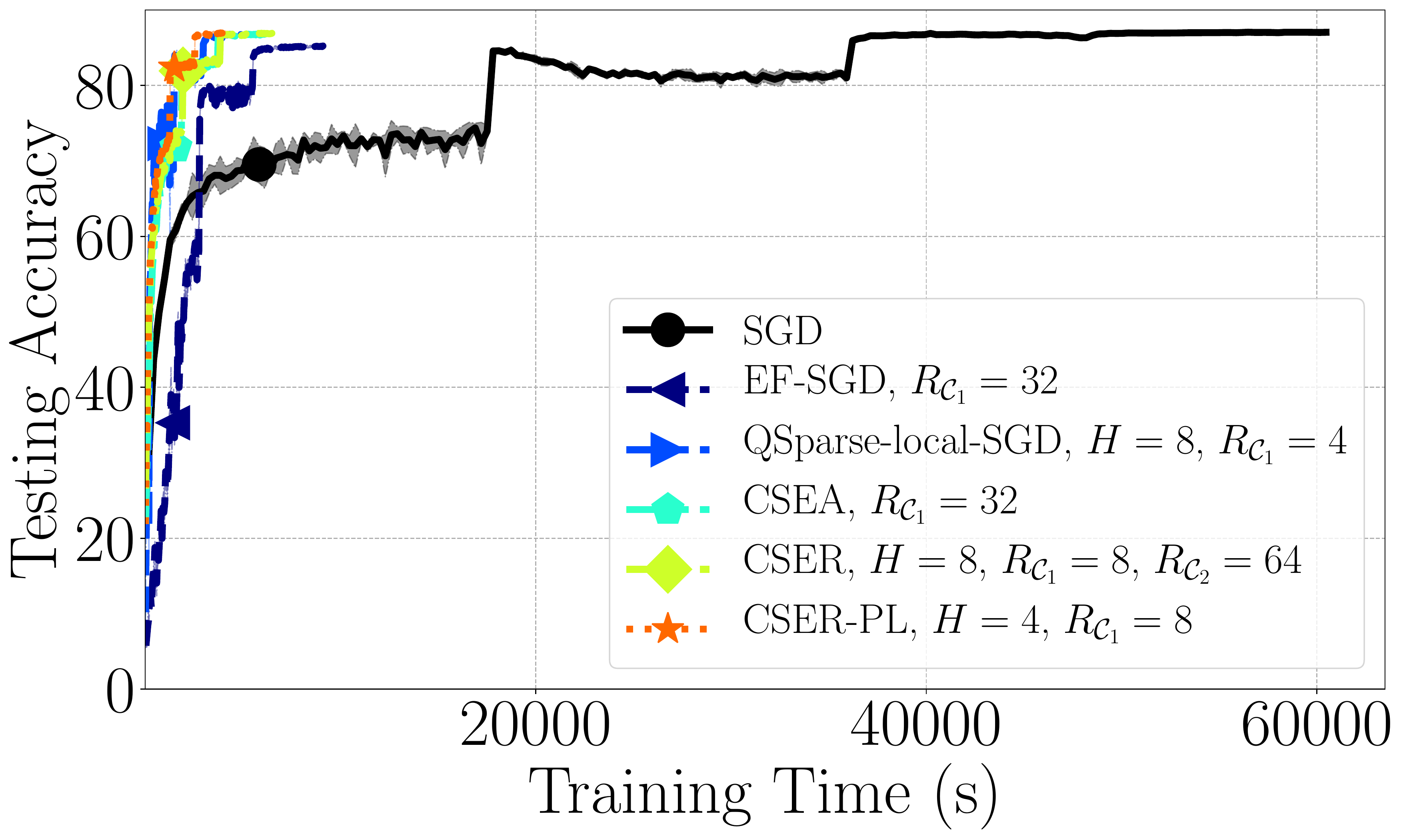}}
\subfigure[$CR=256$, accuracy vs. training time]{\includegraphics[width=0.7\textwidth]{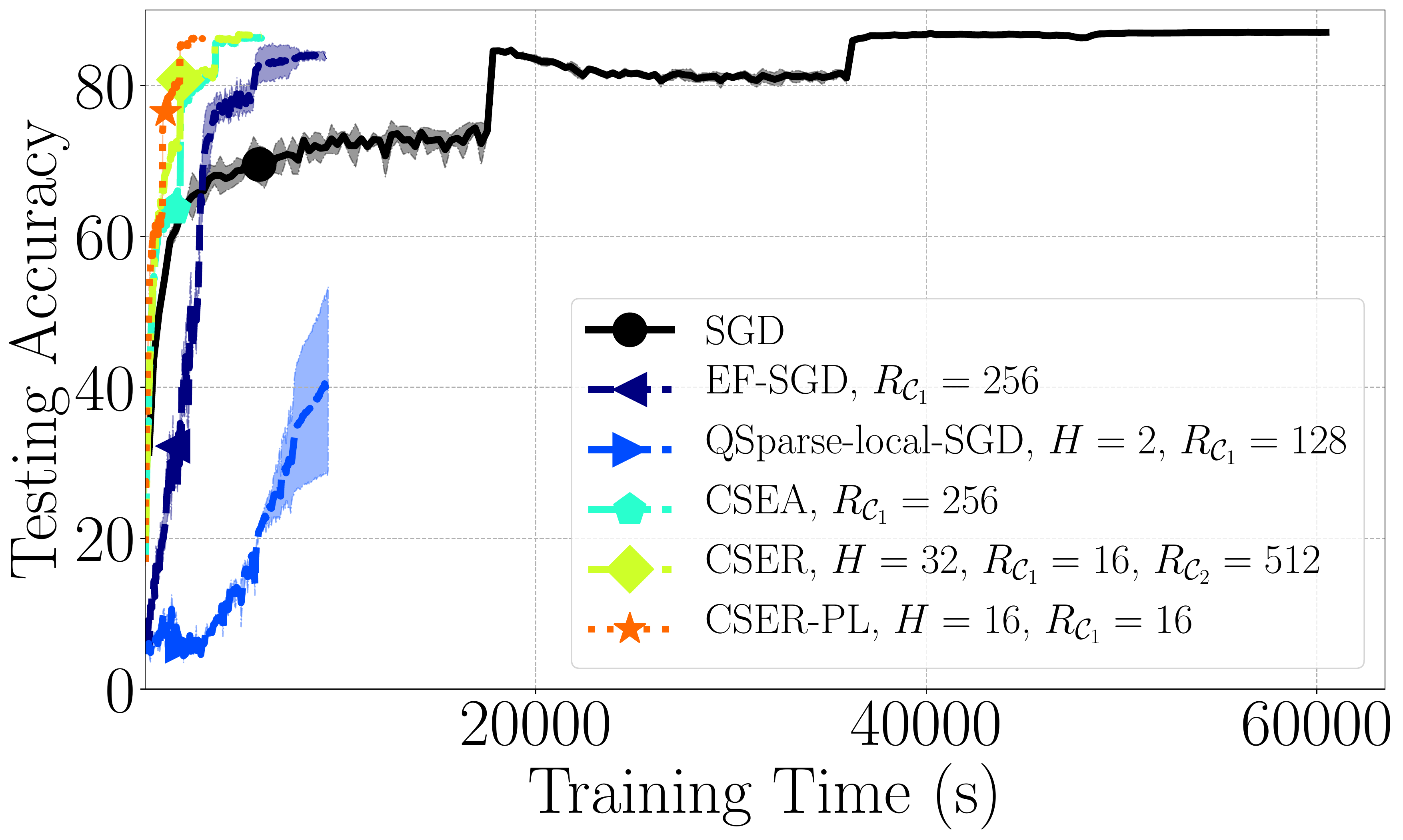}}
\subfigure[$CR=1024$, accuracy vs. training time]{\includegraphics[width=0.7\textwidth]{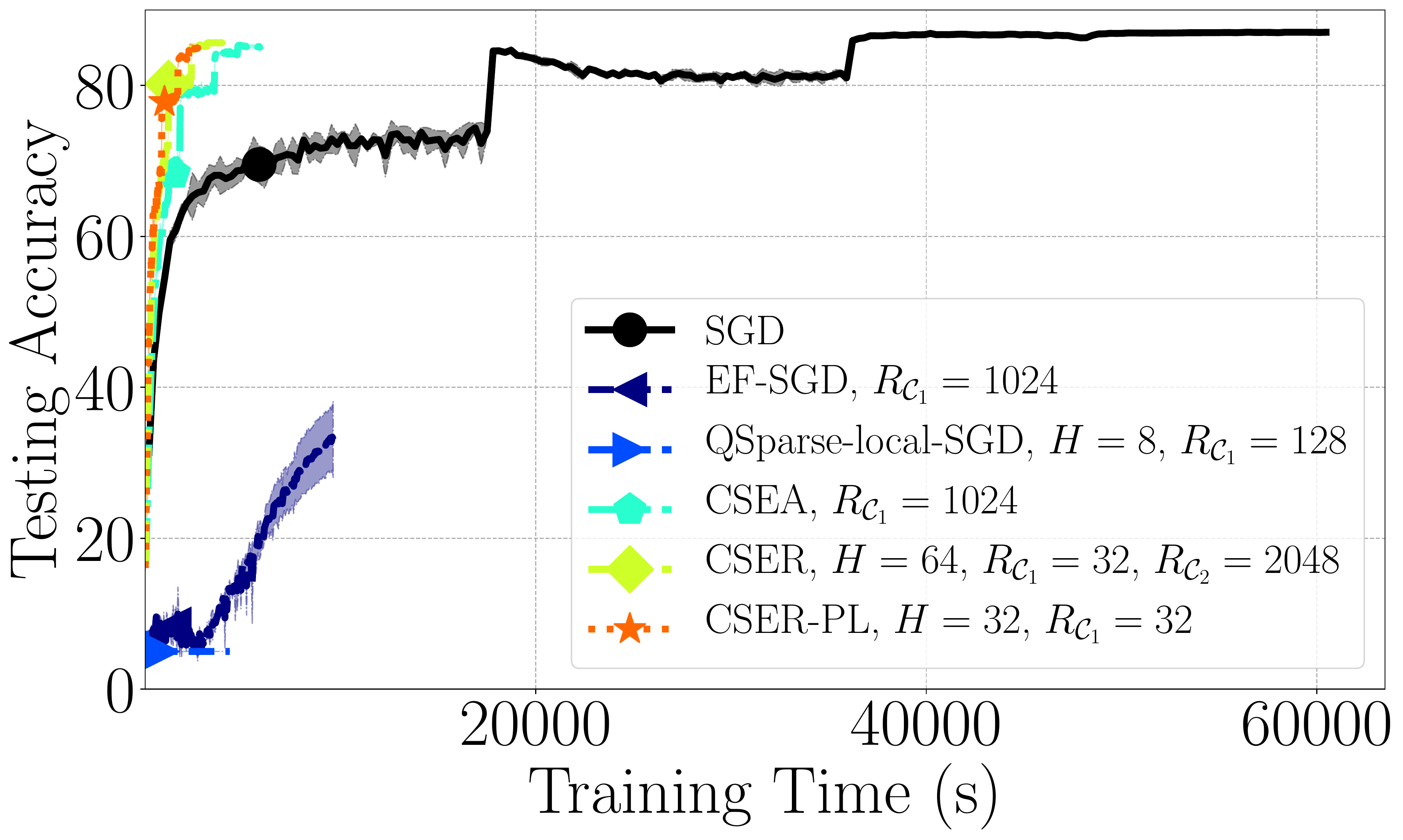}}
\caption{Testing accuracy vs. training time with different algorithms, for WideResNet-40-8 on CIFAR-100.}
\label{fig:cifar100_time}
\end{figure*}

\begin{figure*}[htb!]
\centering
\subfigure[$CR=32$, accuracy vs. communication]{\includegraphics[width=0.7\textwidth]{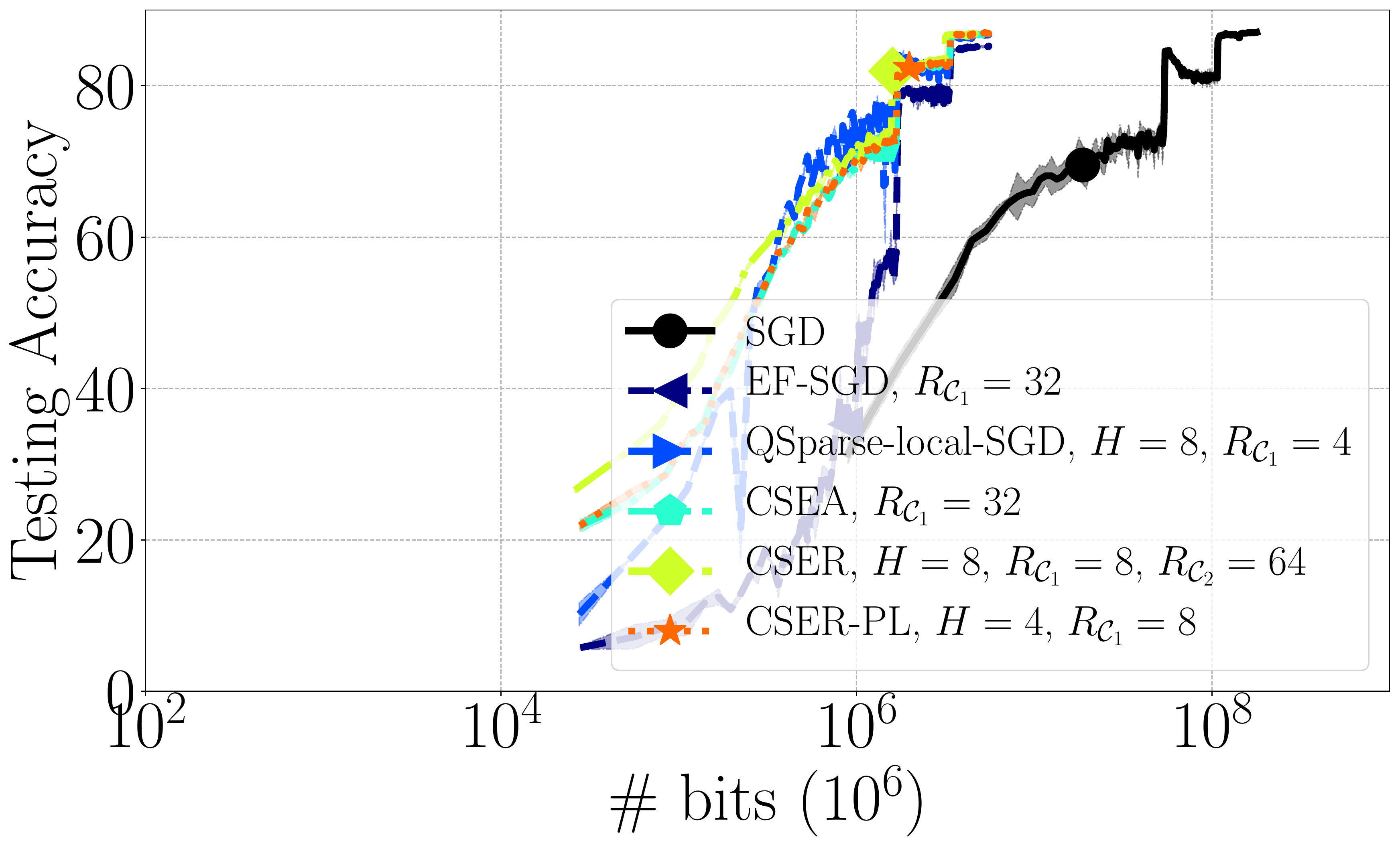}}
\subfigure[$CR=256$, accuracy vs. communication]{\includegraphics[width=0.7\textwidth]{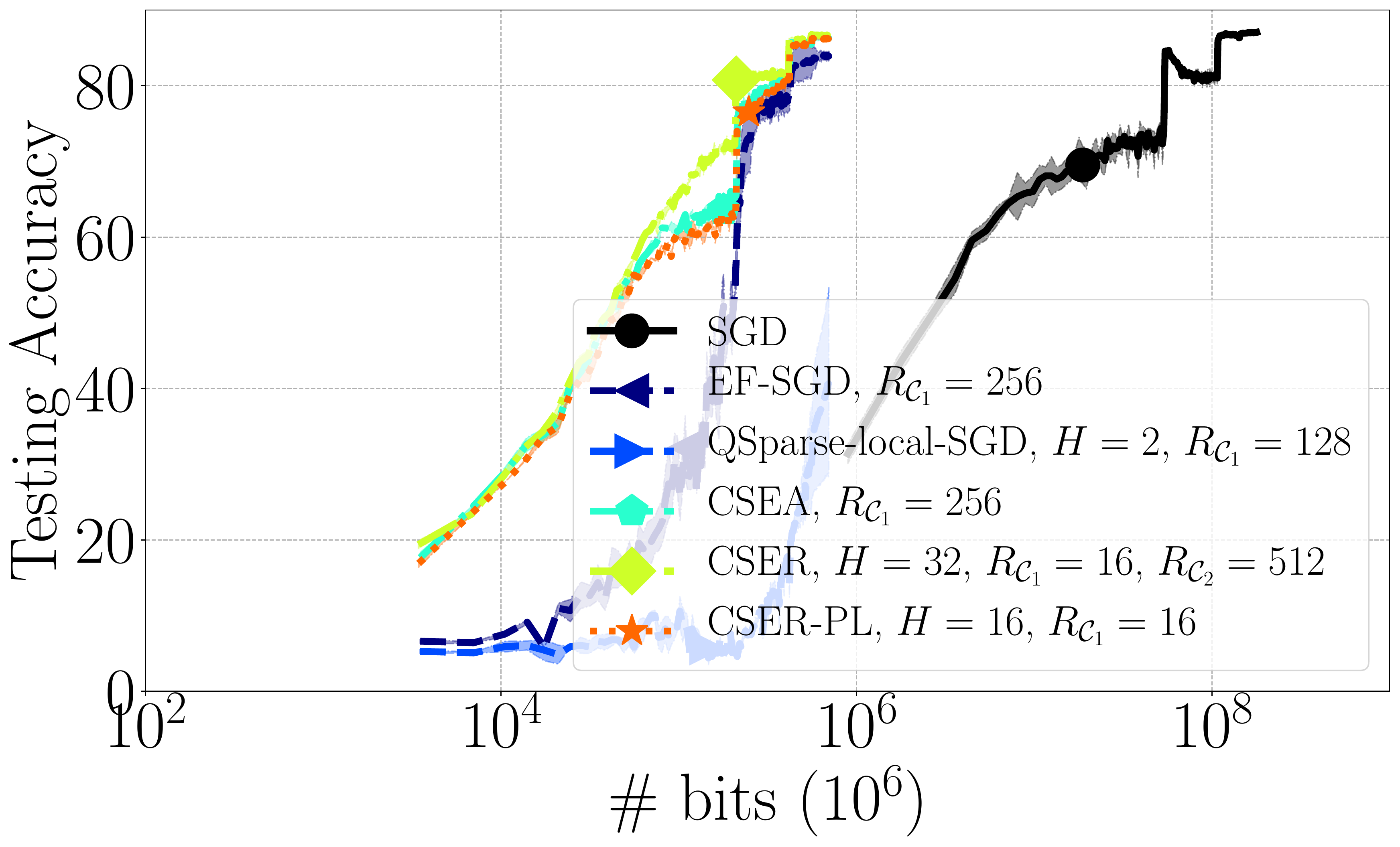}}
\subfigure[$CR=1024$, accuracy vs. communication]{\includegraphics[width=0.7\textwidth]{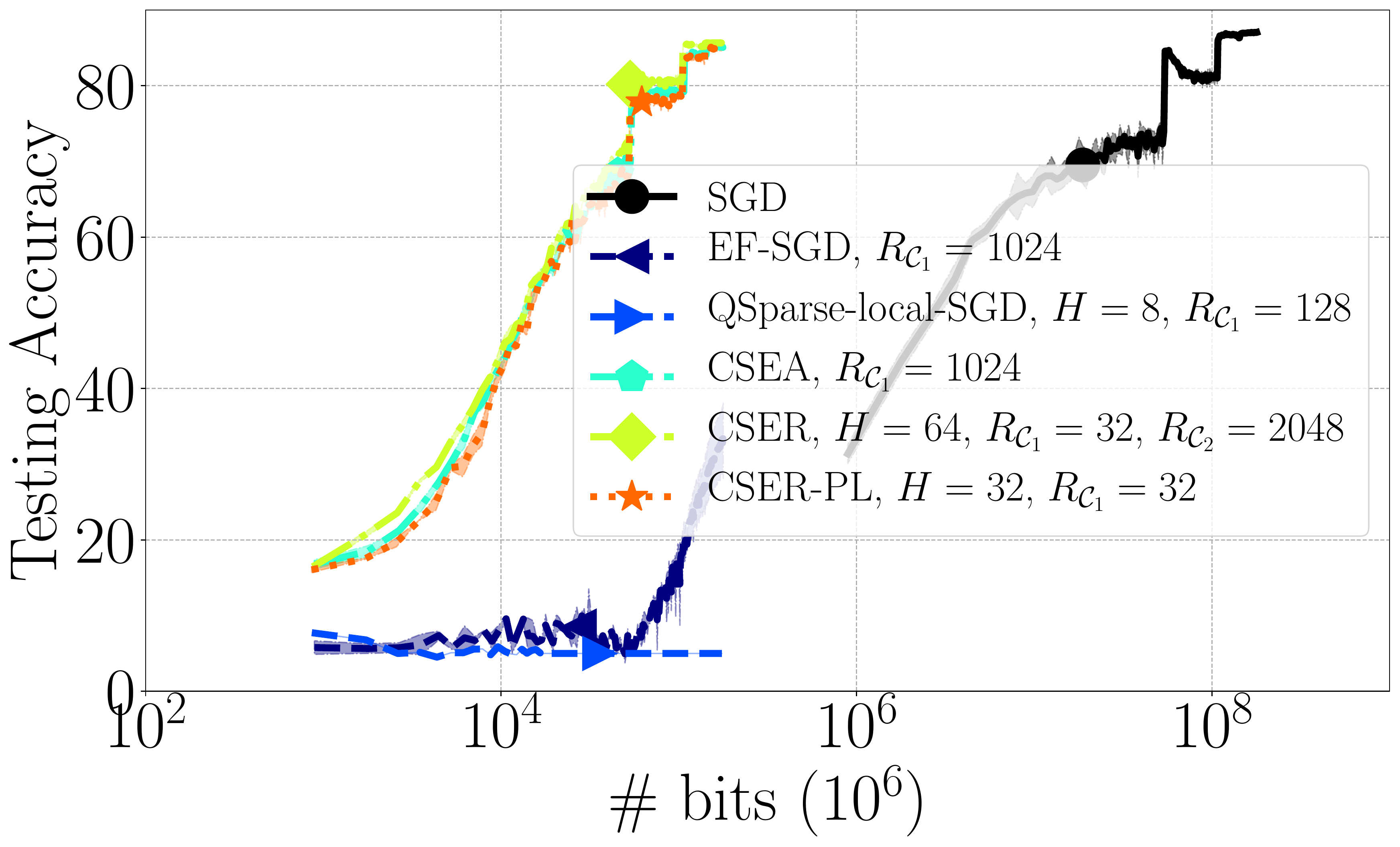}}
\caption{Testing accuracy vs. communication with different algorithms, for WideResNet-40-8 on CIFAR-100.}
\label{fig:cifar100_comm}
\end{figure*}

\begin{figure*}[htb!]
\centering
\subfigure[$CR=32$, loss vs. \# epochs]{\includegraphics[width=0.7\textwidth]{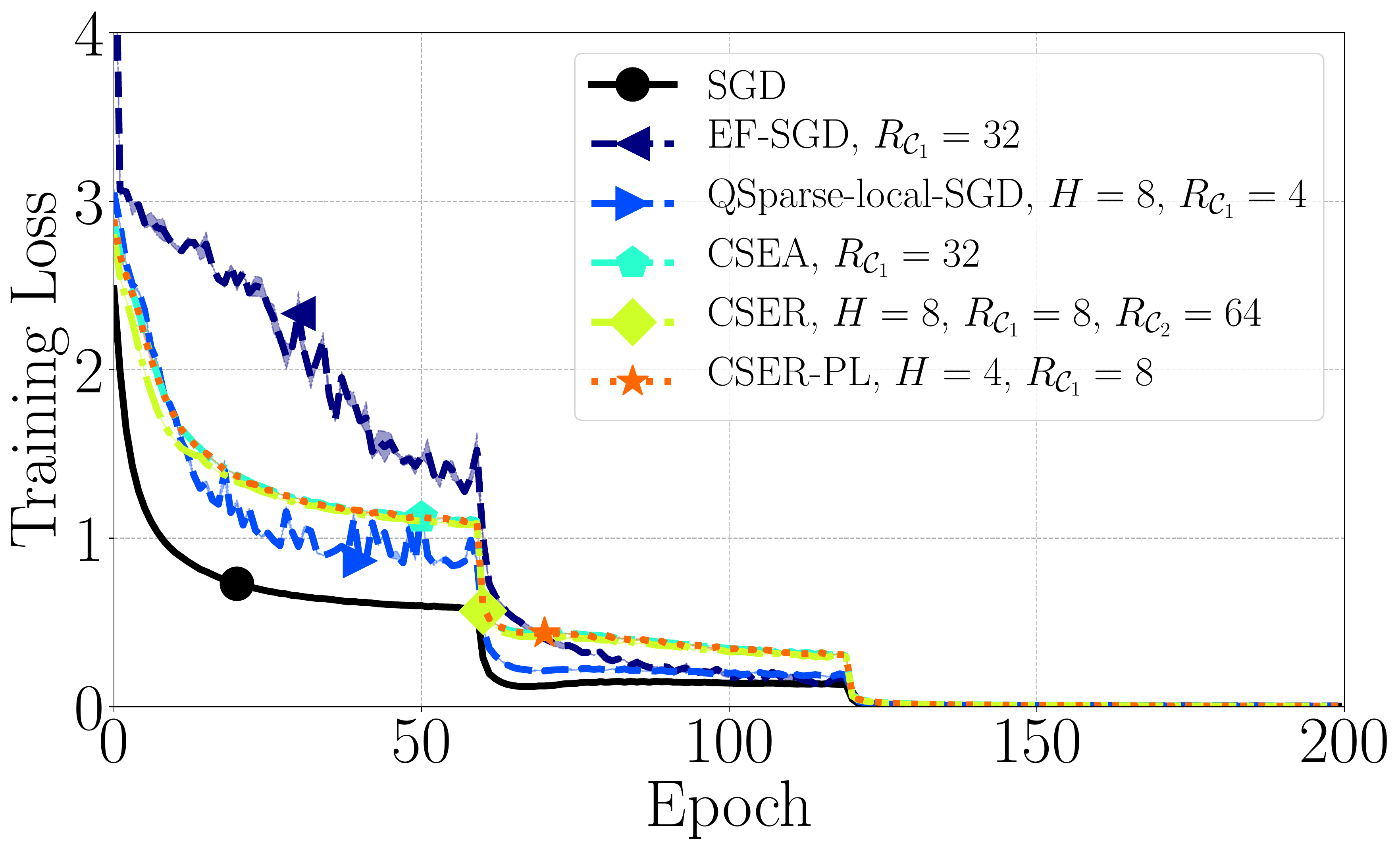}}
\subfigure[$CR=256$, loss vs. \# epochs]{\includegraphics[width=0.7\textwidth]{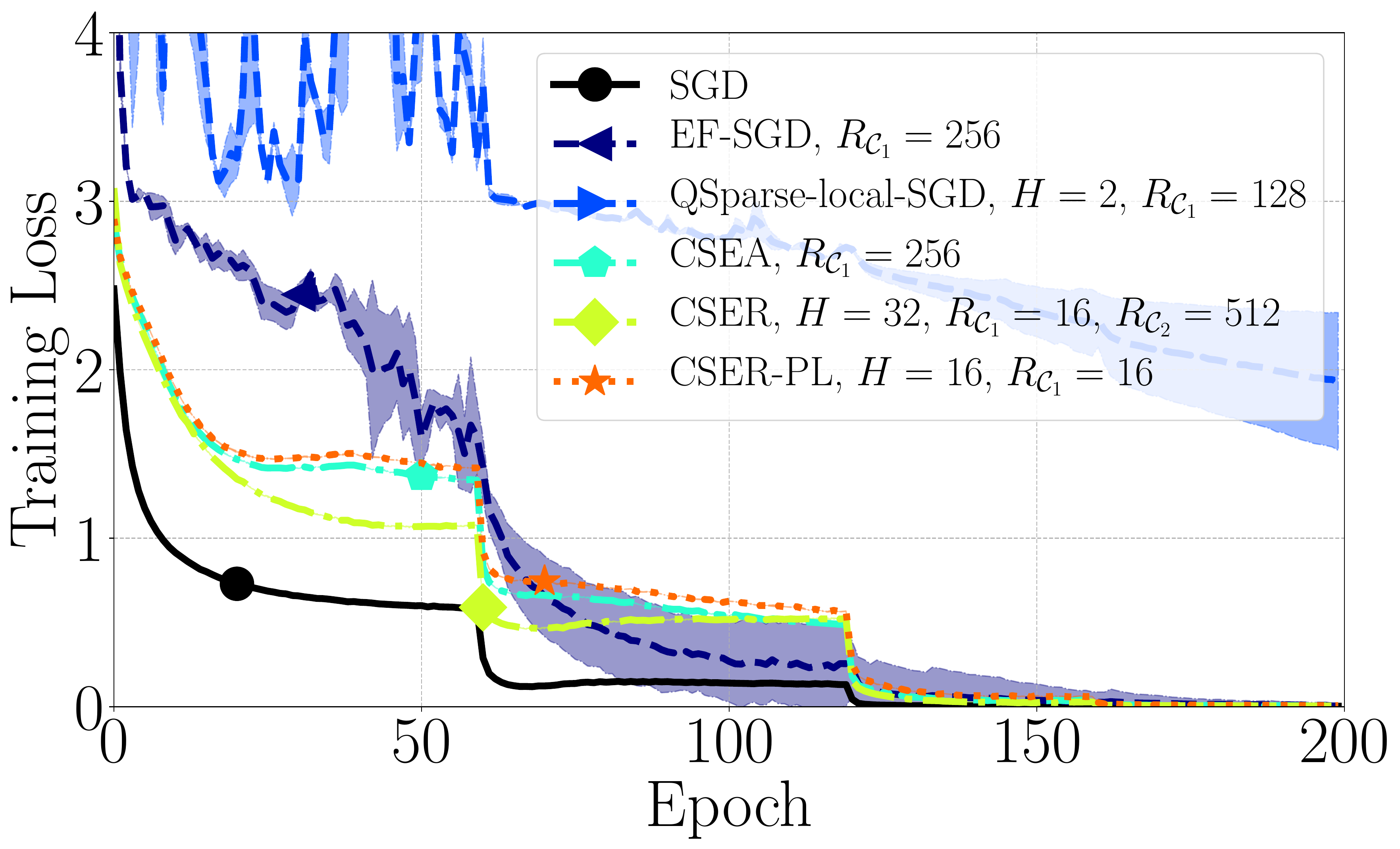}}
\subfigure[$CR=1024$, loss vs. \# epochs]{\includegraphics[width=0.7\textwidth]{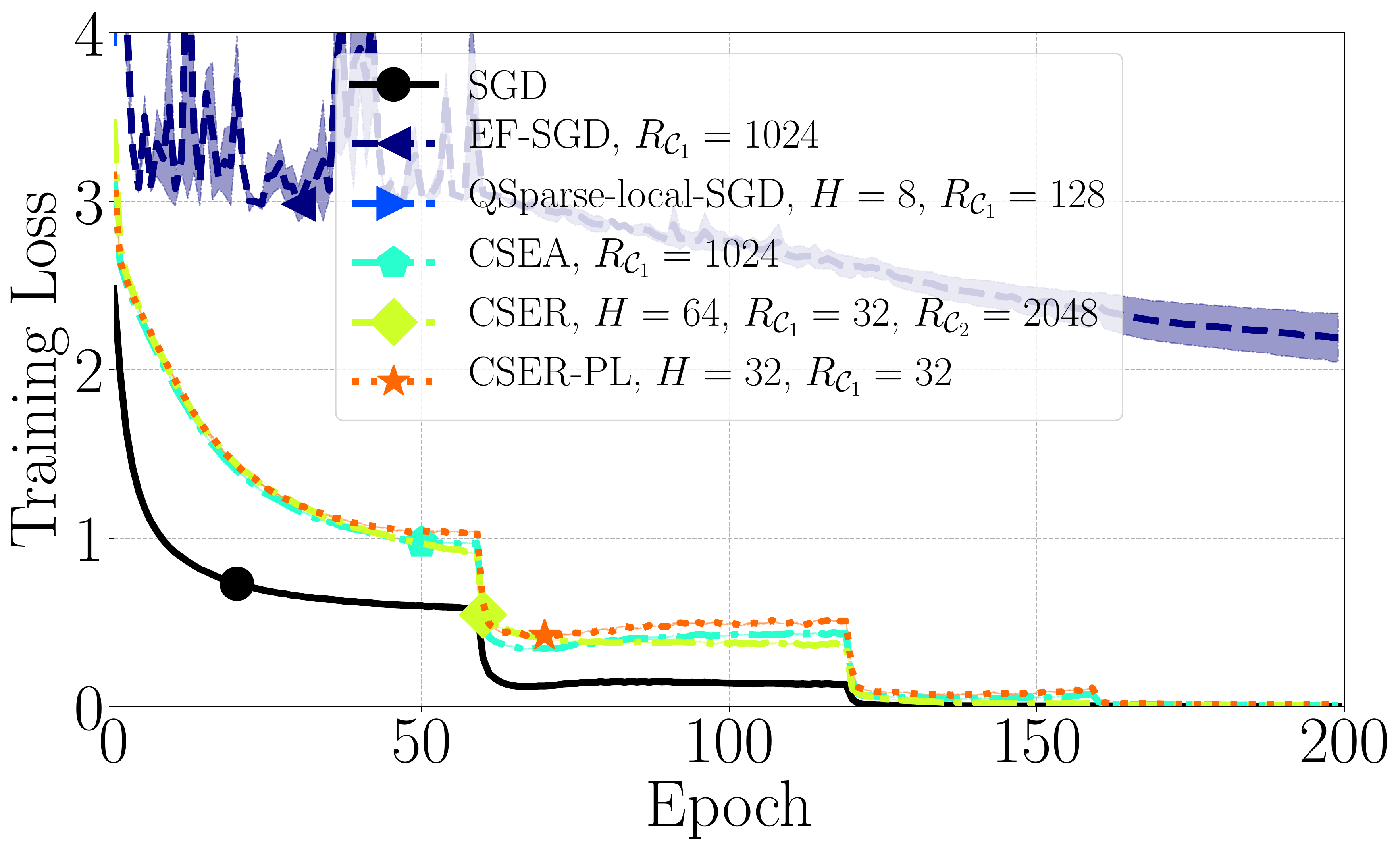}}
\caption{Training loss vs.\# of epochs with different algorithms, for WideResNet-40-8 on CIFAR-100.}
\label{fig:cifar100_epoch_train}
\end{figure*}

\begin{figure*}[htb!]
\centering
\subfigure[$CR=32$, accuracy vs. \# epochs]{\includegraphics[width=0.7\textwidth]{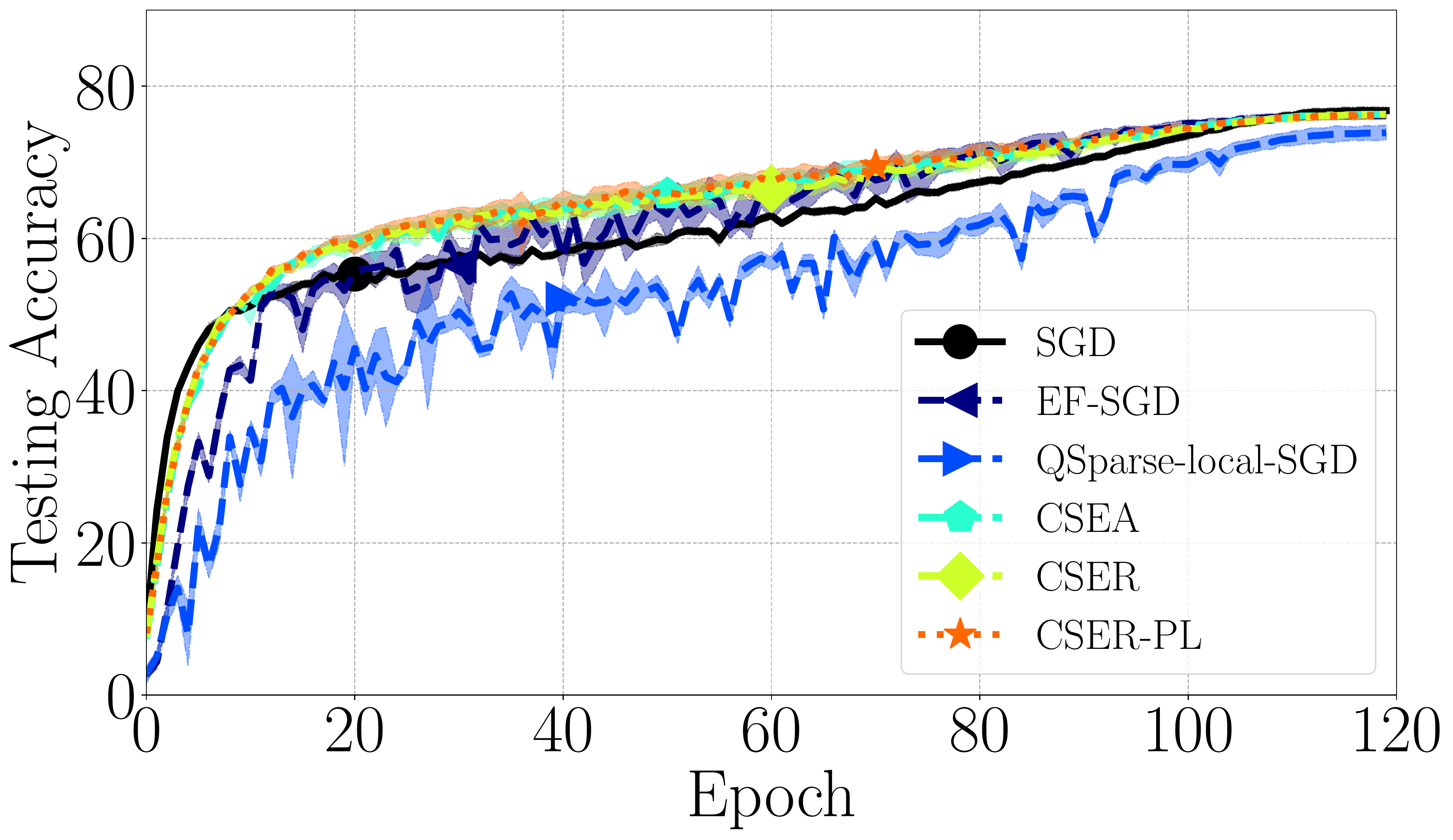}}
\subfigure[$CR=256$, accuracy vs. \# epochs]{\includegraphics[width=0.7\textwidth]{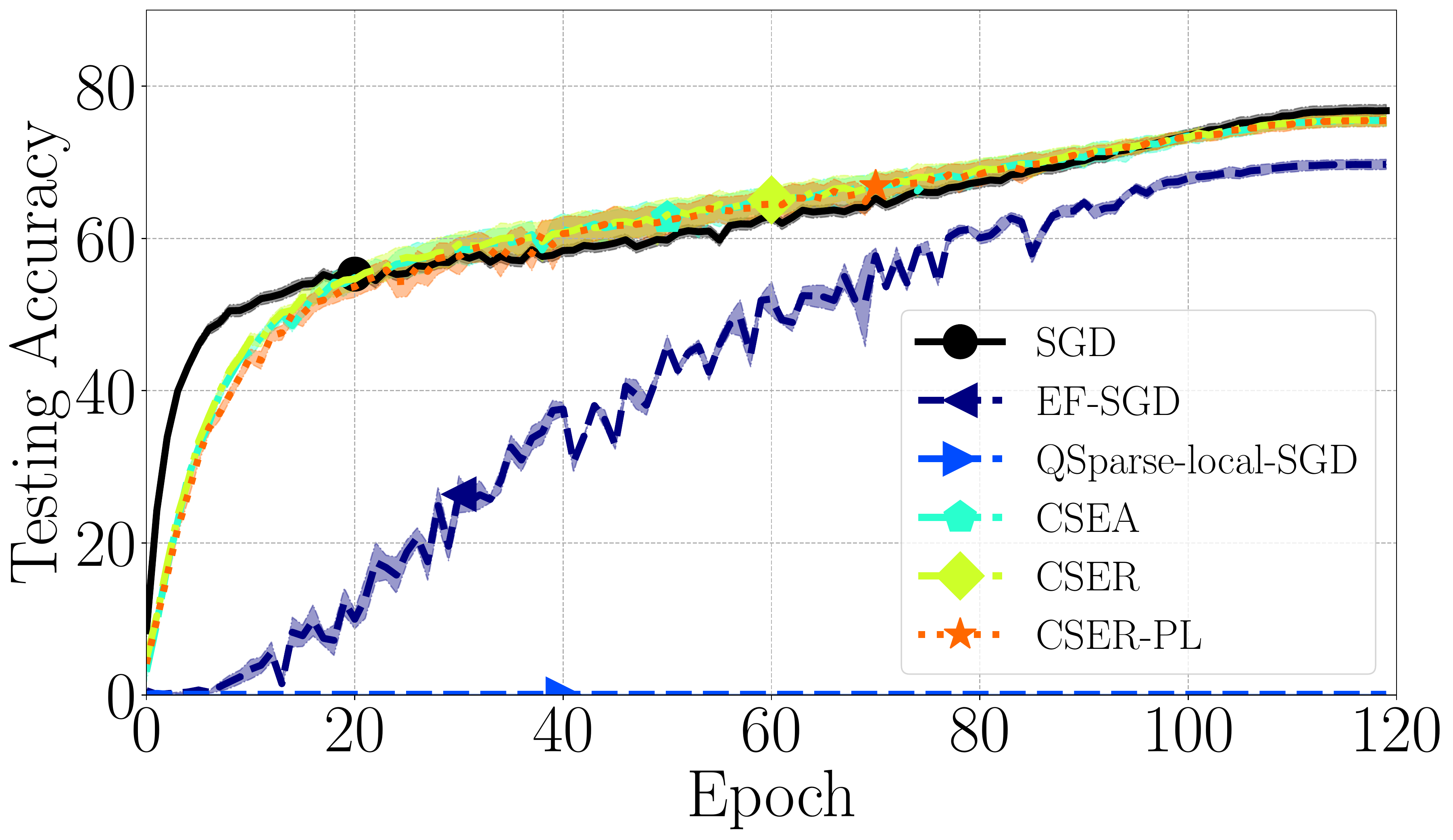}}
\subfigure[$CR=1024$, accuracy vs. \# epochs]{\includegraphics[width=0.7\textwidth]{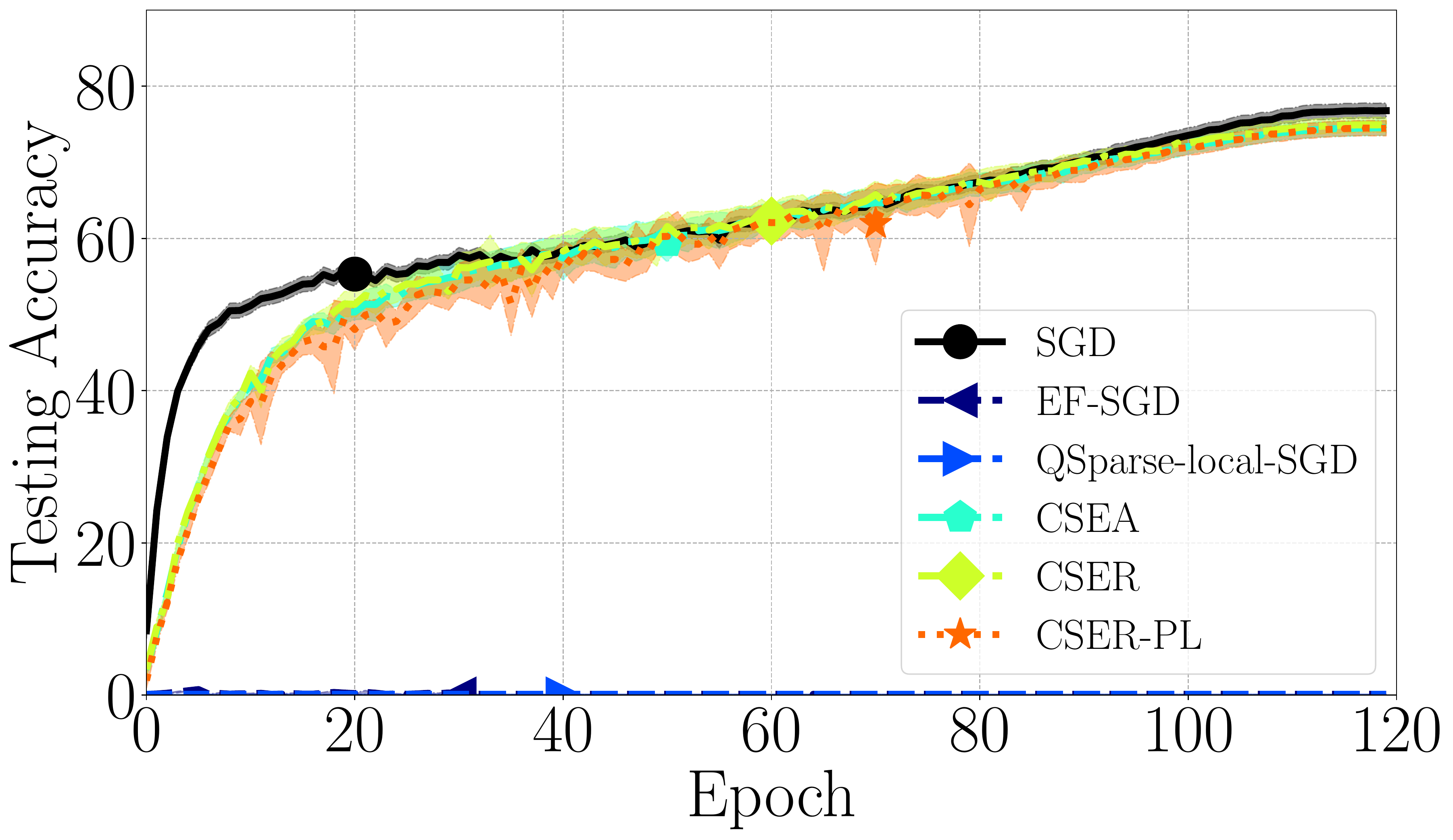}}
\caption{Testing accuracy vs.\# of epochs with different algorithms, for ResNet-50 on Imagenet.}
\label{fig:imagenet_epoch}
\end{figure*}

\begin{figure*}[htb!]
\centering
\subfigure[$CR=32$, accuracy vs. training time]{\includegraphics[width=0.7\textwidth]{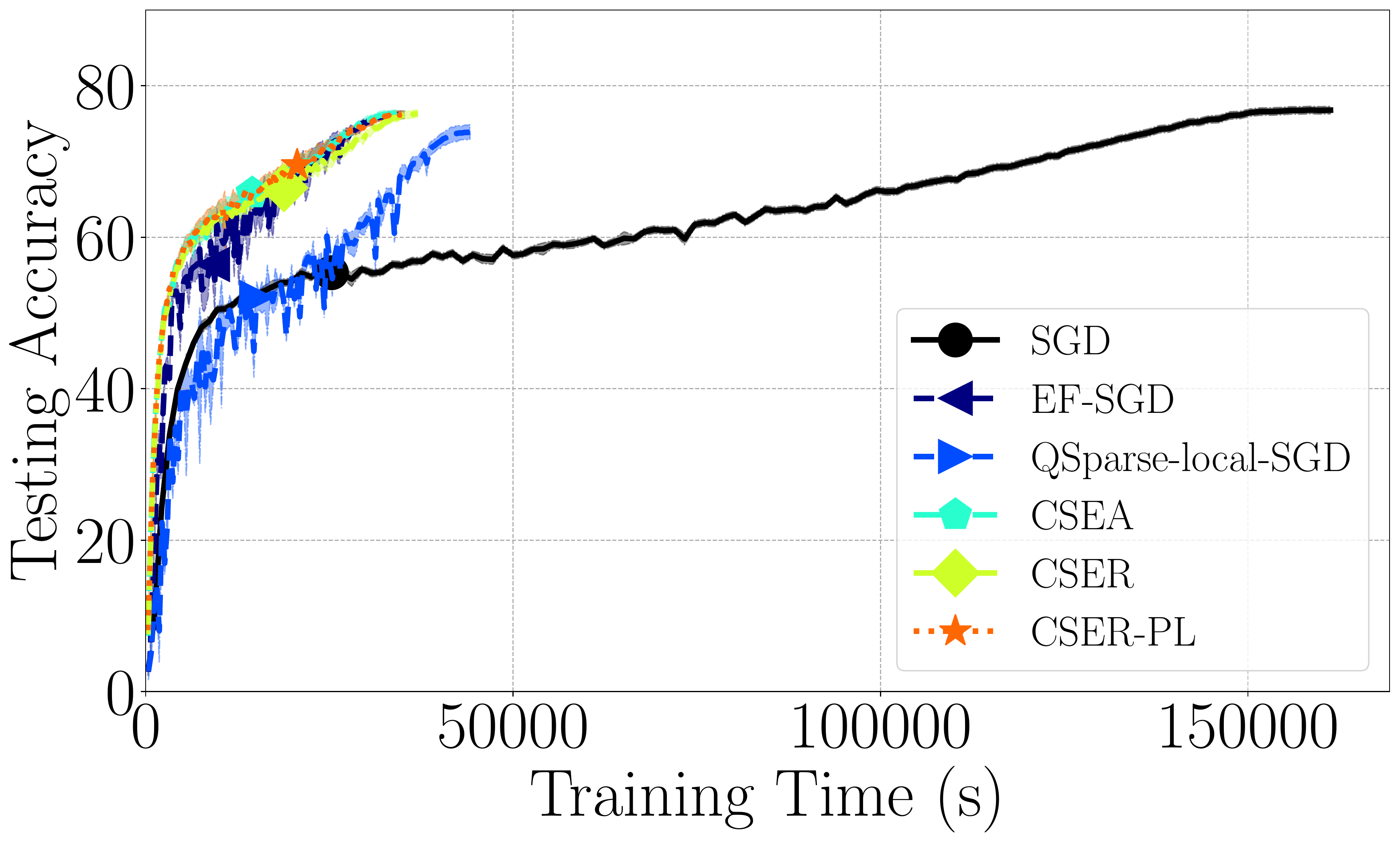}}
\subfigure[$CR=256$, accuracy vs. training time]{\includegraphics[width=0.7\textwidth]{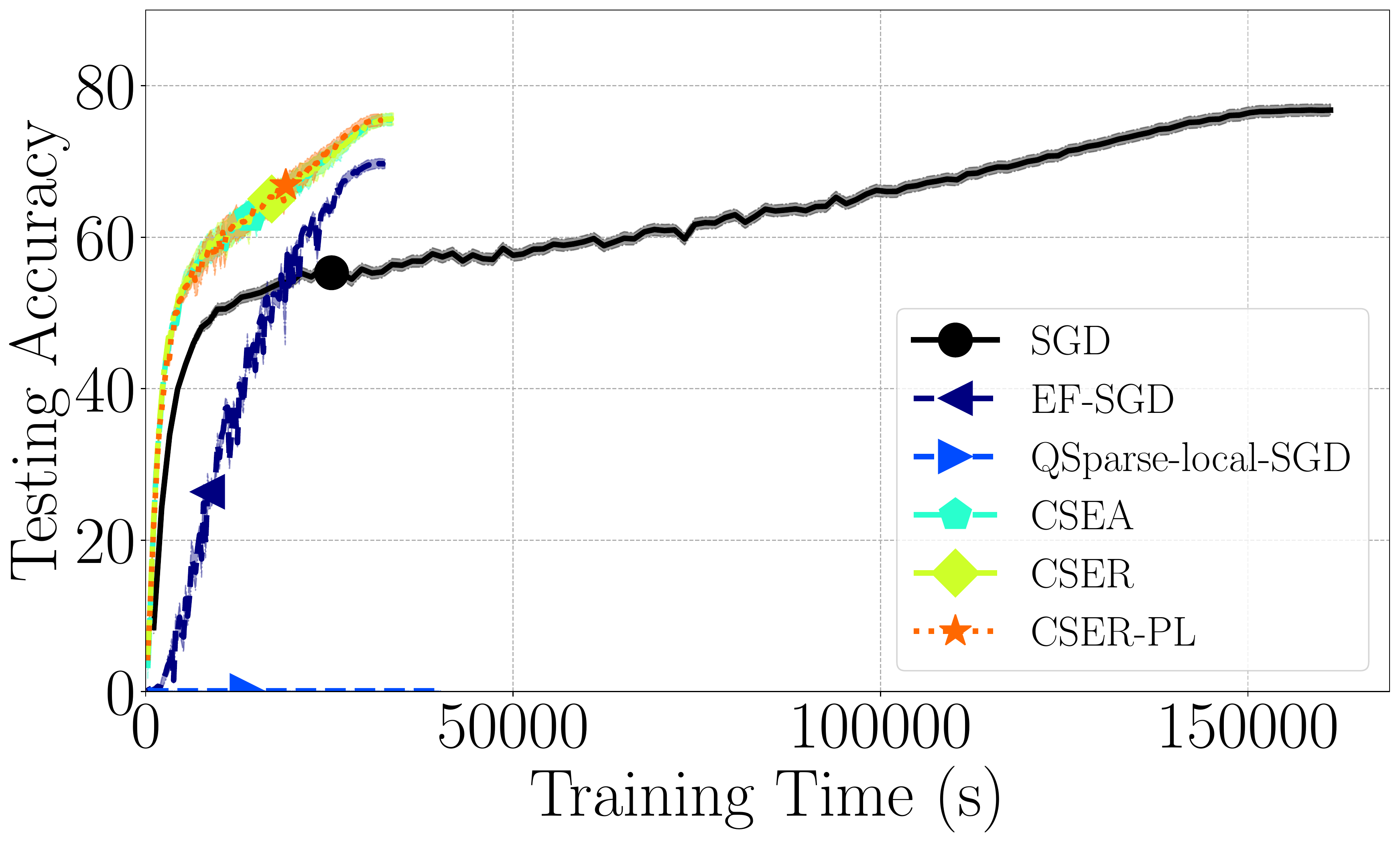}}
\subfigure[$CR=1024$, accuracy vs. training time]{\includegraphics[width=0.7\textwidth]{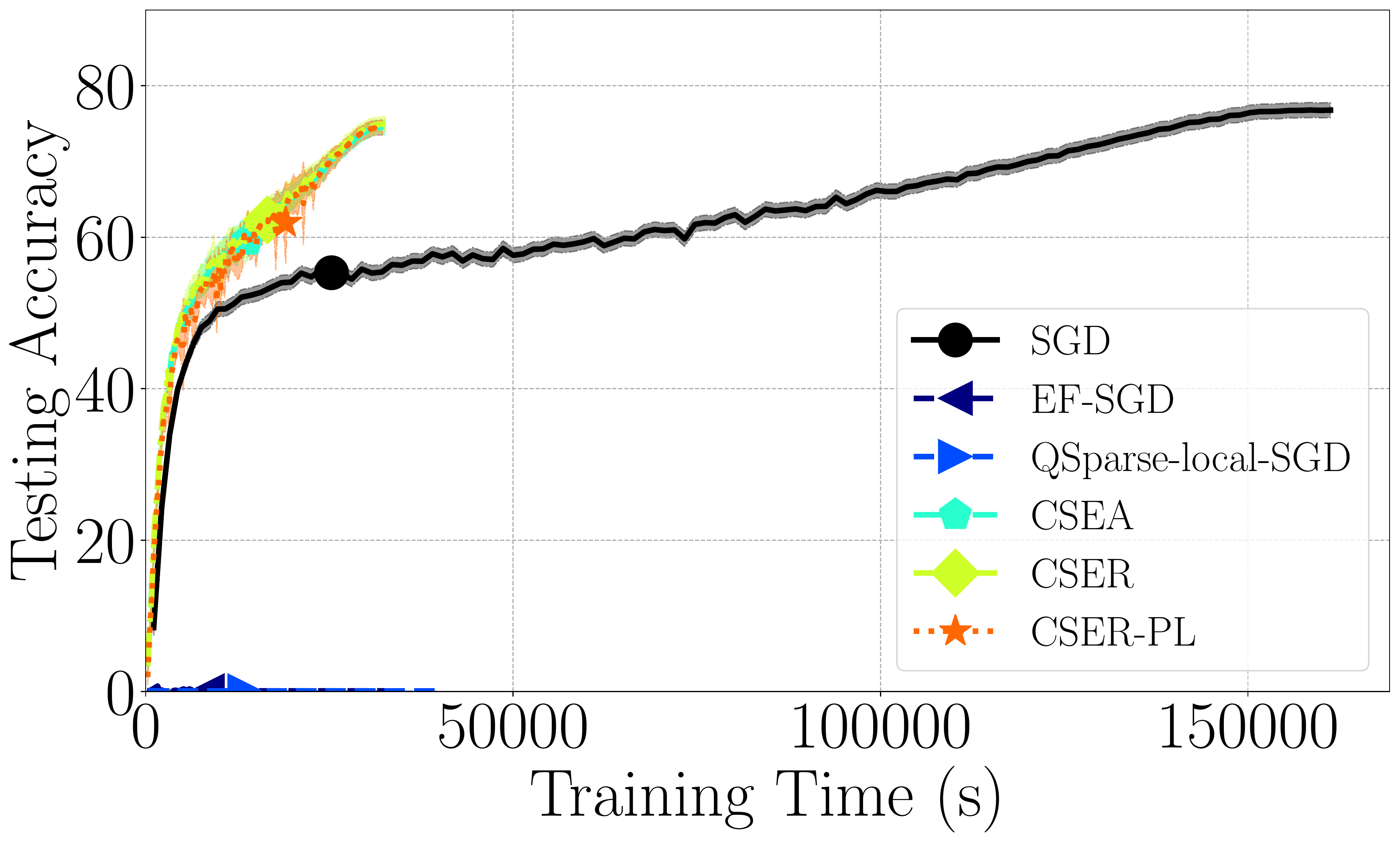}}
\caption{Testing accuracy vs. training time with different algorithms, for ResNet-50 on Imagenet.}
\label{fig:imagenet_time}
\end{figure*}

\begin{figure*}[htb!]
\centering
\subfigure[$CR=32$, accuracy vs. communication]{\includegraphics[width=0.7\textwidth]{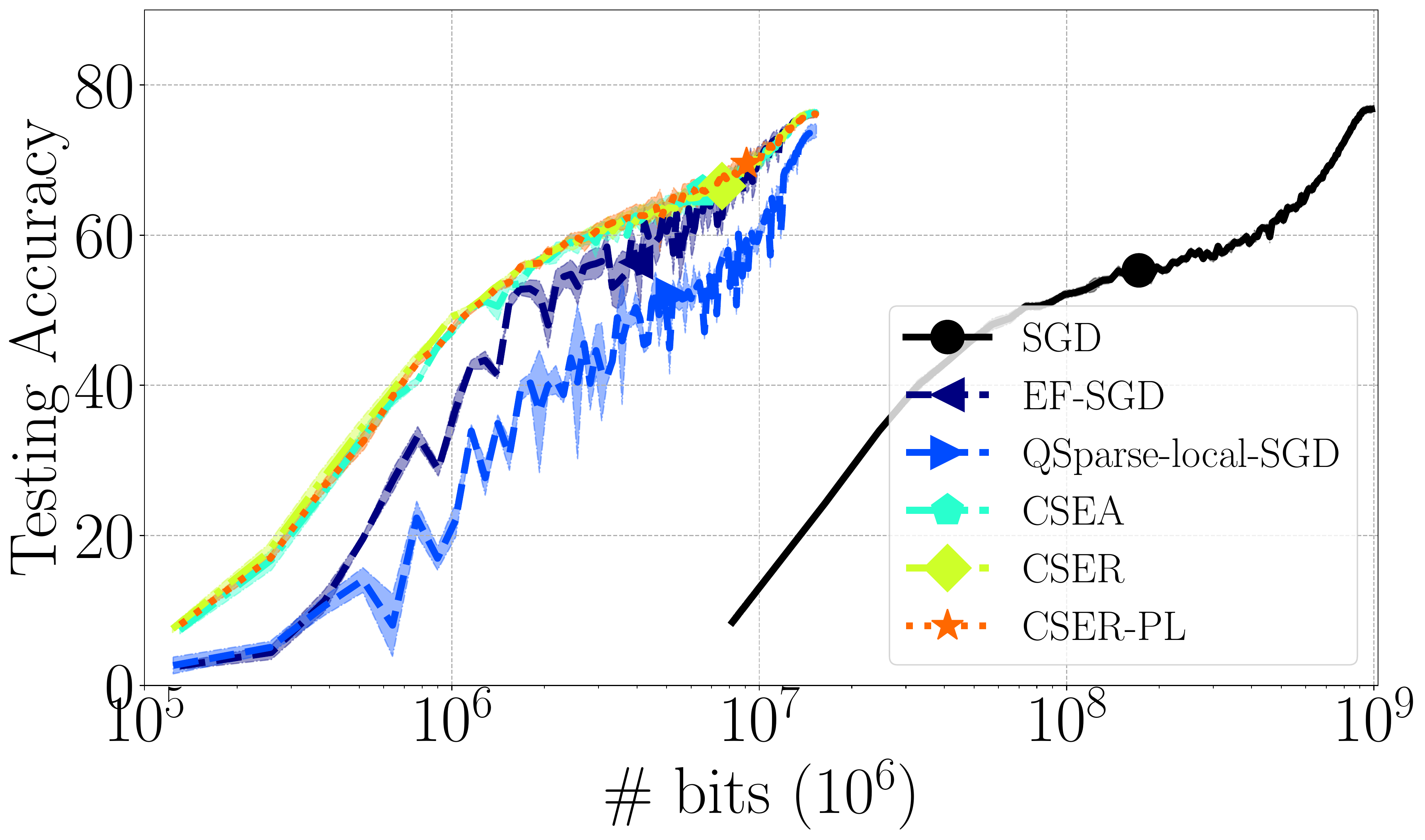}}
\subfigure[$CR=256$, accuracy vs. communication]{\includegraphics[width=0.7\textwidth]{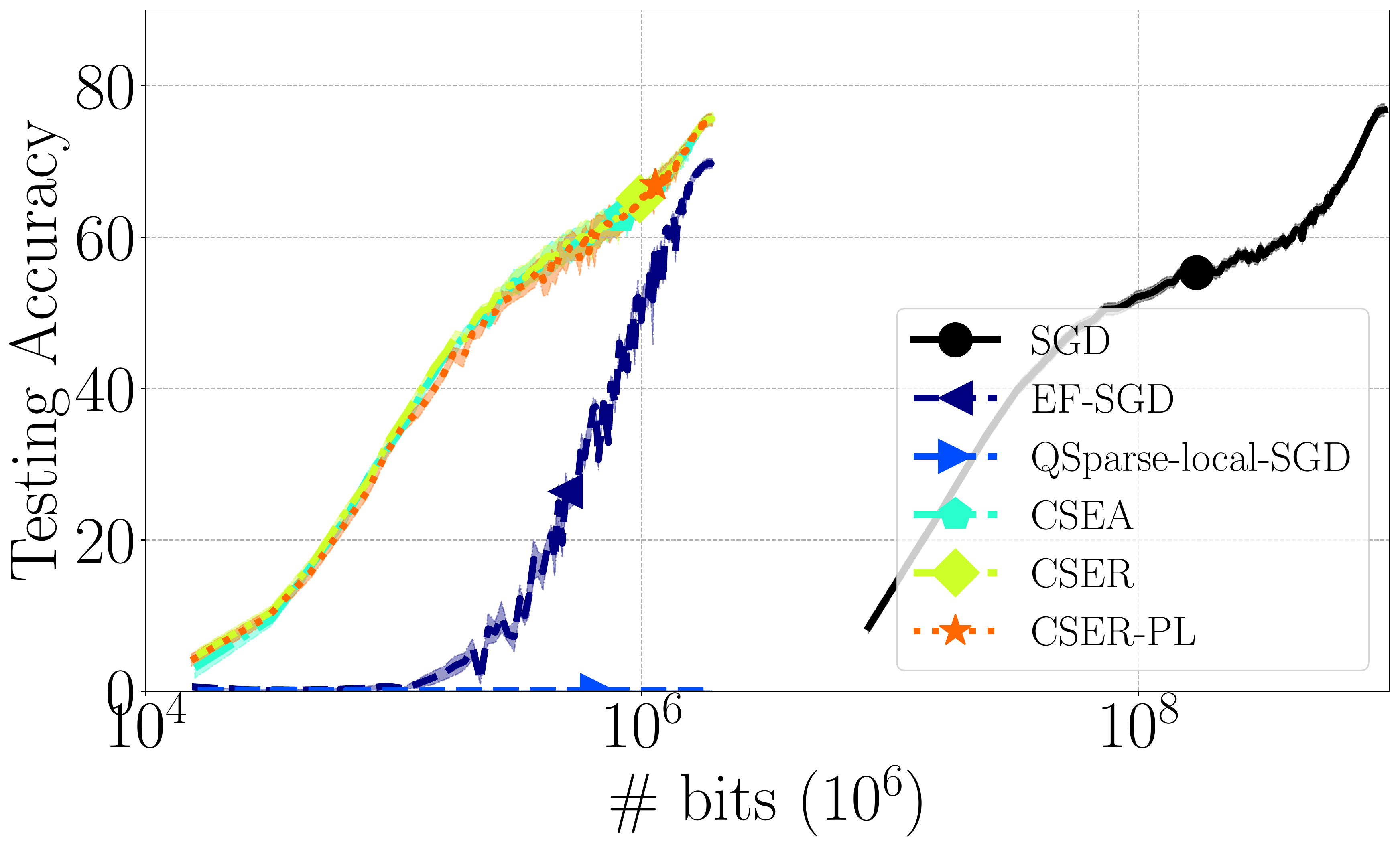}}
\subfigure[$CR=1024$, accuracy vs. communication]{\includegraphics[width=0.7\textwidth]{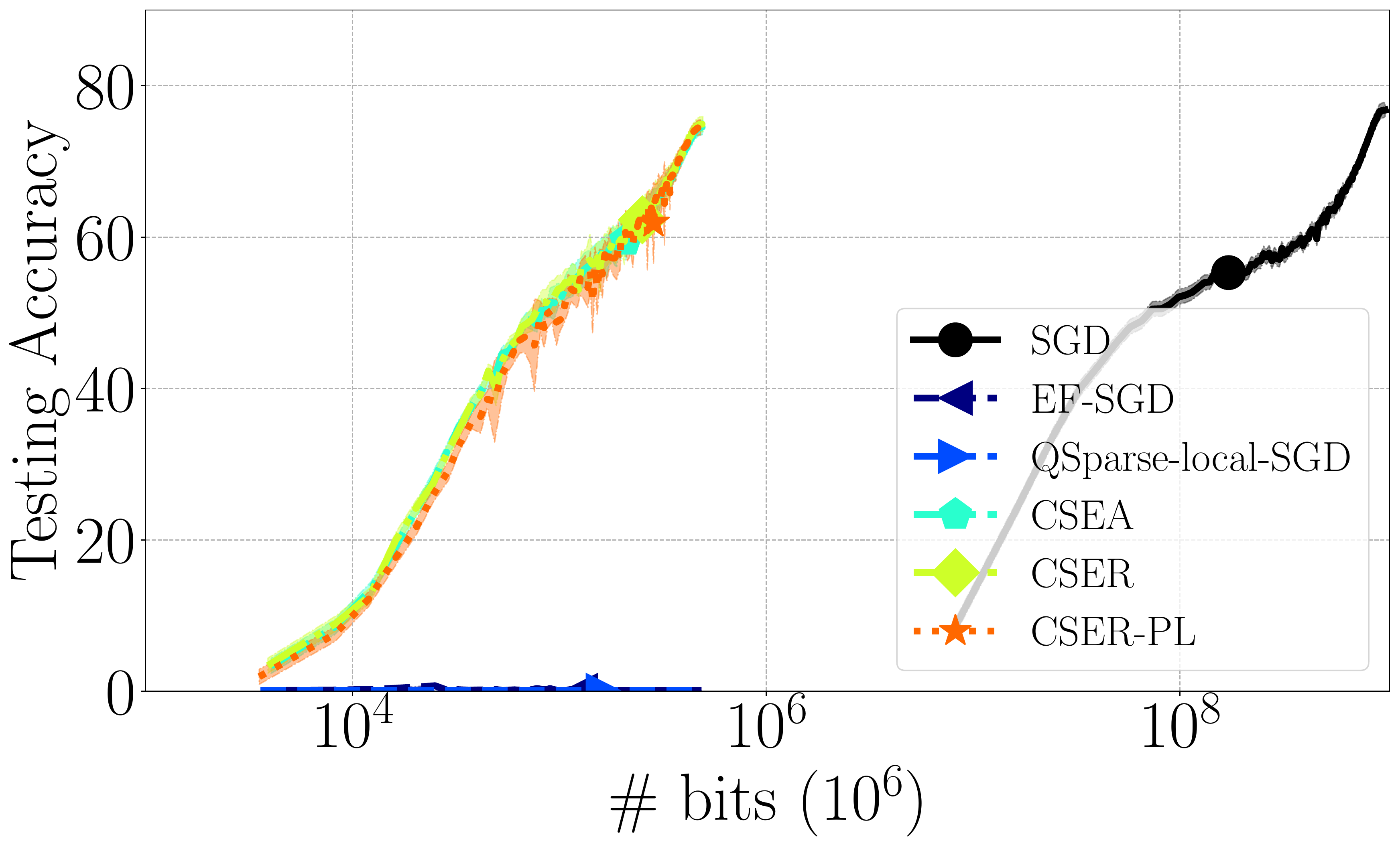}}
\caption{Testing accuracy vs. communication with different algorithms, for ResNet-50 on Imagenet.}
\label{fig:imagenet_comm}
\end{figure*}

\begin{figure*}[htb!]
\centering
\subfigure[$CR=32$, loss vs. \# epochs]{\includegraphics[width=0.7\textwidth]{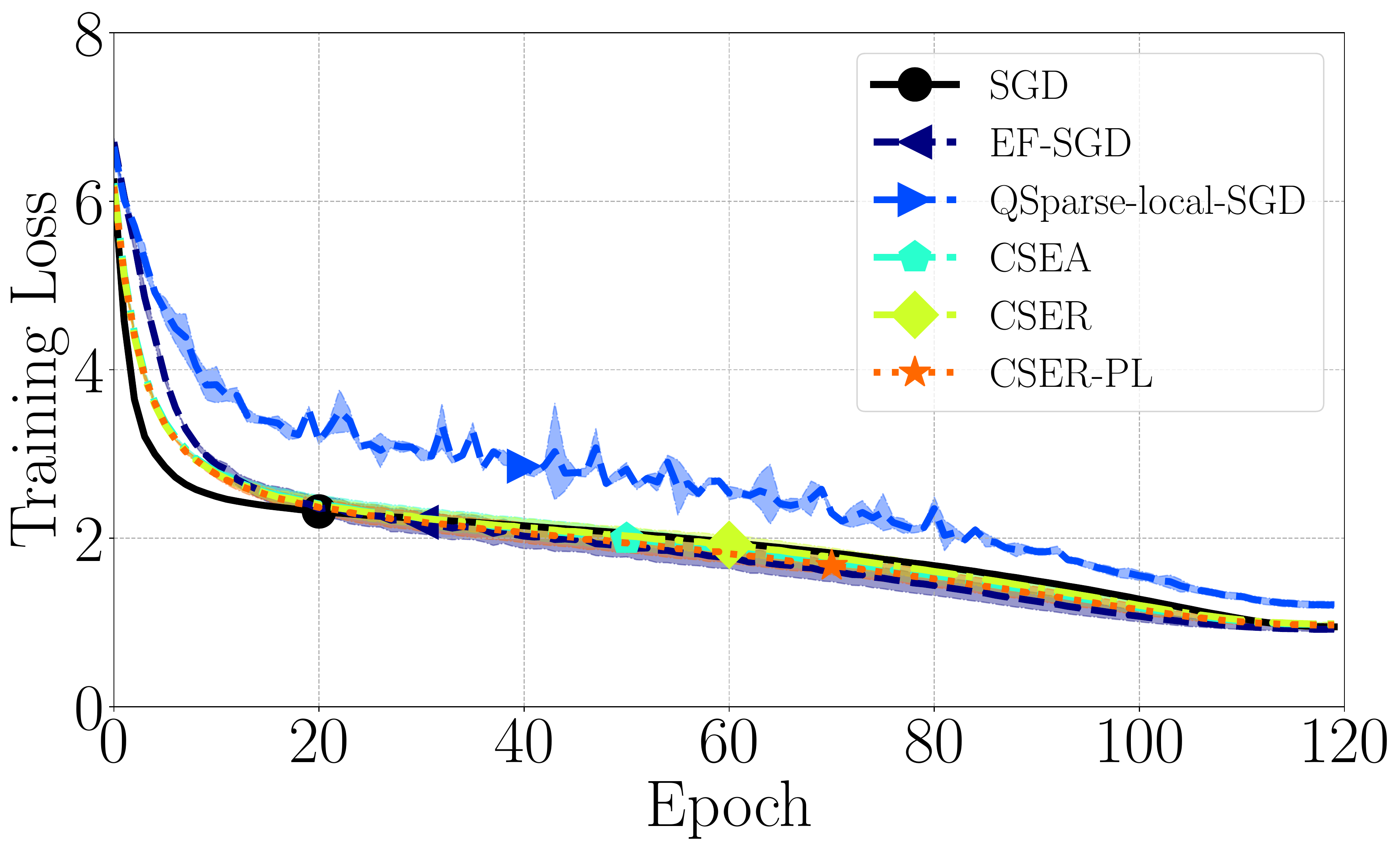}}
\subfigure[$CR=256$, loss vs. \# epochs]{\includegraphics[width=0.7\textwidth]{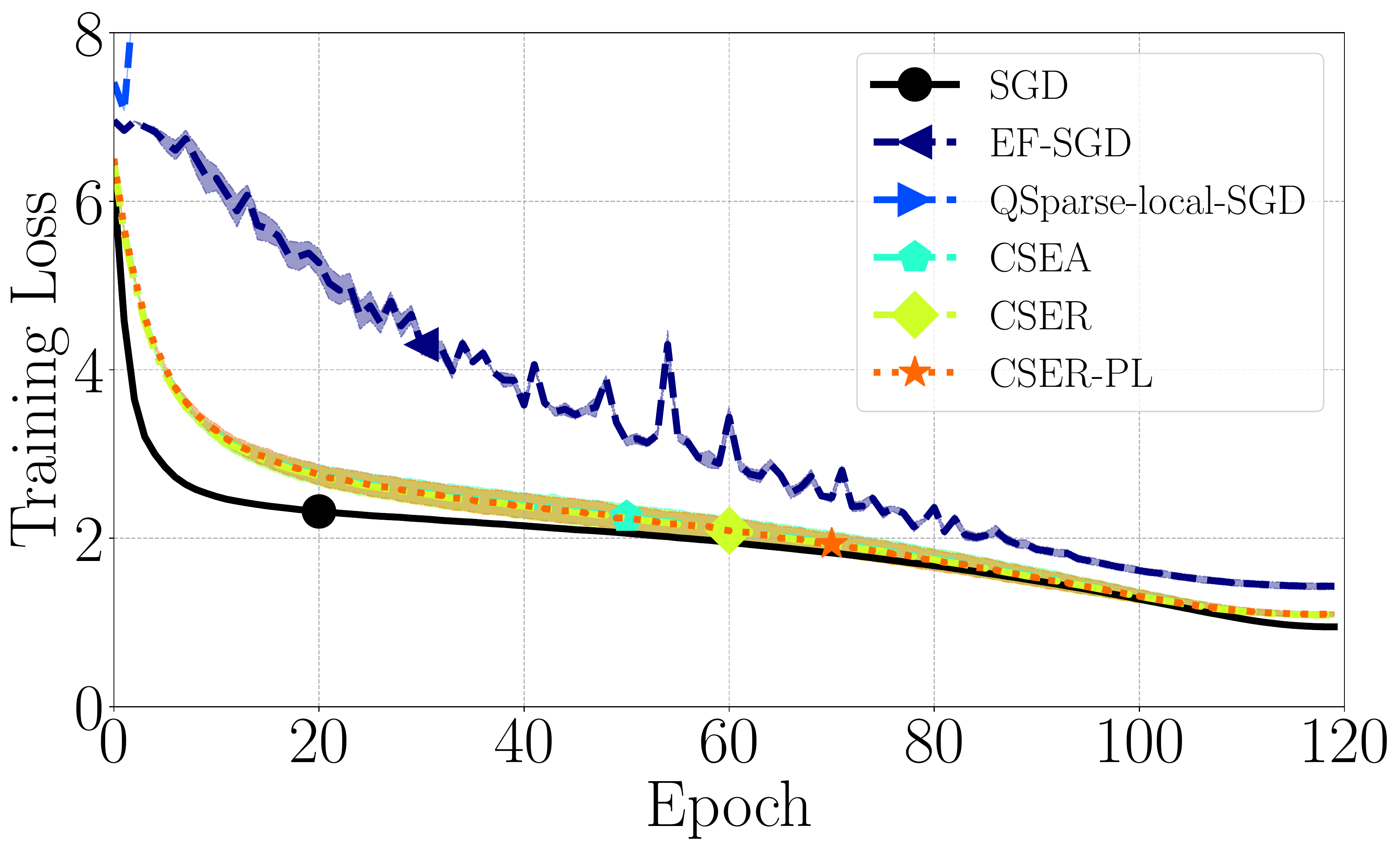}}
\subfigure[$CR=1024$, loss vs. \# epochs]{\includegraphics[width=0.7\textwidth]{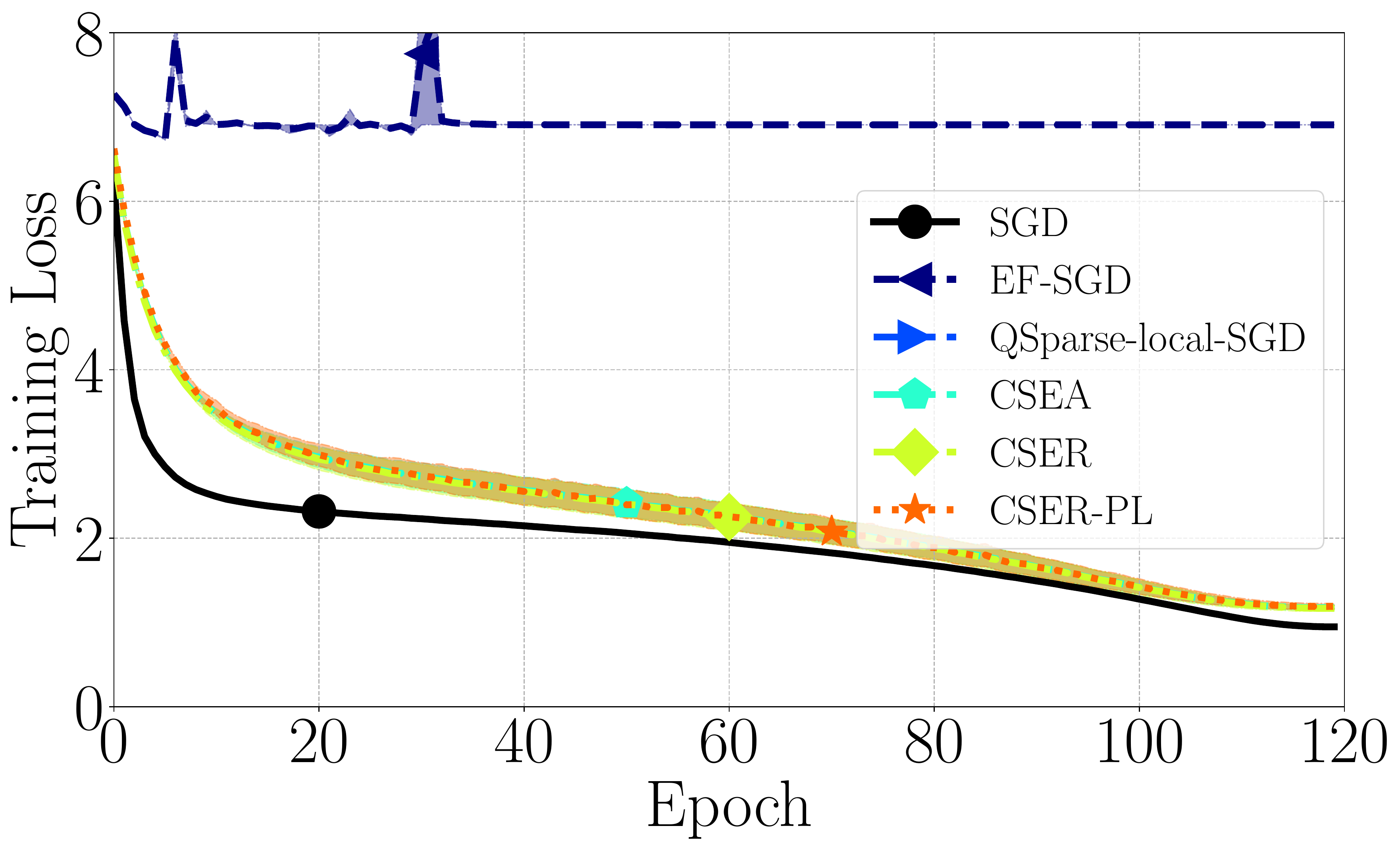}}
\caption{Training loss vs.\# of epochs with different algorithms, for ResNet-50 on Imagenet.}
\label{fig:imagenet_epoch_train}
\end{figure*}

\end{document}